\newcommand*\rel@kern[1]{\kern#1\dimexpr\macc@kerna}
\newcommand*\widebar[1]{%
  \begingroup
  \def\mathaccent##1##2{%
    \rel@kern{0.8}%
    \overline{\rel@kern{-0.8}\macc@nucleus\rel@kern{0.2}}%
    \rel@kern{-0.2}%
  }%
  \macc@depth\@ne
  \let\math@bgroup\@empty \let\math@egroup\macc@set@skewchar
  \mathsurround\z@ \frozen@everymath{\mathgroup\macc@group\relax}%
  \macc@set@skewchar\relax
  \let\mathaccentV\macc@nested@a
  \macc@nested@a\relax111{#1}%
  \endgroup
}
\newcommand{\eps}{\epsilon}
\newcommand{\norm}[1]{\lVert#1\rVert}
\newtheorem{theorem}{Theorem}[section]
\newtheorem{lemma}[theorem]{Lemma}
\newtheorem*{lemma*}{Lemma}
\newtheorem{claim}[theorem]{Claim}
\newtheorem{proposition}[theorem]{Proposition}
\newtheorem{definition}{Definition}
\newtheorem{assumption}{Assumption}
\newcommand{\E}{\mathbb{E}}
\newcommand{\R}{\mathbb{R}}
\newcommand{\inn}[2]{\langle #1,#2\rangle}
\newcommand{\wh}[1]{\widehat{#1}}
\DeclareMathOperator*{\argmin}{arg\,min}
\title{Robust Compressed Sensing using Generative Models}
\author{%
 Ajil Jalal \thanks{Link to our code: \url{https://github.com/ajiljalal/csgm-robust-neurips}} \\
 ECE, UT Austin \\
 \texttt{ajiljalal@utexas.edu} \\
 \And
 Liu Liu \\
 ECE, UT Austin \\
 \texttt{liuliu@utexas.edu} \\
 \AND
 Alexandros G.~Dimakis \\
 ECE, UT Austin \\
 \texttt{ dimakis@austin.utexas.edu} \\
 \And
 Constantine Caramanis \\
 ECE, UT Austin \\
 \texttt{constantine@utexas.edu} \\
}
\begin{document}

\maketitle
\begin{abstract}
The goal of compressed sensing is to estimate a high dimensional vector from an underdetermined system of noisy linear equations. In analogy to classical compressed sensing, here we assume a generative model as a prior, that is, we assume the 
vector is represented by a 
deep generative model $G: \R^k \rightarrow \R^n$.
Classical recovery approaches such as empirical risk minimization (ERM) are guaranteed to succeed when the  measurement matrix is sub-Gaussian.
However, when the measurement matrix and measurements are heavy-tailed or have outliers, recovery may fail dramatically.
In this paper we propose an algorithm inspired by the Median-of-Means (MOM). Our algorithm guarantees recovery for heavy-tailed data, even in the presence of outliers. Theoretically, our results show our novel MOM-based algorithm enjoys the same sample complexity guarantees as ERM under sub-Gaussian assumptions.
Our experiments validate both aspects of our claims: other algorithms are indeed fragile and fail under heavy-tailed and/or corrupted data, while our approach exhibits the predicted robustness.
\end{abstract}

\section{Introduction}
Compressive or compressed sensing is the problem of reconstructing an unknown vector $x^* \in \mathbb{R}^n$ after observing $m<n $ linear measurements of its entries, possibly with added noise: $y = Ax^* + \eta,$
where $A \in \mathbb{R}^{m \times n}$ is called the measurement matrix and $\eta \in \mathbb{R}^m$ is noise. Even without noise, this is an underdetermined system of linear equations, so recovery is impossible without a structural assumption on the unknown vector $x^*$. The vast literature~\cite{tibshirani1996regression,hastie2015statistical,negahban2012unified,bach2012optimization,candes2006robust,donoho2006compressed,agarwal2012,tropp2007signal,baraniuk2007compressive} on this subject typically assumes that the unknown vector is ``natural,'' or ``simple,'' in some application-dependent way.

Compressed sensing has been studied on a wide variety of structures  such as sparse 
vectors~\cite{candes2006stable}, trees~\cite{chen2012compressive}, graphs~\cite{xu2011compressive}, manifolds~\cite{chen2010compressive,xu2008compressed} or deep generative models~\cite{bora2017compressed}. In this paper, we
concentrate on deep generative
models, which were explored by~\cite{bora2017compressed} 
as priors for 
sample-efficient reconstruction. 
Theoretical results in~\cite{bora2017compressed} showed that if $x^*$ lies close to the
range of a generative model $G: \R^k \to \R^n$ with $d-$layers, a variant of ERM can 
recover $x^*$ with $m=O(kd\log n) $ measurements. 
Empirically,~\cite{bora2017compressed} shows that 
generative models require $5-10\times$ fewer measurements
to obtain the same reconstruction accuracy as Lasso.
This impressive empirical performance has motivated 
significant recent research to better understand the behaviour and theoretical limits of compressed sensing using generative priors~\cite{hand2017global, kamath2019lower, liu2019information}

A key technical condition for recovery is the 
Set Restricted Eigenvalue Condition (S-REC)~\cite{bora2017compressed},
which is a generalization of the Restricted Eigenvalue Condition
~\cite{bickel2009simultaneous, candes2008restricted} in sparse recovery.
This condition is satisfied if $A$ is a sub-Gaussian matrix and the measurements satisfy $y = Ax^* + \eta$.
This leads to the question: can the conditions on $A$ be weakened, 
and can we allow for outliers in $y$ and $A$? 
This has significance in applications such as MRI and astronomical 
imaging, where data is often very noisy and requires significant
pruning/cleansing.

As we show in this paper, the analysis and algorithm proposed 
by~\cite{bora2017compressed} are quite fragile in the presence of 
heavy-tailed noise or corruptions in the measurements.
In the statistics literature, it is well known that algorithms
such as empirical risk minimization (ERM) and its variants are
not robust to even a \emph{single} outlier. Since the algorithm
in~\cite{bora2017compressed} is a variant of ERM, it is susceptible to the same
failures in the presence of heavy-tails and outliers. 
Indeed, as we show empirically in Section \ref{sec:experiments},
precisely this occurs. 

Importantly, recovery failure in the setting of
\cite{bora2017compressed} (which is also the focus of this paper)
can be pernicious, precisely because generative models (by
design) output images in their range space, and for well-designed
models, these have high perceptual quality. In contrast, when a
classical algorithm like LASSO~\cite{tibshirani1996regression} fails, the typical failure mode is
the output of a non-sparse vector. Thus in the context of
generative models, resilience to outliers and heavy-tails is
especially critical. This motivates the need for algorithms that
do not require strong assumptions on the measurements. 

In this paper, we propose an algorithm for compressed sensing
using generative models, which is robust to heavy-tailed
distributions and arbitrary outliers. We study its theoretical
recovery guarantees as well as empirical performance, and show
that it succeeds in scenarios where other existing recovery
procedures fail, without additional cost in sample complexity or computation.

\subsection{Contributions}

We propose a new reconstruction algorithm in place of ERM. 
Our algorithm uses a Median-of-Means (MOM) loss to 
provide robustness to heavy-tails and arbitrary
corruptions. As S-REC may no longer hold, we
necessarily use a different analytical approach.
We prove recovery results and sample complexity guarantees for this setting even though previous assumptions such as the S-REC~\cite{bora2017compressed} condition do not hold. 
Specifically, our main contributions are as follows.
\begin{itemize}[leftmargin = *]
    \item (Algorithm) We consider robust compressed sensing for generative models where (i) a constant fraction of 
the measurements and measurement matrix are arbitrarily (perhaps maliciously) corrupted and (ii) the random ensemble only satisfies a weak moment assumption. 

We propose a novel algorithm to replace ERM. Our algorithm uses a median-of-means (MOM) tournament~\cite{lugosi2016risk,lecue2017robust} i.e., a min-max optimization framework for robust reconstruction. Each iteration of our MOM-based algorithm comes at essentially no additional computational cost compared to an iteration of standard ERM. Moreover, as our code shows, it is straightforward to implement. 

    \item (Analysis and Guarantees) We analyze the recovery guarantee and outlier-robustness of our algorithm when the generative model
    is a $d$-layer neural network using ReLU activations. Specifically, in the presence of a constant fraction of outliers in $y$ and $A$, we  achieve $\|G(\wh{z}) - G(z^*)\|^2 \leq O(\sigma^2 + \tau)$ with sample size $m =O(kd\log n)$, where $\sigma^2$ is the variance of the heavy-tailed noise, and $ \tau$ is the optimization accuracy. Using different analytical tools (necessarily, since we do not assume sub-Gaussianity), we show our algorithm, even under heavy-tails and corruptions, has the same sample complexity as the previous literature has achieved under much stronger sub-Gaussian assumptions. En route to our result, we also prove an interesting result for ERM: by avoiding the S-REC-based analysis, we show that the standard ERM algorithm does in fact succeed in the presence of a heavy-tailed measurement matrix, thereby strengthening the best-known recovery guarantees from \cite{bora2017compressed}. This does not extend (as our empirical results demonstrate) to the setting of outliers, or of heavy-tailed measurement noise. For these settings, our new algorithm is required.
    \item (Empirical Support)
    We empirically validate the effectiveness of our robust recovery algorithm on MNIST and CelebA-HQ. Our results demonstrate that (as our theory predicts) our algorithm succeeds in the presence of heavy-tailed noise, heavy-tailed measurements, and also in the presence of arbitrary outliers. At the same time our experiments confirm that ERM can fail, and in fact fails dramatically: through an experiment on the CelebA-HQ data set, we demonstrate that the ERM recovery approach \cite{bora2017compressed}, as well as other natural approaches including $\ell_1$ loss minimization and trimmed loss minimization \cite{shensujay2018learning}, can recover images that have little resemblance to the original.
\end{itemize}

\subsection{Related work}

Compressed sensing with outliers or heavy-tails has a long history.  To deal with outliers only in $y$,
classical techniques replace the ERM with a robust loss function such as $\ell_1$ loss or Huber loss \cite{li2013compressed,nguyen2013exact,loh2017statistical, dalalyan2019outlier}, and obtain the optimal statistical rates. 
Much less is known for outliers in $y$ and $A$ for robust compressed sensing.  
Recent progress on robust sparse regression \cite{chen2013robust, du2017computationally,chen2017distributed,sever2018,ravikumar2018robust,liu2018RSGE, liu2019RDC,shensujay2018learning} can handle outliers in $y$ and $A$, but their techniques
cannot be directly extended to 
arbitrary generative models $G$.
Another line of research \cite{hsu2016loss,minsker2015geometric,
lugosi2016risk,lecue2017robust} considers compressed sensing where 
the measurement matrix $A$ and $y$ have 
heavy-tailed distributions.
Their techniques leverage variants of Median-of-Means (MOM) estimators on the loss function under weak moment assumptions instead of sub-Gaussianity, which generalize the classical MOM mean estimator in one dimension
\cite{MOM_nemirovsky1983problem,MOM_jerrum1986random,MOM_alon1999space,minsker2015geometric}.

\cite{wei2019statistical} deals with compressed sensing of generative models when the measurements and the responses are non-Gaussian. However, the distribution model in \cite{wei2019statistical} requires more stringent conditions compared to the weak moment assumption as will be specified in \Cref{def:heavy-tailed}, and their algorithm cannot tolerate arbitrary corruptions.
\cite{yi2018outlier} consider $\ell_1$-minimization for outlier detection using generative models, assuming the outliers in $y$ are sparse.

Generative priors have shown great promise in compressed sensing and 
other inverse problems, starting with~\cite{bora2017compressed}, who generalized the theoretical framework of compressive sensing
and restricted eigenvalue conditions~\cite{tibshirani1996regression,donoho2006compressed,bickel2009simultaneous,candes2008restricted, hegde2008random,baraniuk2009random,baraniuk2010model,eldar2009robust} 
for signals lying on the range of a deep generative model~\cite{goodfellow2014generative, kingma2013auto}. 
 Results in~\cite{kamath2019lower,liu2019information,jalali2019solving} 
established that the sample complexities in~\cite{bora2017compressed} are order optimal.
The approach in~\cite{bora2017compressed} has been generalized to tackle different 
inverse problems~\cite{hand2018phase, aubin2019exact, asim2018blind, mosser2020stochastic,asim2018solving,qiu2019robust,aubin2019exact,liu2020sample,asim2019invertible,jagatap2019phase,hand2019global,anirudh2019mimicgan}.
Alternate algorithms for reconstruction include ~\cite{bora2018ambientgan,dhar2018modeling, kabkab2018task, fletcher2018inference, fletcher2018plug, song2019surfing, mardani2018deep, dhar2018modeling, pandit2019inference,heckel2018deep,heckel2020compressive}.
The complexity of optimization algorithms using generative models 
have been analyzed in~\cite{gomez2019fast, hegde2018algorithmic, lei2019inverting,hand2017global}.
See~\cite{ongie2020deep} for a more detailed survey on deep learning techniques for compressed sensing.
A related line of work has explored learning-based approaches to tackle classical problems
in algorithms and signal processing~\cite{aamand2019learned,indyk2019learning, metzler2017learned, hsu2018learning}.

\section{Notation}
For functions $f(n)$ and $g(n),$ we write $f(n)\lesssim g(n)$ to denote that there exists a universal constant $c_1 > 0$ such that $f(n) \leq c_1 g(n)$.
Similarly, we write $f(n) \gtrsim g(n)$ to denote that there exists a universal constant $c_2 > 0$ such that $f(n) \geq c_2 g(n)$.
We write $f(n) = O(g(n))$ to imply that there exists a positive constant $c_3$ and a natural number $n_0$ such that for all $n \geq n_0,$ we have $|f(n)| \leq c_3 g(n)$.
Similarly, we write $f(n) = \Omega(g(n))$ to imply that there exists a positive constant $c_4$ and a natural number $n_1$ such that for all $n \geq n_1,$ we have $|f(n)| \geq c_4 g(n)$.

\section{Problem formulation}
Let $x^* = G(z^*)\in\R^n$ be the
fixed vector of interest. 
The deep generative model
$G:\R^k\rightarrow \R^n$ ($k\ll n$)
maps from a low dimensional latent space to a higher dimensional space. 
In this paper, $G$ is a feedforward neural network with ReLU activations and $d$ layers.

Our definition of heavy-tailed samples assumes that the measurement matrix  $A$ only 
has bounded fourth moment. Our corruption model is Huber's $\epsilon$-contamination 
model \cite{huber1964robust}. 
This model allows corruption in 
the measurement matrix $A$ and measurements $y$.
Precisely, these are:
\begin{definition}[Heavy-tailed samples]
\label{def:heavy-tailed}
We say that a random vector $a$ is heavy-tailed if for a universal constant
$C>0$, the $4^{th}$ moment of  $a$ satisfies
\[
\left(\E\left[\inn{a}{u}^{4} \right]\right)^\frac{1}{4} \leq
C\left(\E\left[\inn{a}{u}^2 \right]\right)^\frac{1}{2}, \qquad \forall  u
\in\R^n.
\]
For all $\delta >0$, the $(4+\delta)^{th}$ moment of $a$ need not exist, and we
make no assumptions on them.
\end{definition}

\begin{definition}[$\eps$-corrupted samples]
\label{def:corruption}
We say that a collection of samples  $ \{y_i, a_i\}$ is
$\eps$-corrupted if they are  i.i.d.\ observations drawn from the
mixture
$$ \{y_i, a_i\} \sim (1-\eps) P + \eps Q,$$
 where $P$ is the uncorrupted distribution, $Q$ is an arbitrary
 distribution.
\end{definition}

Thus we assume that samples $\{y_i, a_i\}_{i=1}^m$ are generated from
$(1-\eps) P + \eps Q$, where $Q$ is an adversary, and $P$ satisfies
the following:
\begin{assumption}
\label{assm:assumption}
Samples $(y_i, a_i)\sim P$ satisfy $y_i = a_i^\top G(z^*) + \eta_i,$
where the random vector $a_i$ is isotropic and heavy-tailed as in
\Cref{def:corruption}, and the noise term $\eta_i$ is independent of
$a_i$, i.i.d.\ with zero mean and bounded variance $\sigma^2$.
\end{assumption}

\section{Our Algorithm}
$\norm{\cdot}$ refers to $\ell_2$ unless specified otherwise.
The procedure proposed by~\cite{bora2017compressed} finds a reconstruction $\hat{x} = G(\hat{z})$, where $\hat{z}$ solves:
\[
\wh{z} := \argmin_{z\in\R^k}\norm{AG(z) - y}^2.
\]
This is essentially an ERM-based approach. As is well known from the classical statistics literature, ERM's success relies on strong concentration properties, guaranteed, e.g., if the data are all sub-Gaussian. ERM may fail, however, in the presence of corruption or heavy-tails. Indeed, our experiments demonstrate that in the presence of outliers in $y$ or $A$, or heavy-tailed noise in $y$, \cite{bora2017compressed} fails to recover  $G(z^*)$.

{\bf Remark} {\em Unlike typical problems in $M$-estimation and high dimensional statistics, the optimization problem that defines the recovery procedure here is non-convex, and thus in the worst case, computationally intractable. Interestingly, despite non-convexity, as demonstrated in \cite{bora2017compressed}, (some appropriate version of) gradient descent
is empirically very successful. In this paper, we take this as a computational primitive, thus sidestepping the challenge of proving whether a gradient-descent based method can efficiently provide guaranteed inversion of a generative model. Our theoretical guarantees are therefore statistical but our experiments show empirically excellent performance.}

\subsection{MOM objective}

It is well known that the median of means estimator achieves nearly sub-Gaussian concentration for one dimensional mean estimation of variance bounded random variables \cite{MOM_nemirovsky1983problem,MOM_jerrum1986random,MOM_alon1999space}. 
Inspired by the median-of-means algorithm, we propose the following algorithm to handle heavy-tails and outliers in $y$ and $A$. We partition the set $\left[m\right]$ into $M$ disjoint batches $\{B_1, B_2, \cdots, B_M\}$ such that each batch has cardinality $b=\frac{m}{M}$. Without loss of generality, we assume that $M$ exactly divides $m$, so that $b$ is an integer.
For the $j^{th}$ batch $B_j$, define the function
\begin{align}\label{equ:loss}
    \ell_j(z) := \frac{1}{b} \| A_{B_j}G(z) - y_{B_j}\|^2,
\end{align}
where $A_{B_j} \in\R^{b\times n}$ denotes the submatrix of $A$ corresponding to the rows in batch $B_j$. Similarly, $y_{B_j}\in\R^{b}$ denotes the entries of $y$ corresponding to the batch $B_j.$
Our workhorse is a novel variant of median-of-means (MOM) tournament procedure \cite{lugosi2016risk,lecue2017robust}
using the loss function \cref{equ:loss}:
\begin{align}
    \widehat{z} = \arg \min_{z\in\R^k}  \max_{z'\in\R^k} 
    \underset{1\leq j \leq M}{\mathrm{median}} (\ell_j(z) - \ell_j(z')).\label{equ:MOM_minmax}
\end{align}
We do not assume that the minimizer is unique, since we only require a reconstruction $G(\wh{z})$ which is close to $G(z^*)$. Any value of $z$ in the set of minimizers will suffice.
The intuition behind this aggregation of batches is that if the inner player $z'$ chooses 
a point close to $z^*,$ then the outer player $z$ must also choose a 
point close to $z^*$ in order to minimize the objective.
Once this happens, there is no better option for $z'$.
Hence a neighborhood around $z^*$ is almost an equilibrium, and in fact
there can be no neighborhood far from $z^*$ with such an equilibrium.

\paragraph{Computational considerations.}
The objective function \cref{equ:MOM_minmax} is not convex and we use  \Cref{alg:MOM_GAN} as a heuristic to solve \cref{equ:MOM_minmax}. 
In \Cref{sec:experiments}, we empirically observe that gradient-based methods are able to minimize this objective and have  good convergence properties. Our main theorem guarantees that a small value of the objective implies a good reconstruction and hence we can certify reconstruction quality
using the obtained final value of the objective. 

\begin{algorithm}[t]
\footnotesize
\begin{algorithmic}[1]
\STATE \textbf{Input:} Data samples $\{y_j, a_j\}_{j=1}^m$.
\STATE \textbf{Output:} $G(\widehat{z})$.
\STATE \textbf{Parameters:} Number of batches $M$.\\
{\kern2pt \hrule \kern2pt}
\STATE Initialize $z$ and $z'$.
\FOR {$t=0$ to $T-1$,}
\STATE For each batch $j\in [M]$, calculate
 $\frac{1}{|B_j|} (\ell_j(z) - \ell_j(z'))$ by \cref{equ:loss}.
\STATE Pick the batch with the median loss
$\underset{1\leq j \leq M}{\mathrm{median}} (\ell_j(z) - \ell_j(z'))$, and evaluate the gradient for $z$ and $z'$ using backpropagation on that batch. \\
 (i) perform gradient descent for $z$; \\
 (ii) perform gradient ascent for $z'$. 
\ENDFOR
\STATE Output the  $G(\widehat{z}) = G({z})$.

\end{algorithmic}
\caption{\footnotesize Robust compressed sensing of generative models}
\label{alg:MOM_GAN}
\end{algorithm}

\section{Theoretical results}
We begin with a brief review of the Restricted Eigenvalue Condition in standard 
compressed sensing and show that S-REC is satisfied by heavy-tailed distributions.

\subsection{Set-Restricted Eigenvalue Condition for heavy-tailed distributions}
        
Most theoretical guarantees for compressed sensing rely on variants
of the Restricted Eigenvalue Condition(REC)~\cite{bickel2009simultaneous,candes2008restricted} and the  
closest to our setting is the Set Restricted 
Eigenvalue Condition~\cite{bora2017compressed}(S-REC). Formally, $A\in\R^{m\times n}$
satisfies S-REC$(S,\gamma,\delta)$ on a set $S\subseteq \R^n$ if for all $x_1, x_2\in S,$
$$\| A x_1 - A x_2 \| \geq \gamma \| x_1 - x_2 \| - \delta.$$

While we can prove many powerful results using the REC condition,
proving that a matrix satisfies REC typically involves sub-Gaussian
entries in $A$. 
If we don't have sub-Gaussianity, proving REC requires a finer analysis.
A recent technique called the \emph{small-ball method} 
~\cite{mendelson2014learning} requires significantly weaker assumptions 
on $A$, and can be used to show 
REC~\cite{mendelson2014learning, tropp2015convex}
for $A$ satisfying~\Cref{assm:assumption}.
While this technique can be used for sparse vectors,
we do not have a general understanding of what structures it can handle,
since existing proofs make heavy use of sparsity.

We now show that a random matrix whose rows
satisfy~\Cref{assm:assumption} will satisfy S-REC 
over the range of a generator 
$G:\R^k\rightarrow \R^n$ with high probability. 
This generalizes Lemma 4.2 in~\cite{bora2017compressed}-- the 
original lemma required i.i.d. sub-Gaussian entries in the matrix 
$A$, whereas the following lemma only needs the rows to have bounded fourth moments.
 
\begin{lemma}
\label{lemma: srec heavy bora}
Let $G:\R^k\rightarrow\R^n$ be a $d-$layered neural network with ReLU activations. Let
$A\in\R^{m\times n}$ be a matrix with i.i.d. rows satisfying \Cref{def:heavy-tailed}.
For any $\gamma < 1,$ if $m=\Omega\left(\frac{1}{1-\gamma^2}kd\log n\right),$ then with probability
$1-e^{-\Omega(m)}$,  for all $z_1, z_2\in\R^k,$ we have
\[
\frac{1}{m}\|AG(z_1) - AG(z_2)\|^2\geq \gamma^2\|G(z_1) - G(z_2)\|^2.
\]
\end{lemma}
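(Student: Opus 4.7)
The plan is to combine the piecewise-linear stratification of the ReLU range from~\cite{bora2017compressed} with a concentration argument that only uses the 4th-moment assumption on the rows of $A$, in place of sub-Gaussian tails.

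First I would stratify the range. A $d$-layer ReLU network with at most $n$ neurons per layer partitions $\R^k$ into at most $N\lesssim n^{kd}$ polyhedral cells on each of which $G$ is affine and maps into a $\le k$-dimensional affine subspace of $\R^n$. Hence for any $z_1,z_2\in\R^k$, the difference $G(z_1)-G(z_2)$ lies in one of at most $N^2\le n^{2kd}$ linear subspaces of $\R^n$ of dimension $\le 2k+1$ (take the span of the two affine pieces). It therefore suffices to establish, for every such fixed linear subspace $V$, that $\tfrac{1}{m}\|Av\|^2 \ge \gamma^2\|v\|^2$ uniformly on $V$ with failure probability $\le e^{-\Omega(m)}/N^2$; a union bound then closes the argument.

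For a fixed unit vector $u\in V$, isotropy and \Cref{def:heavy-tailed} give $\E\inn{a_i}{u}^2=1$ and $\E\inn{a_i}{u}^4\le C^4$. Without sub-Gaussian tails, I would replace the Hoeffding step of~\cite{bora2017compressed} by a truncation $X_i^{(u)}:=\min(\inn{a_i}{u}^2,T)$. Markov gives $\E X_i^{(u)}\ge 1-C^4/T$ and boundedness gives $\mathrm{Var}(X_i^{(u)})\le T$, so Bernstein's inequality yields
$$
\Pr\!\left(\frac{1}{m}\|Au\|^2 \,<\, 1-\tfrac{C^4}{T}-t\right)\ \le\ \exp\!\left(-\Omega\!\left(\tfrac{mt^2}{T}\right)\right).
$$
Choosing $T\asymp (1-\gamma^2)^{-1}$ and $t\asymp 1-\gamma^2$ produces the pointwise bound $\tfrac{1}{m}\|Au\|^2\ge\gamma^2$. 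A standard $\delta$-net of size $(3/\delta)^{2k+1}$ on $V\cap S^{n-1}$, combined with this pointwise bound and a Lipschitz approximation (the extra $\|A\|_{\mathrm{op}}$ factor is controlled separately on each subspace by the same truncation tools), upgrades it to a uniform statement on $V$, and a union bound over the $n^{2kd}$ candidate subspaces delivers the lemma.

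The delicate point, which I expect to be the main obstacle, is matching the claimed linear-in-$(1-\gamma^2)^{-1}$ scaling. The naive truncation above yields pointwise failure of order $\exp(-\Omega(m(1-\gamma^2)^3))$, so the union bound forces $m\gtrsim kd\log n/(1-\gamma^2)^3$ rather than the stated $m\gtrsim kd\log n/(1-\gamma^2)$. To recover the linear rate I would instead invoke Mendelson's small-ball method, as suggested by the paragraph preceding the lemma: use Paley--Zygmund on $\inn{a}{u}^2$ together with the 4th-moment bound to obtain a dimension-free small-ball probability $\Pr(|\inn{a}{u}|\ge\rho\|u\|)\ge p_0$, and then bound the deviation of the empirical process $u\mapsto\tfrac{1}{m}\sum_i\mathbf{1}\{|\inn{a_i}{u}|\ge\rho\|u\|\}$ from its mean by its Rademacher complexity, which on a $\le(2k+1)$-dimensional subspace is $O(\sqrt{k/m})$. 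This avoids squaring an unbounded variable and plugs into the stratification of Step~1 without change, giving the cleaner scaling from tools like~\cite{mendelson2014learning,tropp2015convex}.
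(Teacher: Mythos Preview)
Your stratification into $O(n^{2kd})$ low-dimensional subspaces followed by a union bound is exactly what the paper does. The divergence is in the per-subspace argument, and here neither of your two routes closes the lemma as stated.

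For your first route (pointwise truncation $+$ Bernstein $+$ $\delta$-net), the problematic step is the interpolation from the net to the whole sphere. You write that the $\|A\|_{\mathrm{op}}$ factor is ``controlled separately on each subspace by the same truncation tools,'' but under only a fourth-moment assumption the restricted operator norm $\|A|_V\|_{\mathrm{op}}$ does \emph{not} concentrate exponentially; the truncation you introduced was pointwise in $u$, so it does not bound the Lipschitz constant of $u\mapsto\frac{1}{m}\|Au\|^2$. This is a genuine gap, not just a scaling loss.

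Your second route (small-ball via indicators) is much closer, but the indicator functional throws away too much mass: from $\frac{1}{m}\|Au\|^2\ge \rho^2\cdot\frac{1}{m}\sum_i\mathbf{1}\{|\inn{a_i}{u}|\ge\rho\}$ you can only extract $\frac{1}{m}\|Au\|^2\ge\rho^2\bigl(p_0-O(\sqrt{k/m})\bigr)$ with $p_0=(1-\rho^2)^2/C^4$, so you recover a \emph{fixed} constant $\gamma_0(C)<1$, never $\gamma$ arbitrarily close to $1$, and in particular you cannot reproduce the advertised $(1-\gamma^2)^{-1}$ dependence.

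The paper's per-subspace argument is the natural hybrid of your two ideas: it keeps the truncation from your first route, but at the \emph{process} level rather than pointwise. Define $\phi_\tau(t)=\min(t^2,\tau^2)$ and lower-bound $\frac{1}{m}\|Au\|^2$ by $\frac{1}{m}\sum_i\phi_\tau(\inn{a_i}{u})$. A Cauchy--Schwarz/Paley--Zygmund computation gives $\E\phi_\tau(\inn{a}{u})\ge 1-C^4/\tau^2$, so by choosing $\tau^2\asymp(1-\gamma^2)^{-1}$ the mean exceeds $\gamma^2+\tfrac14$. For uniformity over the $2k$-subspace, $\phi_\tau$ is $2\tau$-Lipschitz and bounded by $\tau^2$, so symmetrization $+$ Ledoux--Talagrand contraction reduces the Rademacher complexity to the linear process $\E\sup_u\bigl|\frac{1}{m}\sum_i\epsilon_i\inn{a_i}{u}\bigr|=O(\tau\sqrt{k/m})$ (using only isotropy), and Talagrand's inequality for bounded processes supplies exponential concentration with rate $\exp(-\Omega(m/\tau^2))=\exp(-\Omega(m(1-\gamma^2)))$. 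The union bound over $e^{O(kd\log n)}$ subspaces then forces exactly $m\gtrsim kd\log n/(1-\gamma^2)$. The point is that $\phi_\tau$, unlike the indicator, retains mean arbitrarily close to $1$ while remaining Lipschitz and bounded, which is what makes both the Rademacher bound and the concentration step go through without any net or operator-norm control.
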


This implies that the ERM approach of~\cite{bora2017compressed} still
works when we only have a heavy-tailed measurement matrix $A$. However,
as we show in our experiments, heavy-tailed noise in $y$ and outliers
in $y,A$ will make ERM fail catastrophically. 
In order to solve this problem, we leverage the median-of-means
tournament defined in~\cref{equ:MOM_minmax}, and we will now show it
is robust to heavy-tails and outliers in $y,A$.

\subsection{Main results}

We now present our main result. 
\Cref{thm: mom tournaments} provides recovery guarantees in terms of
the error in reconstruction in the 
presence of heavy-tails and outliers, where $\wh{z}$ is the 
(approximate) minimizer of~\cref{equ:MOM_minmax}. 
First we show that the minimum value of the objective 
in~\cref{equ:MOM_minmax} is indeed small if there are no outliers.

\begin{lemma}\label{lemma: obj min value}
Let $M$ denote the number of batches.
Assume that the measurements $y$ and measurement matrix $A$ are drawn from the uncorrupted distribution satisfying~\Cref{assm:assumption}. Then with probability $ 1- e^{-\Omega(M)},$ the objective in~\Cref{equ:MOM_minmax} satisfies
\begin{align}
    \min_{z\in \R^k}\max_{z'\in \R^k} \underset{1\leq j \leq M}{\mathrm{median }} (  \ell_{B_j}(z) - \ell_{B_j}(z')) \leq 4 \sigma^2.
\end{align}
\end{lemma}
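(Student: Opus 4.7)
The plan is to prove the bound by exhibiting an explicit candidate for the outer minimization, namely $z = z^*$, and showing that for this choice the inner maximum over $z'$ is already small. Substituting $z = z^*$ and using the model $y_i = a_i^\top G(z^*) + \eta_i$, the per-batch loss becomes $\ell_{B_j}(z^*) = \frac{1}{b}\|\eta_{B_j}\|^2$, so that for any $z' \in \R^k$, setting $v = G(z') - G(z^*)$,
\begin{align*}
\ell_{B_j}(z^*) - \ell_{B_j}(z') &= \tfrac{1}{b}\|\eta_{B_j}\|^2 - \tfrac{1}{b}\|A_{B_j}v - \eta_{B_j}\|^2 \\
&= -\tfrac{1}{b}\|A_{B_j}v\|^2 + \tfrac{2}{b}\inner{A_{B_j}v}{\eta_{B_j}}.
\end{align*}

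Next I would complete the square on the right-hand side: the identity $-\|x\|^2 + 2\inner{x}{y} = \|y\|^2 - \|x-y\|^2 \leq \|y\|^2$ applied pointwise (with $x = A_{B_j}v$ and $y = \eta_{B_j}$) yields, for every batch $j$ and every $z' \in \R^k$,
\[
\ell_{B_j}(z^*) - \ell_{B_j}(z') \leq \tfrac{1}{b}\|\eta_{B_j}\|^2.
\]
The right-hand side is independent of $z'$, so monotonicity of the median in each coordinate gives
\[
\max_{z' \in \R^k}\; \underset{1 \leq j \leq M}{\mathrm{median}}\bigl(\ell_{B_j}(z^*) - \ell_{B_j}(z')\bigr) \;\leq\; \underset{1 \leq j \leq M}{\mathrm{median}} \;\tfrac{1}{b}\|\eta_{B_j}\|^2.
\]
Since taking a minimum over $z$ of the left-hand side of the lemma statement only decreases it, it now suffices to control the median on the right.

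Finally, I would estimate the median of the $\frac{1}{b}\|\eta_{B_j}\|^2$ by a standard Markov-plus-Chernoff argument for MOM. Under Assumption~\ref{assm:assumption}, $\E[\eta_i^2] \leq \sigma^2$ and the $\eta_i$ are i.i.d.\ across batches, so $\E\bigl[\tfrac{1}{b}\|\eta_{B_j}\|^2\bigr] \leq \sigma^2$ and Markov gives $\Pr\bigl(\tfrac{1}{b}\|\eta_{B_j}\|^2 > 4\sigma^2\bigr) \leq \tfrac{1}{4}$. The event $\{\text{median}_j \tfrac{1}{b}\|\eta_{B_j}\|^2 > 4\sigma^2\}$ requires at least $M/2$ of the $M$ independent batch events to occur; Hoeffding/Chernoff on this $\mathrm{Bin}(M,1/4)$-dominated random variable yields probability at most $e^{-\Omega(M)}$. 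Combining this with the previous step proves the claim.

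The argument is almost entirely mechanical once the completing-the-square identity is spotted; the only subtle point, and the step I would verify most carefully, is the exchange of $\max_{z'}$ and the median via pointwise domination, which works precisely because the upper bound $\tfrac{1}{b}\|\eta_{B_j}\|^2$ is free of $z'$. Note that the $4$-sub-Gaussianity of $a_i$ from Definition~\ref{def:heavy-tailed} is not invoked here; it is needed only downstream in Theorem~\ref{thm: mom tournaments} where the \emph{argmin} must be shown close to $z^*$, not for this upper bound on the optimal value.
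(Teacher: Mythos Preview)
Your proof is correct and essentially identical to the paper's: both set $z=z^*$, bound $\ell_{B_j}(z^*)-\ell_{B_j}(z')\le \tfrac{1}{b}\|\eta_{B_j}\|^2$ uniformly in $z'$, and then apply Markov plus Chernoff to control the median. The only cosmetic difference is that the paper obtains the key inequality directly from $\ell_{B_j}(z')\ge 0$, whereas you reach the same bound via completing the square---these are the same computation.
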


We now introduce \Cref{lemma: srec} and \Cref{lemma: multiplier process}, 
which control two stochastic  processes that appear in 
\cref{equ:MOM_minmax}. We show that minimizing the objective 
in~\cref{equ:MOM_minmax} implies that you are close to the unknown vector $G(z^*)$.
Notice  that since $z^*$ is one feasible solution of the inner 
maximization step of $z'$, we can consider $z' = z^*$. Now consider 
the difference of square losses in \cref{equ:MOM_minmax}, which is 
given by:
\begin{align*}
    \ell_{j}(\wh{z}) - \ell_{j}({z}^*) &= \frac{1}{b}\| A_{B_j}G(\wh{z}) - y_{B_j}\|^2
    - \frac{1}{b}\| A_{B_j}G(z^*) - y_{B_j}\|^2,\\
   &= \frac{1}{b}\|A_{B_j} (G(\wh{z}) - G({z^*}))\|^2 
    - \frac{2}{b}\eta_{B_j}^\top (A_{B_j} (G(\wh{z}) - G({z^*}))), 
\end{align*}
where the last line follows from an elementary arithmetic manipulation.

Assume we have the following bounds on a majority of batches:
 \begin{align}
     \frac{1}{b}\|A_{B_j} (G(\wh{z}) - G({z^*}))\|^2 &\gtrsim \norm{ G(\wh{z}) - G(z^*) }^2,\\
     - \frac{2}{b}\eta_{B_j}^\top (A_{B_j} (G(\wh{z}) - G({z^*}))) &\gtrsim - 
     \norm{G(\wh{z}) - G(z^*)}.
 \end{align}
 Since the objective is the median of the sum of the above terms, a small value of the objective 
 implies that $\norm{ G(\wh{z}) - G(z^*)}$ is small. 
 We formally show these bounds in~\Cref{lemma: srec},~\Cref{lemma: multiplier process}.

\begin{lemma}
\label{lemma: srec}
Let $G: \R^k \to \R^n$ be a generative model from a $d$-layer neural network using ReLU activations. Let $A\in\R^{m\times n}$ be a matrix with i.i.d. uncorrupted rows satisfying~\Cref{def:heavy-tailed}. 
Let the batch size $b=\Theta\left(C^4\right)$, let the number of batches satisfy $M=\Omega(kd\log n)$, and let $\gamma$ be a constant which depends on the moment constant $C$. 
Then with probability at least $1-e^{-\Omega(m)},$ for all $z_1, z_2\in \R^k$ there exists a set $J\subseteq \left[M\right]$ of cardinality at least $0.9M$ such that
\[
\frac{1}{b}\|A_{B_j}(G(z_1) - G(z_2)) \|^2\geq \gamma^2\|G(z_1) - G(z_2)\|^2\;, \forall j\in J.
\]
\end{lemma}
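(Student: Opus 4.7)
The strategy is a three-stage argument that parallels \Cref{lemma: srec heavy bora} but lifted to the batch level. First, for a fixed direction $v = G(z_1) - G(z_2) \in \R^n$, I would use Chebyshev's inequality (powered by the fourth-moment assumption) to show that a single batch satisfies $\frac{1}{b}\|A_{B_j}v\|^2 \geq \gamma_0^2 \|v\|^2$ with high constant probability. Second, by independence of the batches and a Chernoff bound, promote this to ``at least $0.95 M$ batches succeed, w.p.\ $1 - e^{-\Omega(M)}$.'' Third, take a union bound over an $\delta'$-net of the range-difference set $\{G(z_1) - G(z_2)\}$ and extend to every pair $(z_1, z_2) \in \R^k \times \R^k$ via a Lipschitz argument.

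For the per-batch, per-direction bound, isotropy gives $\E[\tfrac{1}{b}\|A_{B_j}v\|^2] = \|v\|^2$, and \Cref{def:heavy-tailed} yields $\mathrm{Var}(\inn{a_i}{v}^2) \leq C^4 \|v\|^4$. Independence inside a batch then gives $\mathrm{Var}(\tfrac{1}{b}\|A_{B_j}v\|^2) \leq C^4\|v\|^4/b$, so choosing $b = \Theta(C^4)$ with a sufficiently large hidden constant, Chebyshev delivers
\[
\Pr\!\left(\tfrac{1}{b}\|A_{B_j}v\|^2 < \gamma_0^2 \|v\|^2\right) \leq \tfrac{1}{40},
\]
for some $\gamma_0 \in (0,1)$ depending only on $C$. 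The aggregation step is a standard Chernoff bound on $M$ independent $\mathrm{Bernoulli}(\leq \tfrac{1}{40})$ indicators, yielding at least $0.95 M$ ``good'' batches with probability $1 - e^{-\Omega(M)}$.

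For the union bound, I would build the net $\mathcal{N}$ exactly as in the proof of Lemma~4.3 of \cite{bora2017compressed} and of \Cref{lemma: srec heavy bora}: a $d$-layer ReLU generator induces at most $n^{kd}$ sign-pattern pieces, each a linear map on a $k$-dimensional subspace, so $G(B_k(R))$ admits an $\delta'$-cover of size $n^{kd}(O(R)/\delta')^k$, and the Minkowski difference $\{G(z_1) - G(z_2)\}$ admits one of at most the square of this. Taking $\delta' = \mathrm{poly}(1/n)$ and $R = \mathrm{poly}(n)$ keeps $\log|\mathcal{N}| = O(kd \log n)$, and with $M = \Omega(kd \log n)$ at a large enough constant, the union bound gives: simultaneously for every $v_0 \in \mathcal{N}$, at least $0.95 M$ batches satisfy $\tfrac{1}{b}\|A_{B_j} v_0\|^2 \geq \gamma_0^2 \|v_0\|^2$, with probability $1 - e^{-\Omega(M)}$.

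The final step is the extension from $\mathcal{N}$ to arbitrary $v$, and this is the main obstacle, because heavy-tailed matrices do not enjoy sharp operator-norm tails. My plan is to exploit the slack between the $0.95 M$ one-point success rate and the $0.9 M$ target. Isotropy gives $\E\|A_{B_j}\|_F^2 = bn$, and the fourth-moment bound yields $\mathrm{Var}(\|A_{B_j}\|_F^2) \leq b C^4 n^2$, so Chebyshev gives $\|A_{B_j}\|_F \leq \sqrt{2bn}$ per batch with probability $\geq 0.99$; a second Chernoff bound boosts this to ``at least $0.98 M$ batches'' w.p.\ $1 - e^{-\Omega(M)}$. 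Intersecting with the ``good-for-$v_0$'' batches leaves at least $0.93 M \geq 0.9 M$ indices on which both bounds hold. For arbitrary $v$, pick the nearest $v_0 \in \mathcal{N}$ with $\|v - v_0\| \leq \delta' := \gamma_0 \|v\|/(10\sqrt{n})$; the triangle inequality gives
\[
\tfrac{1}{\sqrt{b}}\|A_{B_j}v\| \;\geq\; \tfrac{1}{\sqrt{b}}\|A_{B_j} v_0\| - \tfrac{1}{\sqrt{b}}\|A_{B_j}\|_F \|v - v_0\| \;\geq\; \gamma \|v\|,
\]
for a slightly smaller constant $\gamma = \Theta(\gamma_0)$, squaring which yields the claimed bound on $J$ with $|J| \geq 0.9 M$. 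The scale-invariance of the inequality is what lets the required net resolution be only polynomially small in $1/n$, which keeps $\log|\mathcal{N}| = O(kd \log n)$ and matches the stated sample-complexity budget.
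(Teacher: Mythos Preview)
Your overall strategy is sound and takes a genuinely different, more elementary route than the paper. Two substantive differences: (i) for the per-batch lower bound you apply Chebyshev directly to the batch average $\tfrac{1}{b}\|A_{B_j}v\|^2$, whereas the paper uses a small-ball (Paley--Zygmund) bound on individual terms $\inn{a_i}{v}^2$ and then aggregates within the batch; (ii) for the extension from the net to arbitrary $v$ you control $\|A_{B_j}\|_F$ on most batches via a second Chebyshev bound, whereas the paper controls, uniformly over small perturbations $v$, the \emph{number of samples} $i$ with $|\inn{a_i}{v}|$ above a fixed threshold, via symmetrization, a Rademacher complexity estimate, and the bounded-differences inequality. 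Your Frobenius route is cruder and forces the net resolution down to $\delta' = \Theta(1/\sqrt{n})$ (the paper gets away with a constant-scale net), but since the net lives in a $2k$-dimensional set this only adds $O(k\log n)$ to $\log|\mathcal{N}|$, which is absorbed into the $O(kd\log n)$ budget. So the trade-off is: simpler tools, slightly larger net, same final sample complexity.

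There is, however, one real gap in your net construction. You describe covering $G(B_k(R))$ with $R=\mathrm{poly}(n)$, but the lemma is stated for \emph{all} $z_1,z_2\in\R^k$, and a ReLU network is not homogeneous in $z$, so restricting to a bounded ball in latent space does not cover the full claim; your final appeal to ``scale-invariance'' does not repair this because $v/\|v\|$ need not lie in the range-difference set. The fix is exactly the subspace decomposition you already cite (and which the paper uses, via \Cref{prop: relu no of subspaces}): the range-difference set is contained in a union of at most $n^{2kd}$ fixed $2k$-dimensional subspaces, and the target inequality \emph{is} scale-invariant in $v$, so it suffices to prove it for unit vectors in each subspace. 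Build your $\delta'$-net on the unit sphere of each such subspace (size $(O(1/\delta'))^{2k}$ per subspace), union-bound over the $n^{2kd}$ subspaces, and then your Frobenius-norm extension goes through verbatim with $\delta'=\Theta(1/\sqrt{n})$ and $\log|\mathcal{N}|=O(kd\log n)$. With that correction your argument is complete.
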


\begin{lemma}\label{lemma: multiplier process}
Consider the setting of~\Cref{lemma: srec} with measurements satisfying $y = AG(z^*) + \eta$, where $y,A,\eta$ satisfy~\Cref{assm:assumption} with noise variance $\sigma^2$. 
For a constant batch size $b$ and number of batches $M=\Omega(kd\log n)$, with probability at least $1-e^{-\Omega(m)},$ for all $z\in \R^k$ there exists a set $J\subseteq \left[M\right]$ of cardinality at least $0.9M$ such that 
\[
\frac{1}{b} |\eta_{B_j}^T A_{B_j} (G(z) - G(z^*))| \leq \sigma \| G(z) - G(z^*) \| \;, \forall j\in J.
\]
\end{lemma}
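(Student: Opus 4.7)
The plan is a three-step argument: pointwise Chebyshev within each batch, a Bernoulli-count aggregation across batches, and an $\epsilon$-net on the range of $G$ to uniformize over $z$. Fix $z\in\R^k$ and set $v := G(z) - G(z^*)$. For batch $B_j$, write
\[
X_j \;:=\; \tfrac{1}{b}\,\eta_{B_j}^\top A_{B_j}\,v \;=\; \tfrac{1}{b}\sum_{i\in B_j}\eta_i\,(a_i^\top v).
\]
Independence of $\eta_i$ from $a_i$, the zero-mean variance-$\sigma^2$ noise model, and the isotropy of $a_i$ from~\Cref{assm:assumption} give $\E[X_j]=0$ and $\mathrm{Var}(X_j) = \sigma^2\|v\|^2/b$. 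Chebyshev's inequality therefore yields
\[
\Pr\!\bigl[|X_j| > c\sigma\|v\|\bigr] \;\le\; \tfrac{1}{b c^{2}},
\]
and choosing the constants $b$ and $c$ large enough (e.g.\ $bc^{2}\ge 20$) makes this pointwise failure probability at most $0.05$.

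For fixed $z$, the indicators $Z_j := \mathbf{1}\{|X_j| > c\sigma\|v\|\}$ are then independent Bernoullis with mean $\le 0.05$, so Hoeffding's inequality gives $\Pr\!\bigl[\sum_{j=1}^{M} Z_j > 0.1M\bigr] \le e^{-\Omega(M)}$. To promote this to uniformity in $z$, I would recycle the $\epsilon$-net construction used for~\Cref{lemma: srec heavy bora} (originally from \cite{bora2017compressed}): because $G$ is piecewise linear from a $d$-layer ReLU network, the range of $G$ admits an $\epsilon$-net of cardinality $n^{O(kd)}$, and taking pairs to cover the differences $v$ preserves this count. Combined with $M = \Omega(kd\log n)$, a union bound absorbs the exponent into $e^{-\Omega(M)} = e^{-\Omega(m)}$, giving the pointwise conclusion simultaneously at every net point.

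The hard part is extending from the net to every $z\in\R^k$. Given $z$, round to the closest net element $z'$ so that $\|v - v'\| \le \epsilon$, where $v' := G(z') - G(z^*)$, and decompose
\[
X_j(z) \;=\; X_j(z') + \tfrac{1}{b}\,\eta_{B_j}^\top A_{B_j}(v-v').
\]
Cauchy--Schwarz bounds the residual by $\tfrac{1}{b}\|\eta_{B_j}\|\cdot\|A_{B_j}(v-v')\|$. Since $b$ is only a constant we cannot invoke sub-Gaussian concentration on $\|A_{B_j}\|_{\mathrm{op}}$; instead, Markov's inequality on the second moments $\E\|\eta_{B_j}\|^2 = b\sigma^2$ and $\E\|A_{B_j}(v-v')\|^2 = b\|v-v'\|^2$ (isotropy from~\Cref{assm:assumption}; \Cref{def:heavy-tailed} prevents higher-order blow-up) forces the residual to be $O(\sigma\epsilon)$ on each batch with constant probability. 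The per-batch bad event now aggregates Chebyshev failure with the Markov failures; with constants tuned so that its joint probability stays below, say, $0.08$, Hoeffding still yields at most $0.1M$ bad batches. Choosing $\epsilon$ polynomially small in $n$ absorbs the $O(\sigma\epsilon)$ residual into a slightly larger constant in front of $\sigma\|v\|$, and $v=0$ is trivial. The chief technical burden is precisely this bookkeeping across three sources of randomness---pointwise Chebyshev, net discretization, and heavy-tail residual control---so that the fraction of bad batches stays below $0.1M$ in aggregate.
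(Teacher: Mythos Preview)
Your first two steps---Chebyshev per batch and a Bernoulli/Hoeffding count across batches at each net point---match the paper and are fine. The gap is in the residual step. After rounding $z$ to the net point $z'$, the residual direction $v-v'$ still varies over a continuum as $z$ ranges over the cell that rounds to $z'$. Your Markov bound on $\|A_{B_j}(v-v')\|^2$ is a per-$(v-v')$ statement, so when you write ``the per-batch bad event now aggregates Chebyshev failure with the Markov failures'' and then invoke Hoeffding, you are running the Bernoulli argument for a \emph{fixed} $z$. That conclusion cannot be upgraded to all $z$ by the union bound over net points you have already taken, because the Markov event depends on $z$ itself, not only on $z'$. Under the heavy-tailed~\Cref{def:heavy-tailed} you cannot fall back on an operator-norm bound for $A_{B_j}$ (as you correctly note), so some uniform control over the residual ball is genuinely required and is missing from your sketch.

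The paper handles this differently. It studies the indicator process
\[
\Psi \;=\; \sup_{v\in S_2}\;\frac{1}{m}\sum_{i=1}^m \mathbf{1}\bigl\{|\eta_i a_i^\top v|\ge t/2\bigr\}
\]
over the entire residual ball $S_2$, bounds $\E[\Psi]$ via Gin\'e--Zinn symmetrization followed by a Rademacher-complexity calculation on the linear class (which reduces to a trace computation on the $2k$-dimensional subspace), and then applies the bounded differences inequality. This yields a \emph{single} high-probability event under which at most $0.05M$ samples (hence at most $0.05M$ batches) are bad for \emph{any} residual direction, after which the triangle inequality combines it with the net-point bound. A secondary issue: you never invoke homogeneity to reduce to unit $v$, so your claim that ``$O(\sigma\epsilon)$ absorbs into a slightly larger constant in front of $\sigma\|v\|$'' fails when $0<\|v\|\ll\epsilon$; the paper's reduction to the unit sphere $S_1$ within each subspace sidesteps this.
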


The above lemmas do not account for the $\epsilon-$corrupted samples in~\Cref{def:corruption}.
However, since the batch size is constant in both the lemmas, there exists a value of $\epsilon$ such that sufficiently many batches have no corruptions.
Hence we can apply~\Cref{lemma: srec},~\Cref{lemma: multiplier process} to these uncorrupted batches. Using these lemmas with a constant batch size $b$, we obtain \Cref{thm: mom tournaments}.
We defer its proof  to \Cref{sec:main_proof}. 

\begin{theorem}\label{thm: mom tournaments}
Let $G: \R^k \to \R^n$ be a generative model from a $d$-layer neural network using ReLU activations.  
There exists a (sufficiently small) constant fraction $\epsilon$ which depends on the moment constant $C$ 
in~\Cref{def:heavy-tailed} such that the following is true. 
We observe $m = O(kd\log n)$ $\epsilon$-corrupted samples from \Cref{def:corruption}, under 
\Cref{assm:assumption}. For any $z^* \in \R^k$, let $\wh{z}$ minimize the 
objective function given by \cref{equ:MOM_minmax} to within additive $\tau$ of the 
optimum. Then there exists a (sufficiently 
large) constant $c$, such that with  probability at least $1-e^{-\Omega(m)}$, the reconstruction $G(\hat{z})$ satisfies
\[
\|G(\wh{z}) - G(z^*)\|^2 \leq c(\sigma^2 + \tau),
\]
where $\sigma^2$ is the variance of noise under~\Cref{assm:assumption}.
\end{theorem}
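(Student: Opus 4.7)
My plan is to combine the three supporting lemmas via a pigeonhole argument over batches. First I would absorb the $\eps$-contamination: because the batch size $b$ is a constant depending only on $C$, a Chernoff bound on the $\mathrm{Binomial}(m,\eps)$ count of corrupted samples shows that, for $\eps$ sufficiently small, at least $0.95M$ batches consist entirely of clean samples (drawn from $P$) with probability $1-e^{-\Omega(M)}$. The hypotheses of \Cref{lemma: obj min value}, \Cref{lemma: srec}, and \Cref{lemma: multiplier process} then apply to those clean batches.

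Next I would upper bound the MOM objective value attained at $\wh z$. Restricting \Cref{lemma: obj min value} to the clean batches (so that the full-sample median is controlled from above by the clean-batch median, the corrupted minority being unable to push the median up past the clean values) together with the $\tau$-approximate optimality of $\wh z$ yields
\[
\max_{z'\in\R^k}\; \underset{1\leq j\leq M}{\mathrm{median}}\bigl(\ell_j(\wh z) - \ell_j(z')\bigr) \;\leq\; 4\sigma^2 + \tau.
\]
Plugging in the feasible choice $z'=z^*$ in the inner maximum implies that strictly more than $M/2$ batches satisfy $\ell_j(\wh z) - \ell_j(z^*) \leq 4\sigma^2+\tau$.

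Third, using the decomposition
\[
\ell_j(\wh z) - \ell_j(z^*) \;=\; \tfrac{1}{b}\|A_{B_j}(G(\wh z)-G(z^*))\|^2 \;-\; \tfrac{2}{b}\eta_{B_j}^\top A_{B_j}(G(\wh z)-G(z^*))
\]
given in the excerpt, I would apply \Cref{lemma: srec} with $(z_1,z_2) = (\wh z, z^*)$ to lower bound the quadratic term by $\gamma^2 u^2$ on at least $0.9 M_c$ clean batches (with $u := \|G(\wh z) - G(z^*)\|$ and $M_c \geq 0.95M$), and \Cref{lemma: multiplier process} with $z = \wh z$ to upper bound the magnitude of the linear term by $2\sigma u$ on at least another $0.9 M_c$ clean batches. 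Two applications of inclusion--exclusion -- first between the two lemma-favorable subsets, then against the median-favorable subset from the previous step -- leave at least one batch $j^*$ on which all three inequalities hold, so
\[
\gamma^2 u^2 - 2\sigma u \;\leq\; \ell_{j^*}(\wh z) - \ell_{j^*}(z^*) \;\leq\; 4\sigma^2 + \tau.
\]

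Finally I would solve this quadratic inequality in $u$. Since $\gamma$ is a positive constant, the quadratic formula gives $u = O(\sigma + \sqrt\tau)$, and squaring yields $u^2 \leq c(\sigma^2+\tau)$ for a constant $c$ depending on $\gamma$ (hence on $C$), which is the claim. The main obstacle is the first step: the three supporting lemmas are stated for entirely uncorrupted draws, so one must verify that the adversarial minority of batches really cannot poison the median in any of the three places where it appears. This is what forces $\eps$ to depend on $b$ (and hence on $C$) and is precisely where the ``sufficiently small $\eps$'' quantifier of the theorem is spent; the rest of the argument is bookkeeping.
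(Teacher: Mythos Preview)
Your proposal is correct and follows essentially the same route as the paper's proof: bound the number of corrupted batches so the three lemmas apply to the clean majority, feed $z'=z^*$ into the $\tau$-approximate optimality to get the $4\sigma^2+\tau$ upper bound on the median loss difference, combine the quadratic and multiplier lemma bounds on an overlapping majority of batches, and solve the resulting quadratic in $u$. The only cosmetic differences are that the paper treats the corruption count deterministically (you use a Chernoff bound, which is actually more faithful to \Cref{def:corruption}) and phrases the pigeonhole as ``the median batch satisfies'' rather than your explicit inclusion--exclusion; your tracking of the factor $2$ in the multiplier term is also more careful than the paper's.
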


We briefly discuss the implications of Theorem~\ref{thm: mom tournaments}, with regards to sample complexity and error in reconstruction.
\paragraph{Sample Complexity.}
Our sample complexity matches that of~\cite{bora2017compressed} up to constant factors. This shows that the minimizer of \cref{equ:MOM_minmax} in the presence of heavy-tails and outliers provides the same guarantees as in the case of ERM with sub-Gaussian measurements. 
\paragraph{Statistical accuracy and robustness.}
Let us analyze the error terms in our theorem. The term $\tau$ is a consequence of the minimization algorithm not being perfect, since it only reaches within $\tau$ of the true minimum. Hence it cannot be avoided. 
The term $\sigma^2$ is due to the noise in measurements. 
In the main result of~\cite{bora2017compressed}, the reconstruction $G(\wh{z})$ has error bounded by
$\|G(\wh{z}) - G(z^*)\|^2 \lesssim \norm{\eta}^2/m + \tau.$\footnote{In~\cite{bora2017compressed}, the bound is stated as $\norm{\eta}^2,$ but our $A$ has a different scaling, and hence the correct bound in our setting is $\norm{\eta}^2/m$.}
This gives the following conditions:
\begin{itemize}[leftmargin=*]
    \item  If $\eta$ is sub-Gaussian with variance $\sigma^2$, then 
    $\norm{\eta}^2/m \approx \sigma^2$ with high probability. Hence our bounds match up to constants.
    \item If higher order moments of $\eta$ do not exist, an application of 
    Chebyshev's inequality says that with probability $1-\delta,$~\cite{bora2017compressed} has $\norm{G(z^*) - G(\wh{z})}^2 \approx \sigma^2/(m \delta)$, and this can be extremely large if we want $\delta = e^{-\Omega(m)}.$
\end{itemize}
Hence our method is clearly superior if $\eta$ only has 
bounded variance, and if $\eta$ is sub-Gaussian, then our 
bounds match up to constants. In the presence of 
corruptions,~\cite{bora2017compressed} has no provable guarantee.

\section{Experiments}
\label{sec:experiments}

\begin{figure}[t]
\centering
\begin{subfigure}{.45\textwidth}
  \centering
  \includegraphics[width=\linewidth, height =.65\textwidth ]{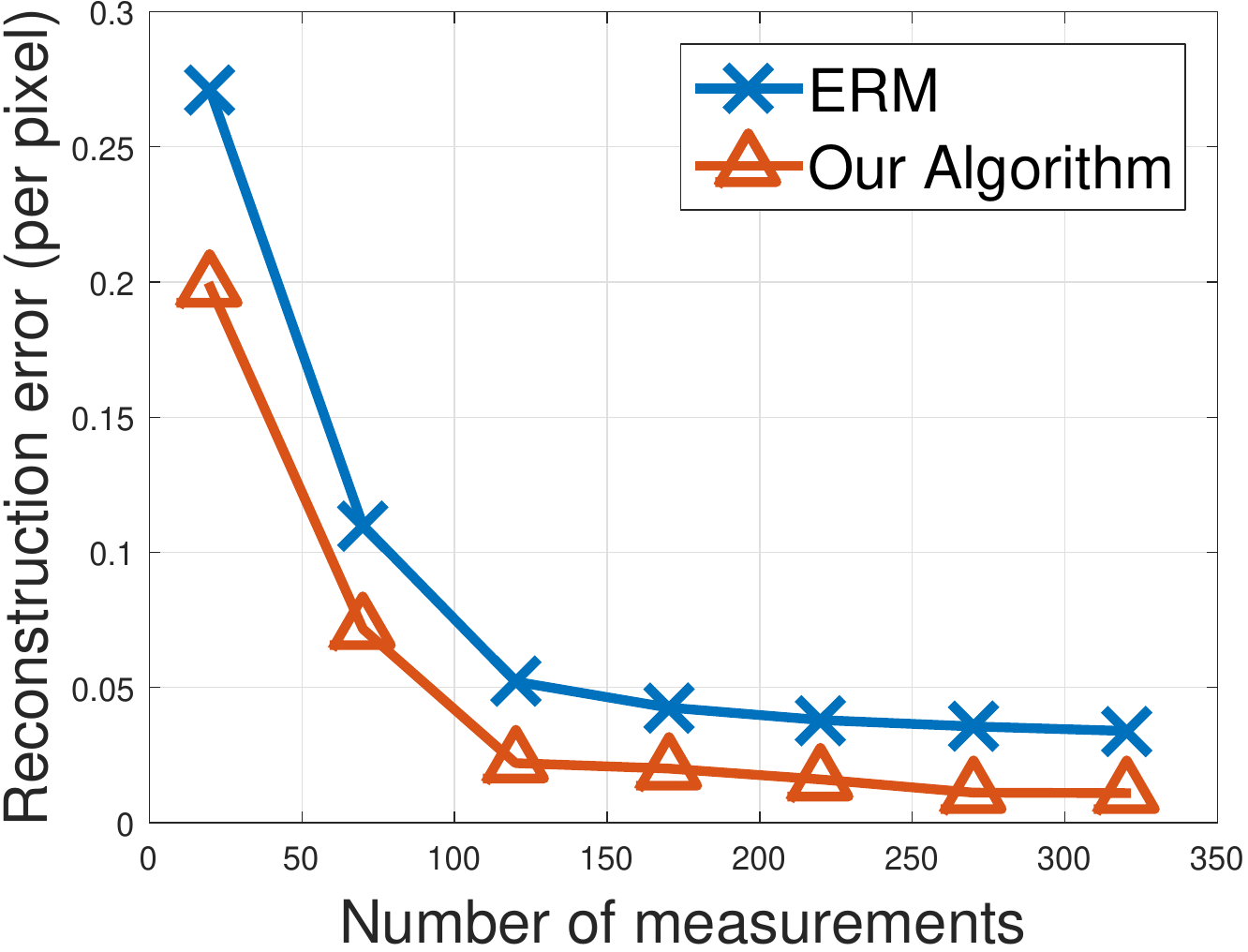}
  \caption{ Results on MNIST. }
  \label{fig:Curve}
\end{subfigure} \hfill
\begin{subfigure}{.45\textwidth}
  \centering
  \includegraphics[width=\linewidth, height =.65\textwidth ]{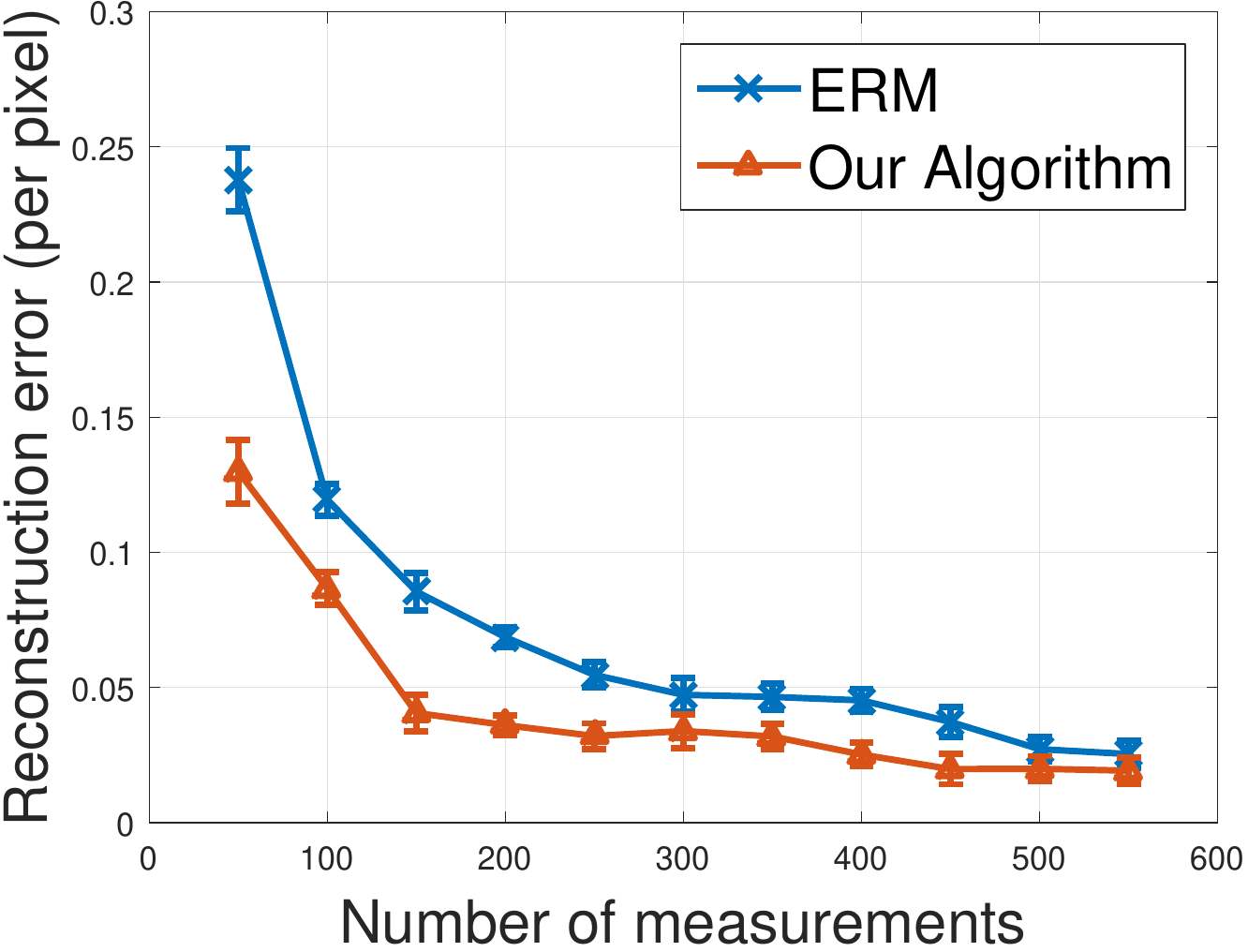}
  \caption{Results on CelebA-HQ}
  \label{fig:Curve_post}
\end{subfigure}
\caption{ \small{We compare \Cref{alg:MOM_GAN} with the baseline ERM \cite{bora2017compressed} under the heavy-tailed setting \emph{without} arbitrary outliers. We fix $k = 100$ for the MNIST dataset
and $k = 512$ for the CelebA-HQ dataset.
We vary the number of measurements, and plot the reconstruction error 
per pixel averaged over multiple trials. 
With increasing number of measurements, we observe the reconstruction 
error decreases.
For heavy-tailed $y$ and $A$ \emph{without} arbitrary outliers, our method obtains significantly smaller reconstruction error in comparison 
to ERM.}}
\label{fig:heavy-l2}
\end{figure}

\begin{figure}[h]
    \centering
    \includegraphics[width=\columnwidth]{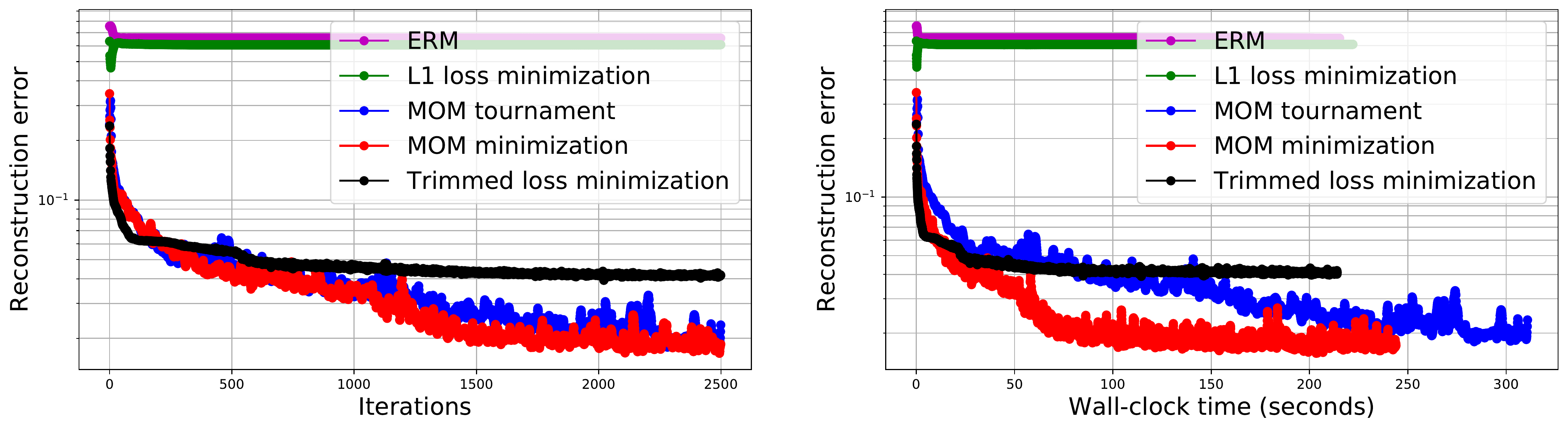}
    \caption{
    \small{ Plot of
the reconstruction error versus the iteration number (left) and plot of the reconstruction error versus
wall-clock time (right).
ERM \cite{bora2017compressed} and $\ell_1$ minimization fail to converge.
Our two proposed methods, MOM tournament(blue) and MOM minimization(red), have 
the smallest reconstruction error. 
We provide a theoretical analysis for the MOM tournament algorithm,
and observe that direct minimization of the MOM objective also works
in practice. The computation time of our algorithms 
is nearly the same as the baselines. 
}}
    \label{fig:Time_Loss}
\end{figure}

In this section, we study the empirical performance of our algorithm 
on generative models 
trained on real image datasets. 
We show that we can reconstruct images under heavy-tailed samples and arbitrary outliers.
For additional experiments and experimental setup details, see \Cref{app:experiments}.

\paragraph{Heavy-Tailed Samples}
In this experiment, we deal with
the \textit{uncorrupted} compressed sensing model $P$, which has heavy-tailed measurement matrix and
stochastic noise: $y = AG(z^*) + \eta$. 
We use a Student’s $t-$distribution (a typical example of heavy-tails) for $A$ 
and $\eta$. 
We compare \Cref{alg:MOM_GAN} with the baseline ERM \cite{bora2017compressed} for heavy-tailed data \emph{without} arbitrary corruptions on MNIST~\cite{lecun1998gradient} and CelebA-HQ~\cite{karras2017progressive,liu2018large}. We trained a DCGAN~\cite{radford2015unsupervised} with $k=100$ and $d=5$ layers to produce $64\times 64$ MNIST images.
For CelebA-HQ, we used a PG-GAN~\cite{karras2017progressive}
with $k=512$ to produce images of size $256\times256\times3 = 196,608$. 

We vary the number of measurements  $m$ and obtain the reconstruction 
error $\|G(\wh{z}) - G(z^*)\|^2/n$ for \Cref{alg:MOM_GAN} and ERM, where
$G(z^*)$ is the ground truth image.
In \Cref{fig:heavy-l2}, \Cref{alg:MOM_GAN} and ERM both have decreasing 
reconstruction error per pixel with increasing number of measurements. 
To conclude, even for heavy-tailed noise \emph{without} arbitrary 
outliers, \Cref{alg:MOM_GAN} obtains significantly smaller reconstruction
error when compared to ERM.

\paragraph{Arbitrary corruptions.}

In this experiment, we use the same heavy-tailed samples as above, and we 
add $\eps = 0.02$-fraction of arbitrary corruption. 
We set the outliers of measurement matrix $A$ as random sign matrix, and the
outliers of $y$ are fixed to be $-1$. We note that we don’t use any targeted attack to simulate the outliers. 
We perform our experiments on the CelebA-HQ dataset using a PG-GAN of latent dimension $k=512$, and fix the number of measurements to $m = 1000$.

We compare our algorithm to a number of natural baselines. Our first 
baseline is ERM~\cite{bora2017compressed} which is not designed to deal 
with outliers. While its fragility is interesting to note, in this sense
it is not unexpected.
For outliers in $y$, classical robust methods replace the 
loss function by an $\ell_1$ loss function or Huber loss function.
This is done in order to avoid the squared loss, which makes recovery 
algorithms very sensitive to outliers. In this case, we have $\wh{z} := \argmin \norm{ y - AG(z)}_1$. 

We also investigate the 
performance of trimmed loss minimization, which is a recent algorithm proposed by~\cite{shensujay2018learning}. This algorithm picks the $t-$fraction of samples with smallest empirical loss for each update step, where $t$ is a hyper-parameter.

We run \Cref{alg:MOM_GAN} and its variant MOM minimization. The MOM 
minimization directly minimizes 
\begin{align}
\label{equ:MOM_direct}
  \widehat{z} = \arg \min_{z\in\R^k} 
{\mathrm{median}}_{1\leq j \leq M} (\ell_j(z)),  
\end{align}
and we use gradient-based methods similar to \Cref{alg:MOM_GAN} to solve it.
Since \Cref{alg:MOM_GAN} optimizes $z$ and $z'$ in one iteration, the actual computation time of MOM tournament is twice that of MOM minimization.
As shown in~\Cref{fig:Time_Loss},~\Cref{fig:celebA},
ERM \cite{bora2017compressed} and $\ell_1$ loss minimization fail to converge to the ground truth and in particular, 
they may recover a completely different person. 
Trimmed loss minimization~\cite{shensujay2018learning} 
only succeeds on occasion,
and when it fails, it obtains a visibly different person. 
The convergence of the MOM minimization per iteration is very
similar to the MOM tournament, and they both achieve much smaller
reconstruction error compared to trimmed loss minimization. The 
right panel of \Cref{fig:Time_Loss} plots the reconstruction error 
versus the actual computation time, showing our
algorithms match baselines. We plot the MSE vs. number of measurements in Figure~\ref{fig:additional-expts2}, where the fraction of corruptions is set to $\eps=0.02$.

\paragraph{Miscellaneous Experiments}

\emph{Is ERM ever better than MOM?}
So far we have analyzed cases where MOM performs better than ERM.
Since ERM is known to be optimal in linear regression when dealing with uncorrupted sub-Gaussian data, we expect it to be superior to MOM when our measurements are all sub-Gaussian.
We evaluate this in Fig.~\ref{fig:additional-expts1} and observe that ERM obtains smaller MSE in this setting. Notice that as we reduce the number of batches in MOM, it approaches ERM.

\emph{How sensitive is MOM to the number of batches?}
In Figure~\ref{fig:additional-expts3} we study the MSE of MOM tournaments and MOM minimization as we vary the number of batches.

In order to select the optimal number of batches ($M$), we keep a set of validation measurements that we do not use in the optimization routines for estimating $x$. We can run MOM for different value of $M$ to get multiple reconstructions, and then evaluate each reconstruction using the validation measurements to pick the best reconstruction.
Note that one should use the median-of-means loss while evaluating the validation error as well.

\begin{figure}[t]
    \includegraphics[width=\textwidth]{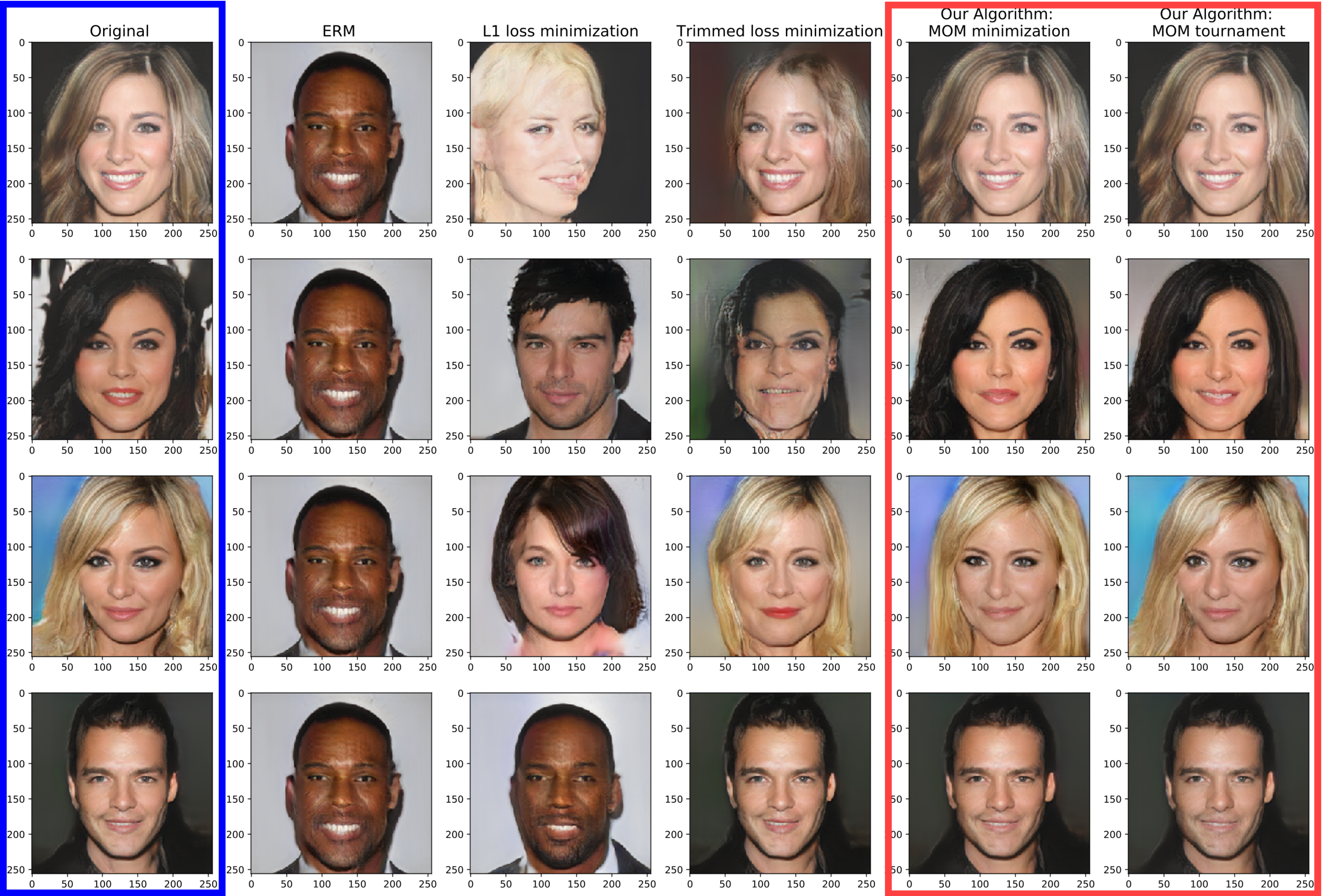}
    \caption{
    \small
    Reconstruction results on CelebA-HQ for $m = 1000$ measurements with 20 corrupted measurements.
    For each row, the first column is ground truth from a 
    generative model. 
    Subsequent columns show reconstructions by 
    ERM~\cite{bora2017compressed}, $\ell_1-$minimization, trimmed 
    loss minimization~\cite{shensujay2018learning}.
    In particular, vanilla ERM,  $\ell_1-$minimization obtain 
    completely different faces. Since we use the same outlier for 
    different rows, vanilla ERM produces the same reconstruction
    irrespective of the ground truth.
    Trimmed loss minimization only succeeds on occasion (the last
    row),
    and when it fails, it obtains a similar but still different 
    face. 
    The last two columns show reconstructions by our proposed  
    algorithms. The second to last one is directly minimizing the 
    MOM objective \cref{equ:MOM_direct}, and the last column 
    minimizes the MOM tournament objective \cref{equ:MOM_minmax}. We
    provide a theoretical analysis for the MOM tournaments 
    algorithm, and observe that direct minimization of the MOM 
    objective also works in practice. We observe the last two 
    columns have much better reconstruction performance -- we get a 
    high quality reconstruction under heavy-tailed 
    measurements and arbitrary outliers.}
    \label{fig:celebA}
\end{figure}

\begin{figure}[b]
\centering
\begin{subfigure}{.31\textwidth}
  \centering
  \includegraphics[width=\linewidth, height =.65\textwidth ]{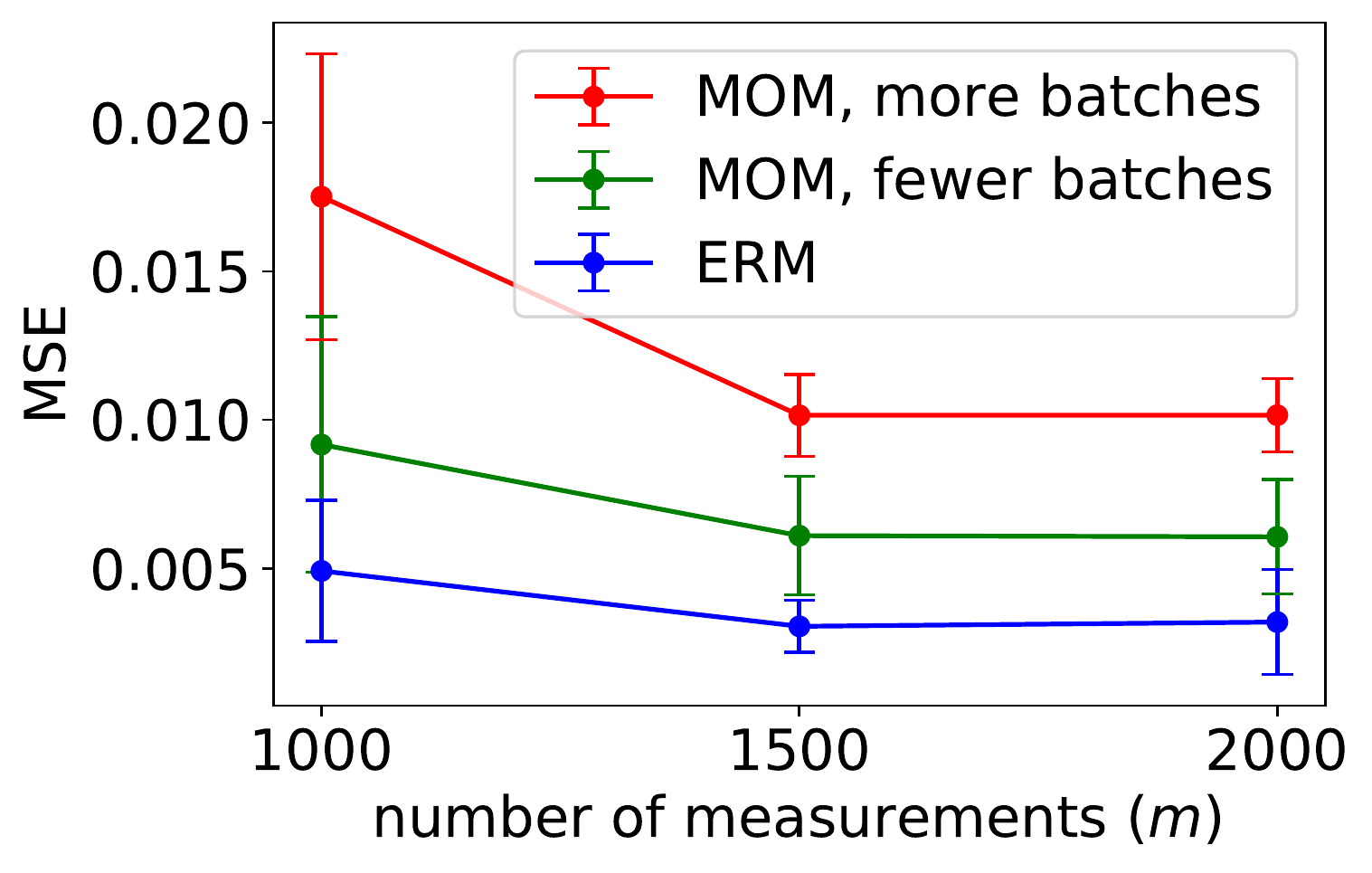}
  \caption{\small ERM vs MOM}
  \label{fig:additional-expts1}
\end{subfigure}
\hfill
\begin{subfigure}{.31\textwidth}
  \centering
  \includegraphics[width=\linewidth, height =.65\textwidth ]{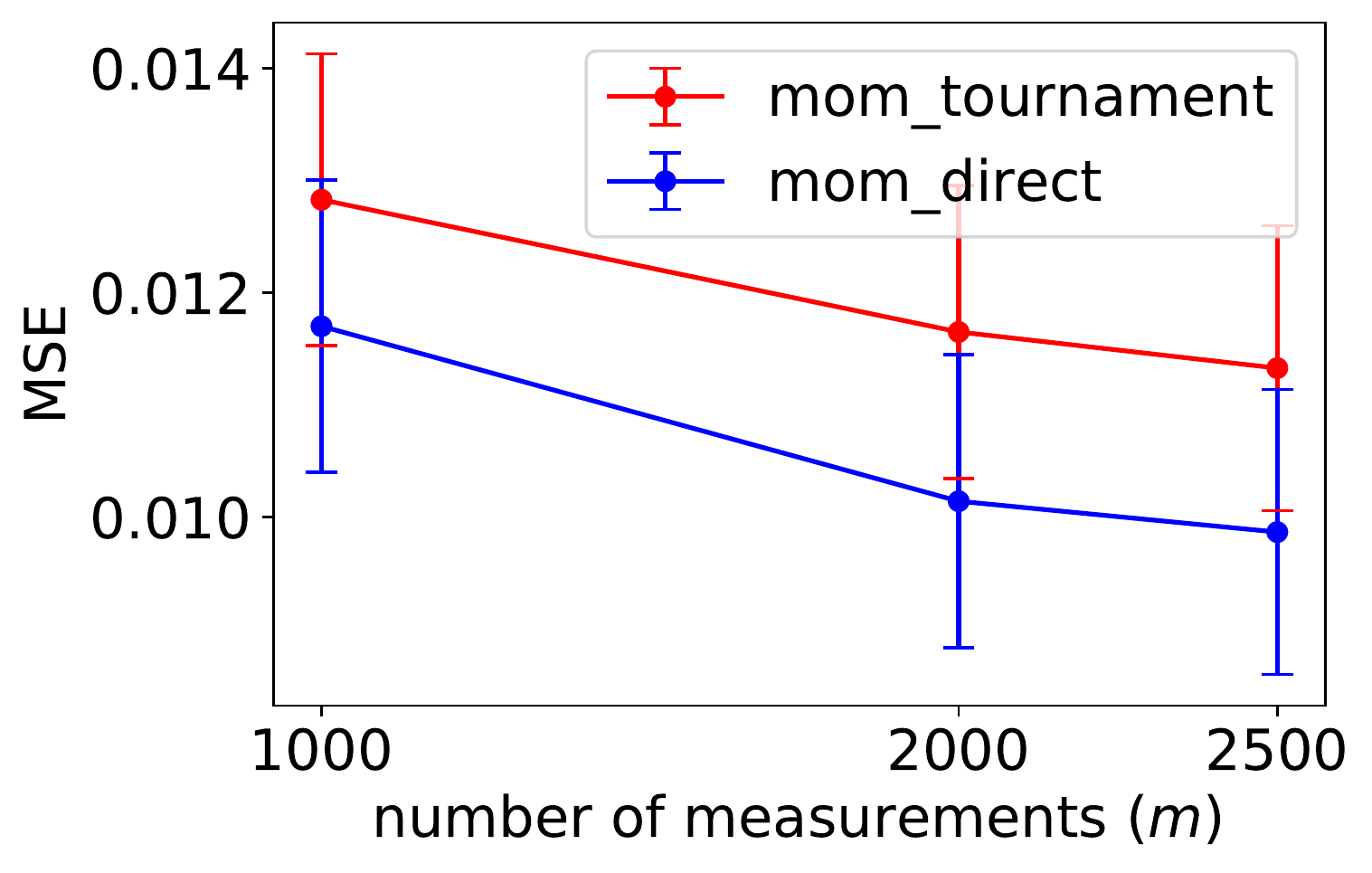}
  \caption{\small MOM on arbitrary corruptions}
  \label{fig:additional-expts2}
\end{subfigure} 
 \hfill
\begin{subfigure}{.31\textwidth}
  \centering
  \includegraphics[width=\linewidth, height =.65\textwidth ]{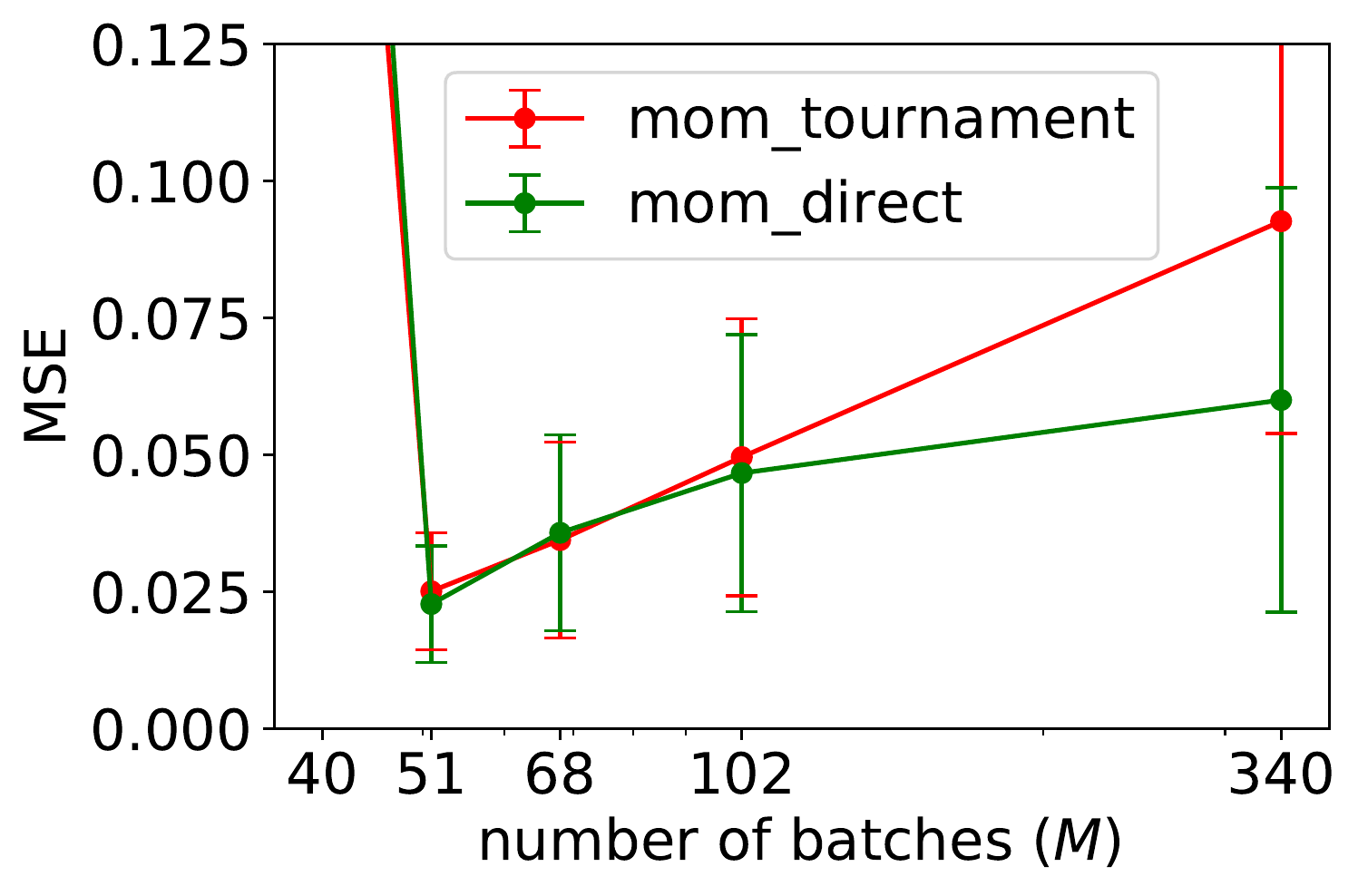}
  \caption{\small MSE vs. Number of batches}
  \label{fig:additional-expts3}
\end{subfigure} 
\caption{\small (a) We compare ERM and MOM by plotting MSE vs number of measurements when the measurements are \emph{sub-Gaussian without corruptions}.  (b) Aggregate statistics for MOM in the presence of corruptions. (c) MSE vs number of batches for MOM on $1000$ heavy-tailed measurements and 20 corruptions. All error bars indicate 95\% confidence intervals. Plots use a PGGAN on CelebA-HQ.} 
\label{fig:additional-expts}
\end{figure}

\section{Conclusion}

The phenomenon observed in~\Cref{fig:celebA} highlights the 
importance of our method. Our work raises several
questions about why the objective we
consider can be minimized, and suggests we need a new
paradigm for analysis that accounts for similar instances
that enjoy empirical success, even though they can be
provably hard in the worst case.

\section{Acknowledgments}
Ajil Jalal and Alex Dimakis have been supported by NSF Grants CCF 1763702,1934932, AF 1901292,
2008710, 2019844 research gifts by NVIDIA, Western Digital, WNCG IAP, computing resources from TACC and the Archie Straiton Fellowship.
Constantine Caramanis and Liu Liu have been supported by NSF Award 1704778 and a grant from the Army Futures Command.

\bibliography{main}
\bibliographystyle{plain}

\newpage

\appendix

\section{Proof of \Cref{lemma: srec heavy bora}}

\begin{lemma*}[\Cref{lemma: srec heavy bora}]
Let $G:\R^k\rightarrow\R^n$ be a $d-$layered neural network with ReLU activations. Let $A\in\R^{m\times n}$ be a matrix with i.i.d rows satisfying \Cref{assm:assumption}. If $m=\Omega\left(\frac{1}{1-\gamma^2}kd\log n\right),$ then with probability $1-e^{-\Omega(m)}$, $A$ satisfies
\[
\frac{1}{m}\|AG(z_1) - AG(z_2)\|^2\geq \gamma^2\|G(z_1) - G(z_2)\|^2 
\]
for all $z_1, z_2\in\R^k.$
\end{lemma*}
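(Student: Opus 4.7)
The plan is to follow the high-level strategy used for Lemma~4.2 in \cite{bora2017compressed}, namely, reduce the uniform lower bound over the range of $G$ to a union bound over a finite family of low-dimensional subspaces; the new ingredient is that the per-subspace lower bound must be proved under only the fourth-moment hypothesis of \Cref{def:heavy-tailed}, for which we use Mendelson's small-ball method in place of sub-Gaussian concentration.

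First I would exploit the piecewise-linear structure of ReLU networks. A $d$-layer ReLU net partitions $\R^k$ into at most $N := n^{O(kd)}$ activation regions (one per sign pattern of the preactivations); on each region $G$ is affine, so its image lies in an affine subspace of dimension at most $k$. Consequently the difference set $\{G(z_1)-G(z_2) : z_1,z_2\in\R^k\}$ is contained in a union of at most $N^2 = n^{O(kd)}$ linear subspaces $\{V_\alpha\}$, each of dimension at most $2k$. It therefore suffices to prove that, with high probability, simultaneously for every such $V_\alpha$ and every $v\in V_\alpha$,
\[
\frac{1}{m}\|Av\|^2 \;\geq\; \gamma^2\,\|v\|^2.
\]

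Next, I would fix one subspace $V$ of dimension $D\leq 2k$ and prove the above bound for $V$ using the small-ball method. The fourth-moment hypothesis gives a Paley--Zygmund type lower bound: for every unit vector $u\in V$ and any $\theta\in(0,1)$,
\[
\Pr\!\left[\,\langle a,u\rangle^2 \geq (1-\theta)\,\|u\|^2\,\right] \;\geq\; \frac{\theta^2}{C^4},
\]
so the small-ball function of $V\cap S^{n-1}$ at a constant scale is bounded below by a positive constant $q=q(C,\theta)$. Mendelson's small-ball inequality (see, e.g., Tropp's formulation in \cite{tropp2015convex}) then yields, with probability at least $1-e^{-\Omega(m)}$,
\[
\inf_{v\in V\cap S^{n-1}} \frac{1}{m}\|Av\|^2 \;\geq\; (1-\theta)\Bigl(q - c\sqrt{D/m} - t\Bigr)^2,
\]
where the $\sqrt{D/m}$ term comes from the Rademacher complexity of the unit sphere in a $D$-dimensional subspace (bounded via an $\epsilon$-net of cardinality $e^{O(D)}$). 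Choosing $\theta$ small and $m \gtrsim D/(1-\gamma^2)$ makes the right-hand side exceed $\gamma^2$.

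Finally, I would union bound the single-subspace estimate over the $n^{O(kd)}$ subspaces $V_\alpha$. The exceptional probability per subspace is $e^{-\Omega(m)}$, so the union bound succeeds provided $m = \Omega\!\bigl(\tfrac{1}{1-\gamma^2}\,kd\log n\bigr)$, as claimed. Translating the resulting uniform lower bound on $\frac{1}{m}\|Av\|^2$ for $v\in\bigcup V_\alpha$ back to the range of $G$ gives the lemma.

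The main obstacle I expect is carrying the dependence on $\gamma$ correctly. The small-ball argument naturally outputs a bound of the form $c(q - O(\sqrt{D/m}))^2$, and the looseness in converting this to $\gamma^2$ must match the $\frac{1}{1-\gamma^2}$ scaling in the stated sample complexity rather than an inferior $\frac{1}{(1-\gamma^2)^2}$ rate. This requires a careful choice of the small-ball scale parameter (together with truncation of the large-value part of $\langle a_i,u\rangle^2$) and the sharp form of the Rademacher deviation bound; everything else (activation-region counting, net arguments, and the union bound) is standard once the per-subspace estimate is in place.
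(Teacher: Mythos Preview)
Your high-level strategy matches the paper's exactly: reduce to a union of at most $n^{O(kd)}$ subspaces of dimension $\le 2k$ via the piecewise-linear structure of ReLU networks, prove a per-subspace restricted eigenvalue bound under the fourth-moment assumption with failure probability $e^{-\Omega(m)}$, and union bound.

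The only substantive difference is the per-subspace tool. You propose Mendelson's small-ball method (Paley--Zygmund for a uniform small-ball probability, then the Rademacher-width deviation bound). The paper instead follows the truncated-quadratic route of Wainwright's Theorem~14.12: it introduces $\phi_\tau(u)=\min(u^2,\tau^2)$, lower-bounds $\E[\phi_\tau(\langle a,v\rangle)]\ge (1-C^4/\tau^2)\|v\|^2$ via Cauchy--Schwarz plus Chebyshev, and then controls the uniform deviation of the empirical process $\tfrac{1}{m}\sum_i \phi_\tau(\langle a_i,v\rangle)$ via symmetrization, Ledoux--Talagrand contraction, and Talagrand's inequality for bounded empirical processes. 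The obstacle you flag---getting the $\tfrac{1}{1-\gamma^2}$ (rather than $\tfrac{1}{(1-\gamma^2)^2}$) dependence---is precisely where the paper's approach is cleaner: choosing $\tau^2\asymp C^4/(1-\gamma^2)$ makes the truncated expectation exceed $\gamma^2+\tfrac14$, and since $\phi_\tau$ is $2\tau$-Lipschitz and bounded by $\tau^2$, the complexity and concentration terms scale as $\tau\sqrt{k/m}$ and $\tau^2/m$, yielding $m=\Omega(\tau^2 k)=\Omega(k/(1-\gamma^2))$ per subspace. The raw small-ball method, by contrast, produces $\xi^2\cdot q(C)$ on the right-hand side with $q(C)\asymp C^{-4}$ fixed, so pushing $\gamma$ toward $1$ is awkward without the truncation you allude to---at which point you have essentially reconstructed the paper's argument.
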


\begin{proof}
The proof is based on \Cref{prop: relu no of subspaces} and 
\Cref{prop: srec single subspace},
which will be introduced as follows.
\Cref{prop: relu no of subspaces} shows that the set $S_G=\{G(z_1) - G(z_2): z_1, z_2 \in\R^k\}$ lies in the range of $e^{O(kd\log n)}$ different $2k-$dimensional subspaces.

\Cref{prop: srec single subspace} guarantees the result for a single subspace with probability $1-e^{-m}$. Since $m = \Omega(kd \log n)$, the proof follows from a union bound over the $e^{O(kd\log n)}$ subspaces in \Cref{prop: relu no of subspaces}.
\end{proof}

\begin{proposition}
\label{prop: relu no of subspaces}
If $G:\R^k\rightarrow\R^n$ is a $d-$layered neural network with ReLU activations, then the set $S_G=\{G(z_1) - G(z_2): z_1, z_2 \in\R^k\}$ lies in the union of $O\left(n^{2kd}\right)$ different $2k-$dimensional subspaces.
\end{proposition}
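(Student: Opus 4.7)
The plan is to exploit the piecewise-linear structure of a ReLU network. For any ReLU generator $G$, the map $z\mapsto G(z)$ is piecewise linear: the domain $\R^k$ is partitioned into finitely many convex polyhedral ``activation regions,'' on each of which every ReLU gate is either ``on'' or ``off'' and therefore $G$ acts as an affine linear map $z\mapsto W_{\pi}z+b_{\pi}$, where $\pi$ denotes the activation pattern across all $d$ layers. In particular, the image of any single activation region lies in a $k$-dimensional affine subspace of $\R^n$ (and hence in a $k$-dimensional linear subspace after subtracting any fixed offset).

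Next I would bound the total number of activation patterns. Each layer has at most $n$ neurons, and each neuron cuts $\R^k$ with a hyperplane; by the standard Zaslavsky/hyperplane-arrangement bound, $n$ hyperplanes in $\R^k$ create at most $O(n^k)$ regions. Composing over $d$ layers, induction shows the total number of activation patterns of $G$ is at most $O(n^{kd})$. (This is precisely the counting used in the original Bora et al.\ argument, which I would simply cite.) So the image $G(\R^k)$ lies in the union of at most $O(n^{kd})$ $k$-dimensional linear subspaces of $\R^n$.

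To pass from $G(\R^k)$ to $S_G=\{G(z_1)-G(z_2):z_1,z_2\in\R^k\}$, I would consider the joint activation pattern $(\pi_1,\pi_2)$ of the pair $(z_1,z_2)$. There are at most $O(n^{kd})\cdot O(n^{kd})=O(n^{2kd})$ such joint patterns. On each joint region, $G(z_1)=W_{\pi_1}z_1+b_{\pi_1}$ and $G(z_2)=W_{\pi_2}z_2+b_{\pi_2}$, so
\[
G(z_1)-G(z_2)=W_{\pi_1}z_1-W_{\pi_2}z_2+(b_{\pi_1}-b_{\pi_2}),
\]
which is the image of the vector $(z_1,z_2)\in\R^{2k}$ under a fixed affine map into $\R^n$, plus a constant. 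Hence on each joint region the set of differences lies in a $2k$-dimensional affine subspace. Taking the union over all $O(n^{2kd})$ joint patterns gives the claim.

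The only mildly delicate step is that $S_G$ is a set of \emph{differences}, so the relevant subspaces are affine translates; but since the statement is phrased in terms of $2k$-dimensional subspaces, one can either (i) absorb the constant offset into the dimension count by noting that affine subspaces of dimension $2k$ lie in linear subspaces of dimension $2k+1$ (which changes $O(n^{2kd})$ only by a constant), or (ii) observe that since $z_1=z_2$ lies in every joint region corresponding to $\pi_1=\pi_2$, each such joint region already contains the origin in its image of differences, so the affine subspace is actually linear. I expect the main (mild) obstacle to be writing the hyperplane counting cleanly across all $d$ layers simultaneously — the linear pieces of layer $i$ depend on the pieces of earlier layers — but this is exactly the bound already packaged in the cited $O(n^{kd})$ region count.
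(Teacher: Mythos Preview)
Your proposal is correct and follows exactly the paper's approach: cite the $O(n^{kd})$ piece count for the range of $G$ from Bora et al.\ (Lemma~8.3 there), then pair up regions for $(z_1,z_2)$ to get $O(n^{2kd})$ joint pieces, each contributing a $2k$-dimensional subspace. The paper's own proof is in fact even terser than yours---it simply invokes the cited lemma and states the squaring---and does not explicitly address the affine-versus-linear distinction you flag; your option~(i) of passing to a $(2k+1)$-dimensional linear hull is the cleanest fix and does not affect any downstream bound.
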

\begin{proof}[Proof of Proposition~\eqref{prop: relu no of subspaces}]
From Lemma 8.3 in~\cite{bora2017compressed}, the set 
$\{G(z): z\in\R^k\}$
lies in the union of $O(n^{kd})$ different $k-$dimensional subspaces. 

This implies that the set 
\[
\{G(z_1) - G(z_2): z_1, z_2\in\R^k\}
\]
lies in the union of $M = O(n^{2kd})$ different $2k-$dimensional subspaces. 

\end{proof}

 \begin{proposition}
 \label{prop: srec single subspace}
 Consider a single $2k-$dimensional subspace given by $S_1=\{Wz: W\in\R^{n\times 2k}, W^T W=I_{2k}, z\in\R^{2k}\}$. 
 Let $A\in\R^{m\times n}$ be a matrix with i.i.d rows drawn from a 
 distribution satisfying Assumption \eqref{assm:assumption}. 
 If $m=O(\frac{C^2 k}{\frac{3}{4}-\gamma^2})$, with probability 
 $1-e^{-\Omega(m)},$ $A$ satisfies
 \[
 \frac{1}{m}\|Av\|^2 \geq \gamma^2 \|v\|^2,\,\forall v\in S_1.
 \]
 \end{proposition}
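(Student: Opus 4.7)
The plan is to apply Mendelson's small-ball method, which is the standard tool for proving restricted-eigenvalue style lower bounds under only a bounded fourth moment. First, I would reduce to a $2k$-dimensional problem: writing $v = Wu$ with $u \in \R^{2k}$ and using $W^\top W = I_{2k}$ gives $\|v\| = \|u\|$ and $\inner{a_i}{v} = \inner{b_i}{u}$, where $b_i := W^\top a_i \in \R^{2k}$. Isotropy and the fourth-moment bound are preserved under this orthogonal projection, so the $b_i$ are i.i.d., isotropic in $\R^{2k}$, with $\E[\inner{b}{u}^4]^{1/4} \leq C \|u\|$. The claim reduces to showing $\inf_{u \in S^{2k-1}} \frac{1}{m}\sum_i \inner{b_i}{u}^2 \geq \gamma^2$ with probability $1-e^{-\Omega(m)}$.

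For a fixed direction $u$, Paley--Zygmund gives the small-ball estimate
\[
\Pr\!\bigl(\inner{b}{u}^2 \geq \theta\|u\|^2\bigr) \;\geq\; \frac{(1-\theta)^2}{C^4}\quad\text{for any }\theta \in (0,1),
\]
which, combined with $\inner{b_i}{u}^2 \geq \theta\|u\|^2 \cdot \mathbf{1}\{\inner{b_i}{u}^2 \geq \theta\|u\|^2\}$, yields
\[
\frac{1}{m}\sum_{i=1}^m \inner{b_i}{u}^2 \;\geq\; \theta\|u\|^2 \cdot \frac{1}{m}\sum_{i=1}^m \mathbf{1}\bigl\{\inner{b_i}{u}^2 \geq \theta\|u\|^2\bigr\}.
\]
The task then collapses to uniformly lower-bounding a \emph{bounded} Bernoulli empirical process indexed by $u$, which sidesteps the heavy-tailed behavior of the unbounded quadratic form.

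For this uniform control, I would appeal to VC theory. The function class $\mathcal{F}=\{b \mapsto \mathbf{1}\{\inner{b}{u}^2 \geq \theta\|u\|^2\} : u \in \R^{2k}\}$ is a union of two halfspace indicators in $\R^{2k}$ and therefore has VC dimension $O(k)$. Standard VC uniform concentration then gives that, for $m$ at the scale stated in the proposition, with probability at least $1 - e^{-\Omega(m)}$ the empirical Bernoulli average exceeds $(1-\theta)^2/(2C^4)$ uniformly in $u$. Combining this with the small-ball lower bound and tuning $\theta$ so that $\theta(1-\theta)^2/(2C^4) \geq \gamma^2$ produces $\frac{1}{m}\|Av\|^2 \geq \gamma^2 \|v\|^2$ for all $v \in S_1$. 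Propagating the dependence on $C$ and $\gamma$ through the VC concentration inequality yields the stated sample complexity.

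The main obstacle is obtaining uniform control over the continuous sphere $S^{2k-1}$ while only assuming four finite moments. A naive $\epsilon$-net on $u$ applied directly to $\inner{b_i}{u}^2$ fails, because this quadratic form is not usefully Lipschitz under heavy tails: small perturbations of $u$ can shift its value enormously on the rare-but-heavy entries of $b_i$, and standard perturbation bounds pick up norms of $b_i$ that we have no sub-Gaussian control on. The small-ball device is precisely what rescues the argument, since it replaces the unbounded quadratic by a $\{0,1\}$-valued indicator whose empirical-process behavior is governed purely by VC combinatorics; the fourth moment is used only once, via Paley--Zygmund, to guarantee the small-ball probability is bounded away from zero.
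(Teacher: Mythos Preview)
Your argument is correct and is precisely Mendelson's small-ball method in its original indicator form. The paper, however, takes the truncated-quadratic variant (following Theorem~14.12 in Wainwright): it replaces $\inner{a_i}{v}^2$ by the bounded surrogate $\phi_\tau(\inner{a_i}{v}) := \min(\inner{a_i}{v}^2,\tau^2)$, lower-bounds $\E[\phi_\tau(\inner{a}{v/\|v\|})] \geq 1 - C^4/\tau^2$ via Cauchy--Schwarz and Chebyshev, and then controls the bounded empirical process $\sup_{v}\bigl|\tfrac{1}{m}\sum_i \phi_\tau(\inner{a_i}{v}) - \E\phi_\tau\bigr|$ by symmetrization, the Ledoux--Talagrand contraction inequality (using that $\phi_\tau$ is $2\tau$-Lipschitz), a direct Rademacher-width computation over the subspace, and finally Talagrand's concentration inequality. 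So both proofs share the same core idea---pass to a bounded proxy so that heavy tails are irrelevant---but you use a $\{0,1\}$ indicator controlled by VC combinatorics, whereas the paper uses a Lipschitz truncation controlled by Rademacher complexity.

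One quantitative point: your last sentence overreaches slightly. The specific form $m \asymp C^2 k/(\tfrac{3}{4}-\gamma^2)$ in the statement (it should really read $\Omega$, and the paper's own proof actually gives $C^4$) comes from choosing $\tau^2 = C^4/(\tfrac{3}{4}-\gamma^2)$ so that $\E\phi_\tau \geq \gamma^2 + \tfrac{1}{4}$; by taking $\tau$ large the paper can reach any $\gamma^2 < \tfrac{3}{4}$. Your indicator route instead yields $\gamma^2 \leq \theta(1-\theta)^2/(2C^4) \leq 2/(27C^4)$, with sample size $m \gtrsim k C^8$ coming from requiring the VC deviation $\sqrt{k/m}$ to be below the small-ball probability $\asymp C^{-4}$. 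This is perfectly sufficient for the downstream Lemma (which only needs \emph{some} constant $\gamma$), but it does not literally reproduce the $\tfrac{3}{4}-\gamma^2$ dependence, so ``yields the stated sample complexity'' should be softened to ``yields a sample complexity of the same order in $k$, with a possibly different polynomial dependence on $C$ and a cap on $\gamma$.''
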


 \begin{proof}
 The proof follows Theorem 14.12 in~\cite{wainwright2019high}, with
 non-trivial modifications for our setting.

 We want to show that for all vectors $v\in S_1$,
 \[
 \frac{1}{m}||Av||^2 \geq \gamma^2 ||v||^2.
 \]

 For $u,\tau\in\R$, define the truncated quadratic function
 \begin{equation}\label{eqn: definition of phi}
 \phi_\tau(u) = \begin{cases}
 & u^2 \quad \text{if } |u|\leq \tau,\\
 & \tau^2 \quad \text{otherwise.}
 \end{cases}
 \end{equation}

 By construction,
 $
 \phi_\tau(\inn{a_i}{v}) \leq \inn{a_i}{v}^2.
 $

 This implies that
 \begin{align}
     &\frac{1}{m}||Av||^2 = \frac{1}{m}\sum_{i=1}^m\inn{a_i}{v}^2 = \frac{\|v\|^2}{m}\sum_{i=1}^m\inn{a_i}{\tfrac{v}{\| v \|}}^2 \\
     &\geq \frac{\|v\|^2}{m}\sum_{i=1}^m \phi_\tau(\inn{a_i}{\tfrac{v}{\| v \|}})\\
     &\geq \| v \|^2\E\left[\frac{\sum_{i=1}^m \phi_\tau(\inn{a_i}{\tfrac{v}{\| v \|}})}{m}\right] -  \|v\|^2\left|\frac{\sum_{i=1}^m \phi_\tau(\inn{a_i}{\tfrac{v}{\| v \|}})}{m} - \E\left[\frac{\sum_{i=1}^m \phi_\tau(\inn{a_i}{\tfrac{v}{\| v \|}})}{m}\right] \right| \\
     &= \| v \|^2\E\left[\phi_\tau(\inn{a}{\tfrac{v}{\| v \|}})\right] -  \|v\|^2\left|\frac{\sum_{i=1}^m \phi_\tau(\inn{a_i}{\tfrac{v}{\| v \|}})}{m} - \E\left[\phi_\tau(\inn{a}{\tfrac{v}{\| v \|}})\right] \right| \\
     &\geq \| v \|^2\E\left[\phi_\tau(\inn{a}{\tfrac{v}{\| v \|}})\right] - \|v\|^2\sup_{v\in S_1}\left|\frac{1}{m}\sum_{i=1}^m \phi_\tau(\inn{a_i}{\tfrac{v}{\| v \|}}) - \E\left[\phi_\tau(\inn{a}{\tfrac{v}{\| v \|}})\right] \right|
 \end{align}

 In \Cref{claim: relu paley} we will show that for $\tau^2 = \frac{C^4}{\frac{3}{4} - \gamma^2}$, we have
     \[\E\left[\phi_\tau(\inn{a}{\tfrac{v}{\| v \|}})\right] \geq(\gamma^2 + \frac{1}{4}).\]

 In \Cref{claim: relu local rademacher} we will show that with overwhelming probability in $m$,
      \[\sup_{v: \|v\|\leq 1}\left|\frac{1}{m}\sum_{i=1}^m \phi_\tau(\inn{a_i}{\tfrac{v}{\| v \|}}) - \E\left[\phi_\tau(\inn{a}{\tfrac{v}{\| v \|}})\right] \right| \leq \frac{1}{4}.\]
   
 These two results together imply that
 \[
 \frac{1}{m}\|Av\|^2 \geq \gamma^2 \|v\|^2.
 \] with overwhelming probability in $m$.
 \end{proof}

 \begin{claim}
 \label{claim: relu paley}

 Assume that the random vector $a$ satisfies Assumption~\eqref{assm:assumption} with constant $C$.
 Let $\phi_\tau$ be the thresholded quadratic function defined in Eqn~\eqref{eqn: definition of phi}.
 For all $v\in\R^n, \|v\|\leq 1,$ we have
 \[\E\left[\phi_\tau(\inn{a}{v})\right] \geq\left(1-\frac{C^4}{\tau^2}\right)\|v\|^2.
 \]
 \end{claim}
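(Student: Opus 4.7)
The plan is to recognize this as a Paley--Zygmund-style truncation bound: the truncated quadratic $\phi_\tau$ differs from $u^2$ only on the tail event $\{|u|>\tau\}$, and we can control that tail contribution using the fourth-moment hypothesis.

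First, I would rewrite the truncated quadratic as
\[
\phi_\tau(u) \;=\; u^2 \;-\; \bigl(u^2-\tau^2\bigr)\mathbf{1}\bigl[|u|>\tau\bigr],
\]
and drop the $\tau^2$ term (which is favorable) to get the pointwise lower bound $\phi_\tau(u)\geq u^2 - u^2\mathbf{1}[|u|>\tau]$. Taking expectation with $u=\inn{a}{v}$,
\[
\E\bigl[\phi_\tau(\inn{a}{v})\bigr] \;\geq\; \E[\inn{a}{v}^2] \;-\; \E\!\left[\inn{a}{v}^2\,\mathbf{1}\bigl[|\inn{a}{v}|>\tau\bigr]\right].
\]

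Next I would apply Cauchy--Schwarz to the tail term, giving
\[
\E\!\left[\inn{a}{v}^2\,\mathbf{1}\bigl[|\inn{a}{v}|>\tau\bigr]\right] \;\leq\; \sqrt{\E[\inn{a}{v}^4]\cdot \Pr\bigl[|\inn{a}{v}|>\tau\bigr]},
\]
and then invoke the two consequences of the fourth-moment assumption of Definition~\ref{def:heavy-tailed}: (i) $\E[\inn{a}{v}^4]\leq C^4(\E[\inn{a}{v}^2])^2$ directly; (ii) by Markov's inequality applied to $\inn{a}{v}^4$, $\Pr[|\inn{a}{v}|>\tau]\leq \E[\inn{a}{v}^4]/\tau^4 \leq C^4(\E[\inn{a}{v}^2])^2/\tau^4$. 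Multiplying these and taking the square root yields the clean bound
\[
\E\!\left[\inn{a}{v}^2\,\mathbf{1}\bigl[|\inn{a}{v}|>\tau\bigr]\right] \;\leq\; \frac{C^4\,(\E[\inn{a}{v}^2])^2}{\tau^2}.
\]

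Finally, I would use isotropy from Assumption~\ref{assm:assumption}, namely $\E[\inn{a}{v}^2]=\|v\|^2$, to conclude
\[
\E[\phi_\tau(\inn{a}{v})] \;\geq\; \|v\|^2 - \frac{C^4\|v\|^4}{\tau^2} \;\geq\; \Bigl(1-\tfrac{C^4}{\tau^2}\Bigr)\|v\|^2,
\]
where the last inequality uses $\|v\|\leq 1$ so that $\|v\|^4\leq\|v\|^2$. There is no real obstacle here; the only thing to be careful about is the direction of the truncation (making sure to drop $\tau^2$ rather than keep it, to get a clean lower bound) and remembering that the heavy-tailed assumption provides exactly the two ingredients—a fourth-moment bound for Cauchy--Schwarz and, via Markov, a tail probability bound—that together produce the quadratic-in-$1/\tau^2$ decay.
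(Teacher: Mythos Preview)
Your proof is correct and essentially identical to the paper's: both rewrite the deficit $\E[\inn{a}{v}^2]-\E[\phi_\tau(\inn{a}{v})]$ as the tail contribution $\E[\inn{a}{v}^2\mathbf{1}\{|\inn{a}{v}|>\tau\}]$, bound it via Cauchy--Schwarz, then use the fourth-moment assumption once directly and once via Markov on $\inn{a}{v}^4$ to obtain the $C^4\|v\|^4/\tau^2$ bound, finishing with $\|v\|^4\le\|v\|^2$.
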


 \begin{proof}
 \begin{align}\label{proof: Ephi part 1}
     \|v\|^2 - \E\left[\phi_\tau(\inn{a}{v})\right] =& \E\left[\inn{a}{v}^2\right] - \E\left[\phi_\tau(\inn{a}{v})\right]\\
     =&  \E\left[(\inn{a}{v}^2 - \tau^2)1_{\{|\inn{a}{v}|\geq \tau\}}\right]\\
     \leq& \E\left[\inn{a}{v}^2 1_{\{|\inn{a}{v}|\geq \tau\}}\right]
 \end{align}

 By the Cauchy-Schwartz inequality,
 \begin{align}\label{eqn: holder}
 &\E\left[\inn{a}{v}^2 1_{\{|\inn{a}{v}|\geq \tau\}}\right] \leq\left(\E\left[\inn{a}{v}^4\right]\right)^{\frac{1}{2}}\left(\Pr\left[|\inn{a}{v}|\geq \tau\right]\right)^{\frac{1}{2}}
 \end{align}

 From Assumption \eqref{assm:assumption}, we have
 \[
 \left(\E\left[\inn{a}{v}^4\right]\right)^{\frac{1}{2}}\leq C^2\E\left[\inn{a}{v}^2\right].
 \]

 From Chebyshev's inequality and Assumption \eqref{assm:assumption}, we have
 \begin{align}
 \left(\Pr\left[|\inn{a}{v}|\geq \tau\right]\right)^{\frac{1}{2}}
 &\leq \left(\frac{\E\left[|\inn{a}{v}|^4\right]}{\tau^4}\right)^{\frac{1}{2}} \leq \left(\frac{C^4\E\left[|\inn{a}{v}|^2\right]^2}{\tau^4}\right)^{\frac{1}{2}} = \frac{C^2 \E\left[|\inn{a}{v}|^2\right]}{\tau^2}.
 \end{align}

 Substituting the above two inequalities into \cref{eqn: holder}, we get
 \begin{align}
   \E\left[\inn{a}{v}^2 1_{\{|\inn{a}{v}|\geq \tau\}}\right] & \leq \frac{C^4 \E\left[\inn{a}{v}^2\right]^2}{\tau^{2}} \\
   &= \frac{C^4\|v\|^4}{\tau^{2}}  \leq \frac{C^{4}\|v\|^2 }{\tau^2}.
 \end{align}

 Substituting into Eqn~\eqref{proof: Ephi part 1},
 \begin{align}
     \|v\|^2 - \E\left[\phi_\tau(\inn{a}{v})\right] \leq \frac{C^4\|v\|^2}{\tau^2},
 \end{align}
 which completes the proof.
 \end{proof}

 \begin{claim}
 \label{claim: relu local rademacher}
 For an orthonormal matrix $U\in\R^{n\times 2k}$, let $S:=\{v: v=Uz, \|v\|=1\}$. Let $\phi_\tau$ be the function defined in  \Cref{prop: srec single subspace}. For $m=\Omega\left(\tau^2 k\right),$ we have
      \[\sup_{v\in S}\left|\frac{1}{m}\sum_{i=1}^m \phi_\tau(\inn{a_i}{v}) - \E\left[\phi_\tau(\inn{a}{v})\right] \right| \leq \frac{1}{4}.\]
     with probability $1-e^{-\Omega (m)}.$
 \end{claim}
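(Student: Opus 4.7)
}
The plan is a standard three-step concentration argument: symmetrize, contract, and apply bounded differences. Write $Z := \sup_{v\in S}\bigl|\frac{1}{m}\sum_{i=1}^m \phi_\tau(\inn{a_i}{v}) - \E[\phi_\tau(\inn{a}{v})]\bigr|$. Since $0\le \phi_\tau\le \tau^2$, changing any single $a_i$ changes the empirical average by at most $\tau^2/m$. McDiarmid's bounded differences inequality therefore gives $\Pr[Z\ge \E Z + t]\le \exp(-2mt^2/\tau^4)$. Because $\tau^2 = C^4/(3/4-\gamma^2)$ is a constant determined by the moment bound $C$ and the target $\gamma$, choosing $t$ a fixed fraction of $1/4$ already yields the claimed $e^{-\Omega(m)}$ tail. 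The remaining task is then to show that the expectation $\E Z$ itself is at most $1/8$ (say) once $m\gtrsim \tau^2 k$.

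To bound $\E Z$, I would apply the standard symmetrization inequality to introduce i.i.d.\ Rademacher signs $\{\epsilon_i\}$:
\[
\E Z \;\le\; 2\,\E_{a,\epsilon}\Bigl[\sup_{v\in S}\Bigl|\tfrac{1}{m}\sum_{i=1}^m \epsilon_i\,\phi_\tau(\inn{a_i}{v})\Bigr|\Bigr].
\]
Next, observe that $\phi_\tau$ is $2\tau$-Lipschitz on $\R$ (its derivative is $2u$ for $|u|\le\tau$ and $0$ otherwise) with $\phi_\tau(0)=0$. The Ledoux--Talagrand contraction principle, applied conditionally on the $a_i$'s, therefore gives
\[
\E_\epsilon\Bigl[\sup_{v\in S}\Bigl|\tfrac{1}{m}\sum_{i=1}^m \epsilon_i\,\phi_\tau(\inn{a_i}{v})\Bigr|\Bigr] \;\le\; 2\tau\,\E_\epsilon\Bigl[\sup_{v\in S}\Bigl|\tfrac{1}{m}\sum_{i=1}^m \epsilon_i\,\inn{a_i}{v}\Bigr|\Bigr].
\]
This reduces the problem to the Rademacher complexity of the linear class $v\mapsto \inn{a}{v}$ over $S$.

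The final step is to compute this linear Rademacher complexity using the parametrization $v=Uz$, $\|z\|=1$, and the isotropy of $a$. Then
\[
\sup_{v\in S}\Bigl|\tfrac{1}{m}\sum_{i=1}^m \epsilon_i\,\inn{a_i}{v}\Bigr| \;=\; \tfrac{1}{m}\bigl\|U^\top\textstyle\sum_{i=1}^m \epsilon_i a_i\bigr\|,
\]
and Jensen together with the independence of $\{\epsilon_i, a_i\}$ and $\E[a_i a_i^\top]=I$ yields
\[
\E\Bigl[\tfrac{1}{m}\bigl\|U^\top\textstyle\sum_{i=1}^m \epsilon_i a_i\bigr\|\Bigr]
\;\le\; \tfrac{1}{m}\bigl(\E\bigl\|U^\top\textstyle\sum_i \epsilon_i a_i\bigr\|^2\bigr)^{1/2}
\;=\; \tfrac{1}{m}\sqrt{m\cdot\tr(U^\top U)} \;=\; \sqrt{\tfrac{2k}{m}}.
\]
Chaining the three inequalities gives $\E Z \le 4\tau\sqrt{2k/m}$, which is below $1/8$ once $m\ge c\,\tau^2 k$ for a sufficiently large universal constant $c$. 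Combining this with the McDiarmid tail bound completes the proof.

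\paragraph{What I expect to be delicate.} The routine steps (symmetrization, contraction, expectation of a norm via Jensen) are textbook. The main subtlety is that the $e^{-\Omega(m)}$ probability has $\tau$ hidden in the constant: McDiarmid natively delivers $e^{-\Omega(m/\tau^4)}$, and we rely crucially on $\tau$ being a fixed constant (fixed by the choice of $C$ and $\gamma$ in \Cref{claim: relu paley}) to absorb it. A secondary point worth double-checking is that the isotropy assumption in \Cref{assm:assumption}---which enters only as $\E[a a^\top] = I$---is what forces $\E\|U^\top \sum_i \epsilon_i a_i\|^2 = 2mk$; nothing heavier (no higher moments, no sub-Gaussianity) is needed here, which is consistent with the overall theme of the paper.
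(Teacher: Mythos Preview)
Your proof is correct and follows essentially the same approach as the paper: bound $\E Z$ by symmetrization, Ledoux--Talagrand contraction (using that $\phi_\tau$ is $2\tau$-Lipschitz with $\phi_\tau(0)=0$), and the computation $\E\|U^\top A^\top \epsilon\|^2 = 2km$ from isotropy, then apply a concentration inequality. The only difference is in the last step: the paper uses Talagrand's inequality for bounded empirical processes (giving a tail of order $e^{-\Omega(m/\tau^2)}$), whereas you use McDiarmid's bounded differences inequality (giving $e^{-\Omega(m/\tau^4)}$); since $\tau$ is a fixed constant determined by $C$ and $\gamma$, as you correctly observe, both deliver the stated $e^{-\Omega(m)}$.
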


 \begin{proof}
 Define 
 \[
 Z_m = \sup_{v\in S}\left|\frac{1}{m}\sum_{i=1}^m \phi_\tau(\inn{a_i}{v}) - \E\left[\phi_\tau(\inn{a}{v})\right] \right|.
 \]

 We will first show that 
 \[\E_A\left[Z_m\right]\leq \frac{1}{8}\]
 for large enough $m$. Then we use Talagrand's inequality~\cite{talagrand1991new} to show that \[\Pr\left[Z_m\geq \E\left[Z_m\right] + \frac{1}{8}\right]\leq e^{-\Omega(m)},\]
 using which we can conclude that $Z_m\leq \frac{1}{4}$ with probability $ 1-e^{-\Omega(m)}.$

 By the symmetrization inequality, we have
 \[
 \E_{A}\left[Z_m\right] \leq 2\E_{\epsilon,A}\left[\sup_{v\in S}\left|\frac{1}{m}\sum_{i=1}^m \epsilon_i \phi_\tau(\inn{a_i}{v})\right|\right]
 \]
 where $\{\epsilon_i\}_{i=1}^m$ are i.i.d Bernoulli $\pm 1$ random variables.

 Since $\phi_\tau$ is a Lipschitz function with Lipschitz constant $2\tau$, we can apply the Ledoux-Talagrand contraction inequality~\cite{ledoux2013probability} (refer to \Cref{sec:background} for the sake of completeness) to get
 \begin{align}
 &2\E_{\epsilon,A}\left[\sup_{v\in S}\left|\frac{1}{m}\sum_{i=1}^m \epsilon_i \phi_\tau(\inn{a_i}{v})\right|\right] \nonumber\\
 \leq & 8\tau \E_{\epsilon,A}\left[\sup_{v\in S}\left|\frac{1}{m}\sum_{i=1}^m \epsilon_i \inn{a_i}{v}\right|\right]\\
 =&8\tau\E_{\epsilon,A}\left[\sup_{v\in S }\left|\frac{1}{m}\epsilon^T Av\right|\right].
 \end{align}

 Since $S:=\{v: v=Uz, \|v\|=1\}$, we have 
 \begin{align}
     &8\tau\E_{\epsilon,A}\left[\sup_{v\in S }\left|\frac{1}{m}\epsilon^T Av\right|\right]\\
     =& 8\tau\E_{\epsilon,A}\left[\sup_{z:\|z\|=1}\left|\frac{8\tau}{m}\epsilon^T AUz\right|\right]\\
     \leq& \frac{8\tau}{m} \E_{\epsilon,A}\left[\|\epsilon^T AU\|_2\right]\\
     \leq & \frac{8\tau}{m} \sqrt{\E_{\epsilon,A}\left[\|\epsilon^T AU\|_2^2\right]}
 \end{align}
 The third line follows from the Cauchy-Schwartz inequality, and the fourth line follows from Jensen's inequality.

 Notice that 
 \[
 \E_{\epsilon}\left[\|\epsilon^T AU\|_2^2\right] = \text{trace}(AU U^T A^T) = \text{trace}(U^T A^T AU)  
 \]

 Since $U^TU = I_{2k}$, we have 
 \begin{align}
 &\E_{\epsilon,A}\left[\|\epsilon^T AU\|_2^2\right] = \E_A\left[\text{trace}(U^T A^T AU)  \right] \\
 &= \sum_{i=1}^m \E_{a_i}\text{trace}(U^T a_i a_i^T U) \\
 &= \sum_{i=1}^m \text{trace}(U^T I_n U) = m\text{ trace}(I_{2k})= 2km.
 \end{align}

 Putting this together, and choosing $m=\Omega(\tau^2 k),$ we have
 \[
 \E_A\left[Z_m\right] \leq 8\tau \sqrt{\frac{2k}{m}}\leq \frac{1}{8}.
 \]

 We now need to show that
 \[\Pr\left[Z_m\geq \E\left[Z_m\right] + \frac{1}{8}\right]\leq e^{-\Omega(m)}.\]

 By construction, $\phi_\tau(\inn{a_i}{v})\leq \tau^2$ for all $v\in S$.

 In order to apply Talagrand's inequality, we need to bound $$\sigma^2=\sup_{v\in S}\E\left[\left(\phi_\tau(\inn{a}{v})-\E\left[\phi_\tau(\inn{a}{v})\right]\right)^2\right].$$ We can bound this by
 \begin{align}
     \text{var}(\phi_\tau(\inn{a}{v}) &\leq \E \left[\phi^2_\tau(\inn{a}{v})\right] \\
     &\leq \tau^2\E \left[\phi_\tau(\inn{a}{v})\right] \leq \tau^2
 \end{align}

 Applying Talagrand's inequality, we have
 \[\Pr\left[Z_m\geq \E\left[Z_m\right] + t\right]\leq C_1 \exp\left({-\frac{C_2 mt^2}{\tau^2 + \tau^2 t }}\right).\]

 Setting $t=\frac{1}{8},m=\Omega(\tau^2 k)$ we get 
 \[\Pr[Z_m \geq \frac{1}{4}] \leq \Pr\left[Z_m\geq \E\left[Z_m\right] + \frac{1}{8}\right]\leq C_1 e^{-\frac{C_2 m}{\tau^2}}=e^{-\Omega(m)}.\]

 This concludes the proof.
 \end{proof}

\section{Proof of~\Cref{lemma: obj min value}}
\begin{lemma}
Let $M$ denote the number of batches. Then with probability $ 1- e^{-\Omega(M)},$ the objective in~\Cref{equ:MOM_minmax} satisfies
\begin{align}
    \min_{z\in \R^k}\max_{z'\in R^k} \underset{1\leq j \leq M}{\mathrm{median }} \ell_{B_j}(z) - \ell_{B_j}(z') \leq 4 \sigma^2.
\end{align}
\end{lemma}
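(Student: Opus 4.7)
\medskip
\noindent\textbf{Proof plan.} The plan is to witness the outer minimum with the choice $z=z^*$ and then bound the resulting $\max_{z'}\mathrm{median}_j$ by a term that is entirely determined by the noise. First, I would substitute $y_{B_j}=A_{B_j}G(z^*)+\eta_{B_j}$ from Assumption~\ref{assm:assumption} into the definition of $\ell_{B_j}$ to obtain
\[
\ell_{B_j}(z^*) \;=\; \frac{1}{b}\,\|\eta_{B_j}\|^2,
\]
which is nonnegative and does not depend on $z'$. Since $\ell_{B_j}(z')\ge 0$ for every $z'$, this gives the pointwise bound $\ell_{B_j}(z^*)-\ell_{B_j}(z')\le \ell_{B_j}(z^*)$ for every batch $j$ and every $z'$.

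Next, I would invoke the monotonicity of the median: if $a_j\le b_j$ for all $j$, then $\mathrm{median}_j(a_j)\le \mathrm{median}_j(b_j)$. Applying this coordinate-wise and then taking the maximum over $z'$ (which is harmless since the right-hand side is independent of $z'$), I get
\[
\min_{z}\max_{z'}\,\underset{1\le j\le M}{\mathrm{median}}\bigl(\ell_{B_j}(z)-\ell_{B_j}(z')\bigr)
\;\le\;
\underset{1\le j\le M}{\mathrm{median}}\;\ell_{B_j}(z^*).
\]
So it remains to show that the median of $\{\ell_{B_j}(z^*)\}_{j=1}^M=\{\tfrac{1}{b}\|\eta_{B_j}\|^2\}_{j=1}^M$ is at most $4\sigma^2$ with probability $1-e^{-\Omega(M)}$.

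For this, I would use a one-line Markov-plus-Chernoff argument. By Assumption~\ref{assm:assumption}, the $\eta_i$ are i.i.d.\ with $\mathbb{E}[\eta_i^2]=\sigma^2$, so $\mathbb{E}[\ell_{B_j}(z^*)]=\sigma^2$ for each batch, and the batches are mutually independent. Markov's inequality gives
\[
\Pr\!\left[\ell_{B_j}(z^*)>4\sigma^2\right] \;\le\; \tfrac{1}{4}.
\]
Define $X_j=\mathbb{1}\{\ell_{B_j}(z^*)>4\sigma^2\}$; the $X_j$'s are i.i.d.\ Bernoulli random variables with mean at most $1/4$. The event $\mathrm{median}_j\,\ell_{B_j}(z^*)>4\sigma^2$ is contained in $\{\sum_j X_j>M/2\}$, which by a standard Chernoff bound has probability at most $e^{-\Omega(M)}$ (the gap from mean $\le M/4$ to $M/2$ is constant). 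Chaining these three steps together yields the claimed bound.

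There is no serious obstacle; this lemma is essentially a sanity check that the min-max-median objective has a small value at $z=z^*$. The only care needed is in the median-monotonicity step and in getting the Chernoff constants right so that the deviation probability is genuinely $e^{-\Omega(M)}$, both of which are routine.
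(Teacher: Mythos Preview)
Your proposal is correct and matches the paper's proof essentially step for step: plug in $z=z^*$, use $\ell_{B_j}(z^*)-\ell_{B_j}(z')\le \ell_{B_j}(z^*)=\tfrac{1}{b}\|\eta_{B_j}\|^2$, apply Markov to get $\Pr[\ell_{B_j}(z^*)\ge 4\sigma^2]\le 1/4$, and then Chernoff on the indicator sum to conclude the median is at most $4\sigma^2$ with probability $1-e^{-\Omega(M)}$. The only cosmetic difference is that you phrase the last step via ``monotonicity of the median'' while the paper phrases it as ``at least $M/2$ batches satisfy the bound''; these are the same observation.
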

\begin{proof}
By setting $z \leftarrow z^*$, for all $z'\in \R^k,$ for any $j\in [M]$, we have
\begin{align}
\ell_{B_j}(z^*) - \ell_{B_j }(z') & \leq \ell_{B_j}(z^*) = \frac{1}{b}\norm{ \eta_{B_j}}^2. 
\end{align}

Since the noise is i.i.d. and has variance $\sigma^2$, we have $\E\left[ \ell_{B_j}(z^*) \right] = \E \frac{1}{b}\norm{\eta_{B_j}}^2 =\sigma^2.$

For batch $j\in [M],$ define the indicator random variable
\[
Y_j = \bm{1}\left\{ \ell_{B_j}(z^*) \geq 4 \sigma^2 \right\}.
\]

By Markov's inequality, since $\E [\ell_{B_j}(z^*)] = \sigma^2,$ we have
\begin{align}
    \Pr \left[ Y_j = 1 \right] \leq \frac{1}{4 } \Rightarrow \E \left[\sum_{j=1}^M Y_j \right] \leq \frac{M}{4}.
\end{align}

By the Chernoff bound, 
\begin{align}
    \Pr \left[ \sum_{j=1}^M Y_j \geq \frac{M}{2}\right] & \leq \Pr \left[ \sum_{j=1}^M Y_j \geq 2 \E[ \sum_{j=1}^M Y_j]\right] \leq e^{-\Omega(M)}.
\end{align}

The above inequality implies that with probability $ 1 - e^{-\Omega(M)},$ for all $z'\in \R^k,$ at least $\frac{M}{2}$ batches satisfy 
\[
\ell_{B_j} (z^*) - \ell_{B_j} (z') \leq 4 \sigma^2.
\]

This gives
\begin{align}
    \min_{z\in \R^k}\max_{z'\in R^k} \underset{1\leq j \leq M}{\mathrm{median }}( \ell_{B_j}(z) - \ell_{B_j}(z')) \leq 4 \sigma^2.
\end{align}

\end{proof}

\section{Proof of \Cref{lemma: srec}}
\begin{lemma*}[\Cref{lemma: srec}]
Let $G: \R^k \to \R^n$ be a generative model from a $d$-layer neural network using ReLU activations. Let $A\in\R^{m\times n}$ be a matrix with i.i.d rows satisfying~\Cref{assm:assumption}. 
Let the batch size $b=\Theta\left(C^4\right)$, let the number of batches satisfy $M=\Omega(kd\log n)$, and let $\gamma$ be a constant which depends on the moment constant $C$. 
Then with probability at least $1-e^{-\Omega(m)},$ for all $z_1, z_2\in \R^k$ there exists a set $J\subseteq \left[M\right]$ of cardinality at least $0.9M$ such that
\[
\frac{1}{b}\|A_{B_j}(G(z_1) - G(z_2)) \|^2\geq \gamma^2\|G(z_1) - G(z_2)\|^2\;, \forall j\in J.
\]

\end{lemma*}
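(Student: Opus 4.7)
The plan is to mirror the strategy of \Cref{lemma: srec heavy bora}, upgrading its single-pass concentration argument to a batchwise one. By \Cref{prop: relu no of subspaces}, every difference $G(z_1) - G(z_2)$ lies in one of $O(n^{2kd})$ subspaces of dimension $2k$, so it suffices to show that for any fixed $2k$-dimensional subspace $V$, with probability $1 - e^{-\Omega(m)}$ every unit vector $v \in V$ has at least $0.9M$ batches satisfying $\frac{1}{b}\|A_{B_j}v\|^2 \geq \gamma^2$. With $M = \Omega(kd\log n)$, the union bound over subspaces is absorbed.

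Within a single subspace, I would import the truncated quadratic $\phi_\tau$ with $\tau^2 = \Theta(C^4/(3/4 - \gamma^2))$. \Cref{claim: relu paley} gives $\E\phi_\tau(\inn{a}{v}) \geq \gamma^2 + 1/4$ for unit $v$. Let $T_j(v) = \frac{1}{b}\sum_{i \in B_j}\phi_\tau(\inn{a_i}{v})$; since $0 \leq \phi_\tau \leq \tau^2$, the per-batch variance obeys $\mathrm{Var}(T_j(v)) \leq \tau^4/b$, and taking $b = \Theta(C^4)$ large enough, Chebyshev yields $\Pr[T_j(v) \geq \gamma^2 + 1/8] \geq 0.95$. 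Next introduce a $[0,1]$-valued Lipschitz surrogate $\psi$ that is $1$ for $t \geq \gamma^2 + 1/8$, $0$ for $t \leq \gamma^2$, and linear in between ($O(1)$-Lipschitz). Setting $U_j(v) = \psi(T_j(v))$, we have $\E U_j(v) \geq 0.95$ and $U_j(v) \in [0,1]$, so Hoeffding's inequality produces, for each fixed $v$, $\Pr[\frac{1}{M}\sum_j U_j(v) < 0.92] \leq e^{-\Omega(M)}$.

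The main obstacle is uniformizing over the unit sphere of $V$. I would take an $\eta$-net $N$ of this sphere with $\eta = \Theta(1/\tau)$, of cardinality $e^{O(k\log C)}$, and union-bound Hoeffding over $N$; since $M = \Omega(kd\log n)$, every net point is covered with probability $1 - e^{-\Omega(m)}$. To extend from $N$ to the sphere, I would use the empirical Lipschitz estimate
\[
\tfrac{1}{M}\sum_j \bigl|U_j(v) - U_j(v')\bigr| \leq \tfrac{16\tau}{m}\sum_i \bigl|\inn{a_i}{v-v'}\bigr| \leq 16\tau\sqrt{\tfrac{1}{m}\sum_i \inn{a_i}{v-v'}^2},
\]
and control the empirical second moment $\frac{1}{m}\sum_i \inn{a_i}{w}^2$ uniformly in $w \in V$ by proving an upper-bound companion to \Cref{lemma: srec heavy bora}; this follows verbatim from the truncation-and-Ledoux-Talagrand template of \Cref{claim: relu local rademacher}, using $\E\phi_\tau(\inn{a}{w}) \leq \|w\|^2$ on the upper side. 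Choosing $\eta$ small enough makes the Lipschitz deviation at most $0.02$, so $\frac{1}{M}\sum_j U_j(v) \geq 0.90$ for every $v$, implying that at least $0.9M$ batches satisfy $T_j(v) \geq \gamma^2$ and hence $\frac{1}{b}\|A_{B_j}v\|^2 \geq T_j(v) \geq \gamma^2$.

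The delicate part is this uniform extension: heavy tails preclude worst-case operator-norm bounds on individual $A_{B_j}$, so the Lipschitz argument must be run at the pooled empirical level, and the numerical constants (variance for Chebyshev, Lipschitz slack, net scale, Hoeffding deviation) must be balanced so that the final batch count is indeed at least $0.9M$. Once this is in place, a union bound over the $O(n^{2kd})$ subspaces combined with $M = \Omega(kd\log n)$ yields the stated probability $1 - e^{-\Omega(m)}$.
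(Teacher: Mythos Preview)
Your high-level architecture matches the paper's: reduce to the $e^{O(kd\log n)}$ subspaces via \Cref{prop: relu no of subspaces}, prove a batchwise lower bound on a single $2k$-dimensional subspace, and union-bound. The per-batch argument for a \emph{fixed} unit vector (Chebyshev on $T_j(v)$, then Hoeffding across batches) is fine and essentially parallel to the paper's use of a small-ball / Paley--Zygmund step followed by Bernoulli concentration.

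The gap is in the net-to-sphere extension. Your Lipschitz estimate
\[
\tfrac{1}{M}\sum_j |U_j(v) - U_j(v')| \;\leq\; \tfrac{16\tau}{m}\sum_i |\inn{a_i}{v-v'}| \;\leq\; 16\tau\sqrt{\tfrac{1}{m}\sum_i \inn{a_i}{v-v'}^2}
\]
is correct, but to make the right-hand side small uniformly over $v$ you need a \emph{uniform upper bound} on the pooled empirical quadratic form $\tfrac{1}{m}\|Aw\|^2$ (or on $\tfrac{1}{m}\|Aw\|_1$) over the subspace, with probability $1-e^{-\Omega(m)}$. The truncation template of \Cref{claim: relu local rademacher} does \emph{not} deliver this: it shows $\tfrac{1}{m}\sum_i \phi_\tau(\inn{a_i}{w})$ concentrates, which yields a lower bound on $\tfrac{1}{m}\|Aw\|^2$ (since $\phi_\tau(u)\leq u^2$) but never an upper bound. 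With only a fourth-moment assumption the untruncated process $\sup_{w}\tfrac{1}{m}\|Aw\|^2$ (and likewise $\sup_w \tfrac{1}{m}\|Aw\|_1$) is a supremum of \emph{unbounded} summands, so neither Talagrand's inequality for bounded empirical processes nor the bounded-differences inequality applies, and you cannot get exponential concentration this way. Pooling over all $m$ samples helps the expectation but not the tail.

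The paper sidesteps exactly this obstruction. Instead of controlling $\|A_{B_j}(v-\tilde v)\|$ through an operator-norm-type quantity, it controls the \emph{indicator} process
\[
\Psi \;=\; \sup_{\|w\|\leq \eps,\; w\in V}\; \tfrac{1}{m}\sum_{i=1}^m \bm{1}\Bigl\{|\inn{a_i}{w}| \geq c\Bigr\},
\]
which is a supremum of $\{0,1\}$-valued terms. Bounded differences then gives $\Psi \leq \E[\Psi] + O(1/\sqrt{m})$ with probability $1 - e^{-\Omega(M)}$, and symmetrization plus the crude bound $\bm{1}\{|u|\geq c\}\leq |u|/c$ bounds $\E[\Psi]$ by $O(\eps)$. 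The conclusion is that at most $0.05M$ \emph{samples} (hence at most $0.05M$ batches) can have any coordinate with $|\inn{a_i}{v-\tilde v}|$ above the threshold; on the remaining $0.95M$ batches every sample is small, so $\tfrac{1}{\sqrt b}\|A_{B_j}(v-\tilde v)\|$ is small termwise. Intersecting with the $0.95M$ good batches for $\tilde v$ gives the final $0.9M$. If you want to repair your argument, replace the $\ell_2$ Lipschitz extension by this indicator-counting device; everything else in your outline then goes through.
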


\begin{proof}
  \Cref{prop: relu no of subspaces} shows that the set $S_G=\{G(z_1) - G(z_2): z_1, z_2 \in\R^k\}$ lies in the range of $e^{O(kd\log n)}$ different $2k-$dimensional subspaces.

  \Cref{prop: mom srec single subspace} guarantees the result for a single subspace with probability $1-e^{-\Omega(M)}$. Since $M = \Omega(kd \log n)$ and the batch size is constant which depends on the moment constant $C$, the lemma follows from a union bound over the $e^{O(kd\log n)}$ subspaces in \Cref{prop: relu no of subspaces}.
\end{proof}

 \begin{proposition}
 \label{prop: mom srec single subspace}
 Consider a single $2k-$dimensional subspace given by $S=\{Wz: W\in\R^{n\times 2k}, W^T W=I_{2k}, z\in\R^{2k}\}$. 
 Let $A\in\R^{m\times n}$ be a matrix with i.i.d rows drawn from a distribution satisfying Assumption \eqref{assm:assumption} with constant $C$.
 If the batch size $b = O(C^4)$ and the number of batches satisfies $ M = \Omega\left( k \log \frac{1}{\eps} \right) $, with probability $1-e^{-\Omega(M)},$ for all $x \in S$, there exist a subset of batches $J_x \subseteq [M]$ with $|J_x|\geq 0.90M$ such that 
 \[
 \frac{1}{b}\|A_{B_j} x\|^2 \geq \gamma^2 \|x\|^2 \,\forall j\in J_x,
 \]
 where $\gamma = \Theta\left(\frac{1}{C^2}\right)$ is a constant that depends on the moment constant $C$.
 \end{proposition}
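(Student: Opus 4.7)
The plan is to establish a uniform small-ball lower bound over the unit sphere of $S$ by combining a pointwise per-batch Chebyshev estimate, a Chernoff bound across batches, and an $\epsilon$-net union bound. The key technical device is the truncated quadratic $\phi_\tau$ from \cref{eqn: definition of phi}, which converts heavy-tailed quadratic forms into bounded random variables while retaining a $\Theta(\|x\|^2)$ mean via \Cref{claim: relu paley}. Since $S$ is only $2k$-dimensional, I would first push forward to $\R^{2k}$: writing $x = Uz$ with $U^\top U = I_{2k}$, the vectors $\tilde a_i := U^\top a_i$ are iid isotropic in $\R^{2k}$ and inherit the fourth-moment constant $C$, while $\E\|\tilde a_i\|^2 = 2k$ rather than $n$. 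This reduction is crucial for the final net step.

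For a fixed unit $z \in \R^{2k}$, taking $\tau^2 = 2C^4$ in \Cref{claim: relu paley} yields $\E[\phi_\tau(\langle \tilde a, z\rangle)] \geq 1/2$, and the elementary inequality $\phi_\tau^2 \leq \tau^2 \phi_\tau$ gives $\mathrm{Var}(\phi_\tau(\langle \tilde a, z\rangle)) \leq \tau^2$. Define the batch average $g_j(z) := \frac{1}{b}\sum_{i \in B_j} \phi_\tau(\langle \tilde a_i, z\rangle)$. Chebyshev then gives $\Pr[g_j(z) < 1/4] \leq 16\tau^2/b \leq 1/20$ for a sufficiently large $b = \Theta(C^4)$. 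Because the indicators $\bm{1}[g_j(z) < 1/4]$ are independent across $j$, a Chernoff bound shows that at most $0.1M$ of them fail with probability $1 - e^{-\Omega(M)}$. On the remaining $0.9M$ batches, $\tfrac{1}{b}\|A_{B_j}x\|^2 \geq g_j(z) \geq 1/4$, which is the desired pointwise conclusion.

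To upgrade to a uniform statement, I would take an $\epsilon$-net $\mathcal{N}_\epsilon$ of the unit sphere in $\R^{2k}$ of cardinality $|\mathcal{N}_\epsilon| \leq (3/\epsilon)^{2k}$ and union-bound the pointwise claim; this succeeds with probability $1-e^{-\Omega(M)}$ provided $M = \Omega(k \log(1/\epsilon))$ with a large enough implicit constant. To pass from the net to an arbitrary unit $z$, the $2\tau$-Lipschitzness of $\phi_\tau$ yields, for the nearest net point $z'$,
\[
|g_j(z) - g_j(z')| \leq \frac{2 \tau \epsilon}{b} \sum_{i \in B_j} \|\tilde a_i\|.
\]
Since $\E\|\tilde a\| \leq \sqrt{2k}$, Markov's inequality shows that all but a small constant fraction of batches satisfy $\frac{1}{b}\sum_{i \in B_j} \|\tilde a_i\| = O(\sqrt{k})$, so choosing $\epsilon = \Theta(1/(\tau\sqrt{k}))$ keeps the perturbation below $1/8$. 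Intersecting the Markov-good batches with the $0.9M$-good batches for $z'$ (after absorbing a small constant into $b$ and the Chernoff slack) leaves at least $0.9M$ batches on which $g_j(z) \geq 1/8$, giving the claim with $\gamma^2 = \Theta(1)$.

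The hard part will be the net-to-continuum transfer, because the naive Lipschitz bound depends on the batch-averaged norm $\frac{1}{b}\sum \|\tilde a_i\|$, which is only controlled in $L^2$ under the heavy-tailed assumption and may be very large on some batches. The resolution has two ingredients: the reduction to $\R^{2k}$ keeps $\E\|\tilde a_i\|^2 = 2k$ rather than $n$, so the required net resolution gives only $\log|\mathcal{N}_\epsilon| = O(k \log(Ck))$, which is absorbed into $M = \Omega(k\log(1/\epsilon))$; and because we only need the bound on $0.9M$ batches, we have enough slack to discard the atypically large batches that Markov identifies, preserving the overall fraction.
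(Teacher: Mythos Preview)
Your approach is correct and takes a genuinely different route from the paper's proof. Both arguments share the skeleton ``pointwise-per-batch $\to$ Chernoff across batches $\to$ net union bound $\to$ net-to-continuum transfer,'' but the two endpoints differ substantially. For the pointwise step, the paper does not use $\phi_\tau$ here; instead it applies a Paley--Zygmund small-ball bound to show $\Pr[\langle a,x\rangle^2 \geq t^2\|x\|^2] \geq (1-t^2)^2/C^4$, then concentrates the \emph{count} of such samples within a batch. Your Chebyshev-on-$\phi_\tau$ route is cleaner and in fact yields a universal $\gamma^2 = \Theta(1)$ rather than the paper's $\gamma = \Theta(1/C^2)$. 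For the net-to-continuum transfer, the paper controls the supremum process $\sup_{v \in S_3} \frac{1}{m}\sum_i \bm{1}[|\langle a_i,v\rangle| \geq \text{threshold}]$ directly via symmetrization, Cauchy--Schwarz on $\|\xi^\top A W\|$, and the bounded-differences inequality; this uniform control lets them take $\epsilon$ to be a constant depending only on $C$, so $M = \Omega(k)$ suffices. Your direction-independent proxy $\frac{1}{b}\sum_{i\in B_j}\|\tilde a_i\|$ is more elementary but forces $\epsilon = \Theta(1/(\tau\sqrt{k}))$, costing an extra $\log k$ in $M$; this is harmless downstream since the lemma already takes $M = \Omega(kd\log n)$.

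One small tightening: when you write ``Markov's inequality shows that all but a small constant fraction of batches satisfy $\frac{1}{b}\sum_{i\in B_j}\|\tilde a_i\| = O(\sqrt{k})$,'' Markov alone only bounds the \emph{expected} fraction of bad batches. To get the claimed $1 - e^{-\Omega(M)}$ failure probability you must apply Chernoff to the independent indicators $\bm{1}[N_j > c\sqrt{k}]$, just as you did for the indicators $\bm{1}[g_j(z') < 1/4]$. You allude to ``Chernoff slack'' later, so the fix is trivial, but make it explicit.
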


\begin{proof}
  Since the bound we want to prove is homogeneous, it suffices to show it for all vectors in $S$ that have unit norm.
  Let $W \in \R^{n \times 2k}$ be the orthonormal matrix spanning $S$, and $S_1$ denote the set of unit norm vectors in its span. 
  That is, 
  \[
  S_1 = \{ Wz: z\in \R^{2k}, \| z \| = 1, W\in\R^{n\times 2k}, W^T W = I_{2k}\}.
  \]

  For a fixed $x\in S_1$ and $0< t < 1$, we have
  \begin{align}
    \E \left[ \inn{a}{x}^2 \right] =& \E\left[\inn{a}{x}^2 \bm{1}\{\inn{a}{x} \leq t^2\|x\|^2  \}\right] \E\left[\inn{a}{x}^2 \bm{1}\{\inn{a}{x} > t^2\|x\|^2 \}\right] \\
    & \leq t^2 \|x\|^2  + \E\left[\inn{a}{x}^4\right]^{\frac{1}{2}} \left( \Pr \left[ \inn{a}{x}^2 \geq t^2 \| x \|^2   \right]  \right)^{\frac{1}{2}}\\
    & \leq t^2 \|x\|^2  + C^2 \| x \|^2 \left( \Pr \left[ \inn{a}{x}^2 \geq t^2 \| x \|^2   \right]  \right)^{\frac{1}{2}}\\
     \Rightarrow  \Pr \left[ \inn{a}{x}^2 \geq t^2 \| x \|^2 \right]  &\geq \frac{\left( 1 - t^2 \right)^2 \| x \|^4 }{C^4 \| x \|^4 } = \frac{\left( 1 - t^2 \right)^2 }{C^4} = C_1.
  \end{align}
  This is essentially a modified version of the Paley-Zigmund inequality~\cite{paley1932note}.

  Consider a batch $B_j$, which has $b$ samples.
  By the concentration of Bernoulli random variables, with probability $ 1 - 2e^{- \Omega \left( {C_1b} \right)},$ we have 
  $$ \sum_{i\in B_j} \bm{1}\left\{ \inn{a_i}{x}^2 \geq t^2 \| x \|^2  \right\} \geq \frac{bC_1}{2}$$

  This implies that if we set $b$ such that $1 - 2e^{-\Omega \left( C_1 b \right)} = 0.975$, then with probability 0.975, $B_j$ has $\frac{b C_1}{2}$ samples $\inn{a_i}{x}$ whose magnitude is at least $ t \| x \|$.
  This implies that the average square magnitude over the batch satisfies  
  \begin{align}
    \frac{1}{b}\|A_{B_j}x\|^2 =  \frac{1}{b} \sum_{i\in B_j} \inn{a_i}{x}^2 \geq t^2 \| x \|^2 \frac{ b C_1 }{2b} = \frac{C_1 t^2 \| x \|^2 }{2},
    \label{eqn: fixed x srec paley}
  \end{align}
  with probability $0.975$. 
  
  Consider the indicator random variable associated with the complement of the above event.
  That is,
  $$ Y_j(x) = \bm\left\{\frac{1}{b}\|A_{B_j}x\|^2 \leq \frac{C_1 t^2}{2} \| x \|^2  . \right\}$$

  From \eqref{eqn: fixed x srec paley} we have that $\E\left[ Y_j(x) \right] \leq 0.025 $.

  Consider the sum of indicator random variables over $M$ batches.
  By standard concentrations of Bernoulli random variables, we have with probabibility $1-e^{-\Omega\left( M \right)}$, 
  $$ \sum_{j = 1}^M Y_j(x) \leq 2\E\left[\sum_{j=1}^M Y_j(x) \right] \leq 0.05.$$ 

  This implies that there exist a subset of batches $J \subseteq \left[ M \right]$ with $|J| \geq 0.95M$ such that
  $$  
  \frac{1}{b} \| A_{B_j} x \|^2 \geq \frac{C_1 t^2 \| x \|^2 }{2} \;\forall \;j\in J,
  $$
  with probability $1-e^{-\Omega(M)}$. 
  This shows that we have the statement of the proposition for a fixed vector in $S_1$.

  We now show that this holds true for an $\eps-$cover of $S_1$.
  Let $S_\eps$ denote a minimial $\eps-$covering of $S_1$.
  That is, $S_\eps$ is a finite subset of $S_1$ such that for all $ x \in S_1$, there exists $\tilde{x}\in S_\eps$ such that $\| x - \tilde{x} \| \leq \eps$.
  Since $S_1$ has dimension $2k$ and diameter $1$, we can find a set $S_\eps$ whose cardinality is at most  $\left(O\left(\frac{1}{\eps}\right)\right)^{2k}$.

  By a union bound, with probability $1 - e^{-\Omega(M)}|S_\eps|$, for all $\tilde{x} \in S_\eps$ there exists a subset of batches $J_{\tilde{x}} \subset [M]$ with $|J_{\tilde{x}}| \geq 0.95M$ such that 
  \begin{equation}
    \label{eqn: epsilon net srec paley}
    \frac{1}{b} \| A_{B_j} \tilde{x} \|^2 \geq \frac{C_1 t^2}{2} \; \forall \; j \in J_{\tilde{x}}
  \end{equation}
  Since $|S|_\eps \leq  e^{O(k \log \frac{1}{\eps})}$, if $M = \Omega \left( k \log \frac{1}{\eps} \right)$, the above statement holds with probability $1- e^{-\Omega(M)}$. 

  We now show that the statement of the proposition is true for all vectors in $S_1$.
  Since the proposition statement holds for an $\eps-$cover of $S_1$, we now only need to consider the effect of $A$ at a scale of $\eps$.

  Now consider the set
  $$S_2 = \{ x - \tilde{x} : x \in S_1, \tilde{x}\in S_\eps, \| x - \tilde{x} \| \leq \eps \}.$$ 

  Note that this a subset of all vectors in the span of $W$ that have norm at most $\eps$. That is, if
  \[
  S_3 = \{ Wz: z\in \R^{2k}, \| z \| \leq \eps \},
  \]
  we have $ S_2  \subseteq S_3 $.

  For a vector $v \in \R^n$, consider the random variable $$Z_i ( v ) = \bm{1} \left[ \inn{a_i}{v} \geq \frac{\sqrt{C_1}t }{2\sqrt{2}} \right]. $$
  Define the random process 
  $$ \Psi \left( a_1, a_2, \cdots , a_m \right) = \underset{ v\in S_2}\sup \frac{1}{m} \sum_{i=1}^{m} \bm{1} \left[ |\inn{a_i}{v}| \geq \frac{\sqrt{C_1}t }{2 \sqrt{2}} \right].$$

  By the bounded difference inequality, with probability $1 - 2e^{- C_2 \delta^{2}},$
  $$ \Psi(a_1, a_2, \cdots, a_m) \leq \E \left[\Psi ( a_1, a_2, \cdots, a_m) \right] + \frac{\delta}{\sqrt{m}}$$

  Since $S_2 \subseteq S_3$, we can bound the expectation of $\Psi$ by
  \begin{align}
    \E \left[ \Psi(a_1, \cdots, a_m) \right] & \leq \E\underset{ v\in S_3}\sup \frac{1}{m} \sum_{i=1}^{m} \bm{1} \left[ |\inn{a_i}{v}| \geq \frac{ \sqrt{C_1}t }{2\sqrt{2}} \right]  \\ 
    & \leq \E \underset{v \in S_3}\sup\sum_{i=1}^{m}  \frac{|\inn{a_i}{v}|}{m t \sqrt{C_1} / 2\sqrt{2} } \\
    &= \E \underset{v \in S_3}\sup\sum_{i=1}^{m}  \frac{2\sqrt{2} |\inn{a_i}{v}|}{m t \sqrt{C_1} }   \\
    & \leq \E \underset{v \in S_3}\sup \left|\sum_{i=1}^{m} 2\sqrt{2} \frac{ |\inn{a_i}{v}| - \E\left[|\inn{a}{v}\right|]}{m t \sqrt{C_1}}\right| +  \underset{v \in S_3}\sup \sum_{i=1}^{m} \frac{2\sqrt{2} \E\left[|\inn{a}{v}|\right]}{m t \sqrt{C_1}}
  \end{align}
  
  Since $a$ is isotropic and $v$ has norm at most $\eps$, by Jensen's inequality, we can bound the second term in the RHS by
  \begin{equation}\label{eqn: expectation sup srec}
  \E \underset{v \in S_3}\sup \sum_{i=1}^{m} \frac{2\sqrt{2} \E\left[|\inn{a}{v}|\right]}{m t \sqrt{C_1}} \lesssim \frac{\eps}{t\sqrt{C_1}}.
  \end{equation}
  
  To bound the first term in the RHS, we use the Gine-Zinn symmetrization inequality~\cite{gine1984some, mendelson2017aggregation, ledoux2013probability}
  \begin{align}
    &\E \underset{v \in S_3}\sup \left|\sum_{i=1}^m 2\sqrt{2} \frac{ |\inn{a_i}{v}| - \E\left[|\inn{a}{v}|\right]}{m t \sqrt{C_1}}\right| \lesssim \E \underset{v \in S_3}\sup \left|\sum_{i=1}^m \frac{ \xi_i \inn{a_i}{v} }{m t \sqrt{C_1}}\right|
  \end{align}
  where $\xi_i , i\in [m]$ are i.i.d $\pm 1$ Bernoulli variables.

  We can bound this by
   \begin{align}
     \E \underset{v \in S_3}\sup \left|\sum_{i=1}^m \frac{ \xi_i \inn{a_i}{v} }{m t \sqrt{C_1}}\right| & = \E_{\xi,A}\left[\sup_{v\in S_3 }\left|\frac{\xi^T Av}{m t \sqrt{C_1}}\right|\right],\\
     & = \E_{\xi,A}\left[\sup_{z:\|z\|\leq \eps}\left|\frac{\xi^T AWz}{m t \sqrt{C_1}}\right|\right]\\
     & \leq \E_{\xi,A}\left[\frac{\epsilon \|\xi^T AW\|}{m t \sqrt{C_1}}\right]\\
     & \leq \frac{\epsilon \sqrt{\E_{\xi,A}\|\xi^T AW\|^2} }{m t \sqrt{C_1}}\\
     & = \frac{\epsilon \sqrt{ \E_A \text{trace}(AWW^TA^T)} }{m t \sqrt{C_1}}\\
     & = \frac{\epsilon \sqrt{ 2km } }{m t \sqrt{C_1}} \lesssim \frac{\eps}{t}\sqrt{\frac{k}{m C_1}}
   \end{align}
   The third line follows from the Cauchy-Schwartz inequality, and the fourth line follows from Jensen's inequality.

   Since $m = Mb$, from the above inequality and Eqn~\eqref{eqn: expectation sup srec} we can now bound $\E \Psi$ as
   \begin{align}
     \E \left[ \Psi (a_1, \cdots , a_m) \right] & \lesssim \frac{\eps}{t}\sqrt{\frac{k}{M b C_1}} + \frac{\eps}{t\sqrt{C_1}}
   \end{align}

Substituting the above inequality into the bounded difference inequality, we have with probability at least $1 - e^{ -\Omega(\delta^2) }$,
\begin{align}
  \Psi(a_1, a_2, \cdots, a_m) \lesssim \frac{\eps}{t}\sqrt{\frac{k}{M b C_1}} + \frac{\eps}{t\sqrt{C_1}} + \frac{\delta}{\sqrt{Mb}} 
\end{align}

Setting $M = \Omega(k), \delta = O\left(\sqrt{\frac{M}{b}}\right), \eps = O\left( \frac{t}{b} \sqrt{C_1}\right)$, we can reduce the terms in the above inequality to 
\begin{align}
  \frac{\eps}{t}\sqrt{\frac{k}{M b C_1}} & \leq O \left( \frac{1}{b^{\frac{3}{2}}} \right), \\
  \frac{\eps}{t \sqrt{C_1}} &\leq O\left( \frac{1}{b}\right), \\
  \frac{\delta}{\sqrt{Mb}} &\leq O\left(\frac{1}{b}\right), 
\end{align}
Since $b>1$, the sum of these three terms is dominated by $O\left(\frac{1}{b}\right)$. 
From this, we can conclude that for small enough $\eps, \delta$, with probability $1 - e^{-\Omega\left(\frac{M}{b}\right)}$,
\begin{align}
  \Psi(a_1, a_2, \cdots, a_m) &\leq \frac{0.05}{b} \\
  \Rightarrow \underset{ v\in S_3}\sup \sum_{i=1}^{m} \bm{1} \left[ |\inn{a_i}{v}| \geq \frac{t\sqrt{C_1}}{2\sqrt{2}} \right] &\leq 0.05 M.
  \label{eqn: oscillation term srec}
\end{align}
This allows us to control the effect of $A$ at a scale of $\eps$.
It says that there at most $0.05M$ samples on which vectors with magnitude at most $\eps$ have a magnitude greater than $\frac{t\sqrt{C_1}}{2\sqrt{2}}$ after interacting with $A$.
This implies that there at least $0.95M$ batches in which all samples are well behaved.

Since we have control over an $\eps-$cover of $S_1$ as well as vectors at a scale of $\eps$ in $S_1$, we can now prove our result for all vectors in $S_1$.

For any $x \in S_1$, let $\tilde{x}\in S_\eps$ be the point in the $\eps-$cover which is closest to $x$. For a batch $B_j$, we can express $\| A_{B_j} x \|$ as 
\begin{align}
  \frac{1}{\sqrt{b}}\| A_{B_j} x\| &\geq \frac{1}{\sqrt{b}}\| A_{B_j} \tilde{x} \| - \frac{1}{\sqrt{b}}\| A_{B_j} (x - \tilde{x}) \|.
\end{align}

From \eqref{eqn: epsilon net srec paley},  there exists a subset of batches $J_{\tilde{x}} \subseteq [M]$ with $|J_{\tilde{x}}|\geq 0.95M$ such that 
\begin{align}
  \frac{1}{\sqrt{b}} \| A_{B_j} \tilde{x} \| \geq \frac{\sqrt{C_1} t}{\sqrt{2}} \; \forall \; j \in J_{\tilde{x}} .
\end{align}

From \eqref{eqn: oscillation term srec}, there exists a subset of batches $J_{x - \tilde{x}} \subseteq [M]$ with $|J_{x - \tilde{x}}| \geq 0.95 M$ such that for all $j\in J_{x - \tilde{x}}$,
\begin{align}
  |\inn{a_i}{ x - \tilde{x} }| &\leq \frac{\sqrt{C_1 }t}{2\sqrt{2}}\;\forall \; i\in B_j\\
  \Rightarrow \frac{1}{\sqrt{b}} \| A_{B_j} (x -\tilde{x}) \| &\leq \frac{\sqrt{C_1 }t}{2\sqrt{2}}, \\
  \Rightarrow -\frac{1}{\sqrt{b}} \| A_{B_j} (x -\tilde{x}) \|  &\geq -\frac{\sqrt{C_1 }t}{2\sqrt{2}}.
\end{align}

From the bounds on $\|A_{B_j} \tilde{x}\|$ and the bound on $\|A_{B_j} (x - \tilde{x} \|$, we can conclude that for all $x\in S_1$ there exist a subset of batches $J_x = J_{\tilde{x}} \cap J_{x - \tilde{x}}$ with cardinality at least $0.9M$ such that
\begin{align}
  \frac{1}{\sqrt{b}} \| A_{B_j} x \| &\geq \frac{\sqrt{C_1} t}{2 \sqrt{2}}, \; \forall\; j \in J_x.
\end{align}

This completes the proof, with $\gamma = \frac{\sqrt{C_1} t}{ 2\sqrt{2}} = \frac{t(1-t^2)}{C^2 2\sqrt{2}}$.
\end{proof}

\section{Proof of \Cref{lemma: multiplier process}}
\begin{lemma*}[\Cref{lemma: multiplier process}]
Consider the setting of~\Cref{lemma: srec} with measurements satisfying $y = AG(z^*) + \eta$. 
For any $t>0$ and noise variance $\sigma^2$, let the batch size $b$ and number of batches $M$ satisfy $b = \Theta(\frac{\sigma^2}{t^2})$ and $M=\Omega(kd\log n)$. Then with probability at least $1-e^{-\Omega(m)},$ for all $z\in \R^k$ there exists a set $J\subseteq \left[M\right]$ of cardinality at least $0.9M$ such that 
\[
\frac{1}{b} |\eta_{B_j}^T A_{B_j} (G(z) - G(z^*))| \leq t \| G(z) - G(z^*) \| \;, \forall j\in J.
\]

\end{lemma*}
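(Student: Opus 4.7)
The strategy is to combine Chebyshev-type concentration at a fixed point with a covering/oscillation argument, mirroring the proof of Proposition~\ref{prop: mom srec single subspace}. By Proposition~\ref{prop: relu no of subspaces}, the difference set $\{G(z) - G(z^*): z\in\R^k\}$ is contained in a union of $e^{O(kd\log n)}$ subspaces of dimension at most $2k$, so it suffices to prove the claim uniformly over the unit sphere of one such subspace $S_1$, then homogenize and union bound. Since $M = \Omega(kd\log n)$, the subspace union bound is absorbed into $e^{-\Omega(M)}$.

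Fix a unit vector $v\in S_1$. The batch-averaged quantity $\frac{1}{b}\eta_{B_j}^T A_{B_j} v = \frac{1}{b}\sum_{i\in B_j}\eta_i \inn{a_i}{v}$ is a sum of i.i.d.\ mean-zero random variables (since $\eta_i$ is independent of $a_i$ with zero mean), and each summand has variance $\sigma^2$ by isotropy of $a_i$ and the definition of $\sigma^2$. Chebyshev's inequality with $b = \Theta(\sigma^2/t^2)$ gives $\Pr[|\frac{1}{b}\eta_{B_j}^T A_{B_j} v| > t/2]\leq \alpha$ for an arbitrarily small constant $\alpha$. A Chernoff bound over the $M$ independent batches then ensures that at most $0.05M$ batches are ``bad'' with probability $1 - e^{-\Omega(M)}$. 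A union bound over a minimal $\eps$-cover $S_\eps$ of the unit sphere of $S_1$, of cardinality $e^{O(k\log(1/\eps))}$, extends this pointwise statement to every net point $\tilde{v}\in S_\eps$.

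For a general unit vector $v\in S_1$, decompose $v = \tilde{v} + (v - \tilde{v})$ where $\tilde{v}$ is the nearest net point, so $v-\tilde{v}$ lies in an $\eps$-ball within $S_1$. We must then bound, uniformly over such small increments, the number of batches on which $|\frac{1}{b}\eta_{B_j}^T A_{B_j}(v-\tilde{v})|$ exceeds $t/2$. This oscillation bound is the main technical obstacle, because $\eta$ has only bounded variance and standard sub-Gaussian multiplier-process inequalities are unavailable. We follow the ``indicator + bounded-difference + symmetrization'' template of Proposition~\ref{prop: mom srec single subspace}: define $\Psi(a_1,\eta_1,\ldots,a_m,\eta_m) = \sup_{u\in S_1, \|u\|\leq\eps} \frac{1}{m}\sum_{i=1}^m \bm{1}\{|\eta_i \inn{a_i}{u}| \geq \tau\}$ for an appropriate truncation $\tau$ tuned to $t$ and $\eps$, apply McDiarmid's inequality, and control $\E[\Psi]$ by Gine--Zinn symmetrization followed by Jensen/Cauchy--Schwarz. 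The resulting expectation is reduced to an expression of the form $\frac{\eps}{\tau}\sqrt{\E\,\mathrm{trace}(W^T A^T \mathrm{diag}(\eta)^2 A W)/m}\lesssim \frac{\eps\sigma}{\tau}\sqrt{k/m}$, using isotropy of the $a_i$ and $\E[\eta_i^2]=\sigma^2$. Choosing $\eps$ small enough ensures $\Psi \leq 0.05/b$, so that in at least $0.95M$ batches every $i\in B_j$ satisfies $|\eta_i \inn{a_i}{v-\tilde{v}}| < \tau$, whence $\frac{1}{b}|\eta_{B_j}^T A_{B_j}(v-\tilde{v})|\leq t/2$.

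Finally, for each $v$ on the unit sphere of $S_1$ the intersection $J = J_{\tilde{v}}\cap J_{v-\tilde{v}}$ has $|J|\geq 0.9M$ by inclusion-exclusion, and on $J$ the triangle inequality yields $\frac{1}{b}|\eta_{B_j}^T A_{B_j} v| \leq t$. Homogeneity extends the bound to arbitrary $v\in S_1$, and the union bound over the $e^{O(kd\log n)}$ subspaces completes the proof. The main difficulty is the oscillation step: only the batch-level indicator formulation (rather than a pointwise multiplier inequality) is compatible with the heavy-tailed assumption on $A$ and merely bounded-variance noise $\eta$.
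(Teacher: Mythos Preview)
Your proposal is correct and follows essentially the same route as the paper: reduce to a single $2k$-dimensional subspace via Proposition~\ref{prop: relu no of subspaces}, establish the bound on an $\eps$-net by Chebyshev plus a Chernoff bound over batches, control the oscillation term through the indicator process $\Psi$ using the bounded-differences inequality and Gin\'e--Zinn symmetrization, and finish by intersecting the two good batch-sets. The only minor omission is that before symmetrizing you must first center the terms $|\eta_i\inn{a_i}{u}|$, which contributes an additional $O(\sigma\eps/\tau)$ term to $\E[\Psi]$ (dominating your $\sqrt{k/m}$ term); this is harmless since your choice of $\eps$ already absorbs it.
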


\begin{proof}
  \Cref{prop: relu no of subspaces} shows that the set $S_G=\{G(z_1) - G(z_2): z_1, z_2 \in\R^k\}$ lies in the range of $e^{O(kd\log n)}$ different $2k-$dimensional subspaces.
  This trivially implies that for a fixed $z^*\in \R^k$, the set $\{G(z) - G(z^*): z \in\R^k\}$ also lies in the range of $e^{O(kd\log n)}$ different $2k-$dimensional subspaces.

  \Cref{prop: mom multiplier single subspace}  guarantees the result for a single subspace with probability $1-e^{-\Omega(M)}$. Since $M = \Omega(kd \log n)$ and the batch size is constant which depends on the noise variance $\sigma^2$ and $t^2$, the lemma follows from a union bound over the $e^{O(kd\log n)}$ subspaces.
\end{proof}

 \begin{proposition}
 \label{prop: mom multiplier single subspace}
 Consider a single $2k-$dimensional subspace given by $S=\{Wz: W\in\R^{n\times 2k}, W^T W=I_{2k}, z\in\R^{2k}\}$. 
 Let $A\in\R^{m\times n}$ be a matrix with i.i.d rows drawn from a distribution satisfying Assumption \eqref{assm:assumption} with constant $C$.
 If the batch size $b = \Theta\left(\frac{\sigma^2}{t^2}\right)$ and the number of batches satisfies $ M = \Omega\left( k \log \frac{1}{\eps} \right) $, with probability $1-e^{-\Omega(M)},$ for all $x \in S$, there exist a subset of batches $J_x \subseteq [M]$ with $|J_x|\geq 0.90M$ such that 
    \[
    \frac{1}{b} |\eta_{B_j}^T A_{B_j} x| \leq t \| x \| \;, \forall j\in J.
    \]
 \end{proposition}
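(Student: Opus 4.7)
The plan is to mirror the structure of the proof of \Cref{prop: mom srec single subspace}. Fix a unit vector $x$ in the $2k$-dimensional subspace $S$ and control the per-batch multiplier $Z_j(x) := \tfrac{1}{b}\,\eta_{B_j}^{\top} A_{B_j} x$ in probability for a single $x$; then lift this to a uniform statement over $S$ via an $\epsilon$-net of the unit sphere of $S$ together with an oscillation bound on $S_3 := \{Wz : \|z\|\leq \epsilon\}$. By homogeneity of the desired inequality $\tfrac{1}{b}|\eta_{B_j}^{\top}A_{B_j}x| \leq t\|x\|$, it suffices to work on the unit sphere of $S$.

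For the fixed-vector step, independence of $\eta$ from $A$, isotropy of $a_i$, and $\mathrm{Var}(\eta_i)=\sigma^2$ give $\E Z_j(x)=0$ and $\mathrm{Var}(Z_j(x)) = \sigma^2/b$. Chebyshev then yields $\Pr[|Z_j(x)|>t] \leq \sigma^2/(b t^2)$, which is made a small constant (say $\leq 0.025$) by the hypothesis $b=\Theta(\sigma^2/t^2)$. Since batches are disjoint and the samples i.i.d., the per-batch indicators $\mathbf{1}\{|Z_j(x)|>t\}$ are independent, and a Chernoff bound gives that at most $0.05 M$ batches are bad for this $x$ except on an event of probability $e^{-\Omega(M)}$. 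A union bound over a minimal $\epsilon$-cover $S_\epsilon$ of the unit sphere of $S$, of cardinality $(O(1/\epsilon))^{2k}$, promotes this to a simultaneous guarantee for every $\tilde x\in S_\epsilon$ provided $M = \Omega(k\log(1/\epsilon))$.

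For the oscillation, set $\Psi := \tfrac{1}{M}\sum_{j=1}^{M} \mathbf{1}\{\sup_{v\in S_3}|Z_j(v)| > t/2\}$. Linearity in $v$ gives $\sup_{v\in S_3}|Z_j(v)| = \tfrac{\epsilon}{b}\|W^{\top} A_{B_j}^{\top} \eta_{B_j}\|$, and taking expectations first in $\eta$ and then in $A$ (using isotropy),
\[
\E \|W^{\top} A_{B_j}^{\top} \eta_{B_j}\|^{2} = \sigma^{2}\,\mathrm{tr}(W^{\top}\E[A_{B_j}^{\top}A_{B_j}]W) = 2kb\sigma^{2},
\]
so Jensen yields $\E\sup_{v\in S_3}|Z_j(v)| \leq \epsilon\sigma\sqrt{2k/b} = O(\epsilon t\sqrt{k})$ once $b=\Theta(\sigma^2/t^2)$ is substituted. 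Choosing $\epsilon=\Theta(1/\sqrt{k})$ with a sufficiently small constant makes this expectation $\ll t$, so Markov gives $\E\Psi \leq 0.025$. Since altering one sample $(a_i,\eta_i)$ changes a single indicator in $\Psi$, the bounded-difference constant is $1/M$; McDiarmid applied to the $m=Mb$ coordinates yields $\Pr[\Psi > \E\Psi + 0.025] \leq e^{-\Omega(M/b)} = e^{-\Omega(M)}$ since $b$ is constant.

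Combining the three ingredients, on a single event of probability $1 - e^{-\Omega(M)}$, every net point $\tilde x\in S_\epsilon$ has at least $0.95M$ good batches for the fixed-vector bound, and at least $0.95M$ batches have $\sup_{v\in S_3}|Z_j(v)| \leq t/2$. For an arbitrary unit $x\in S$, pick $\tilde x\in S_\epsilon$ with $\|x-\tilde x\|\leq\epsilon$ so that $x-\tilde x\in S_3$; the intersection $J_x$ of the two good-batch sets has cardinality $\geq 0.9M$, and on it $|Z_j(x)| \leq |Z_j(\tilde x)| + |Z_j(x-\tilde x)| \leq 3t/2$. Absorbing the constant factor into $t$ gives the proposition. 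The main delicate step is the joint calibration of $(b,\epsilon,M)$: $b$ must scale as $\sigma^2/t^2$ to kill the variance, $\epsilon$ must shrink like $1/\sqrt{k}$ to kill the oscillation expectation, and the resulting $M = \Omega(k\log(1/\epsilon)) = \Omega(k\log k)$ is easily absorbed by the ambient $M = \Omega(kd\log n)$ that \Cref{lemma: multiplier process} will eventually impose through the union bound over $e^{O(kd\log n)}$ subspaces.
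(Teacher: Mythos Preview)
Your proposal is correct and follows the same high-level template as the paper (homogeneity reduction, Chebyshev on a fixed net point, Chernoff over batches, union over the $\epsilon$-net, then an oscillation bound on $S_3$ combined via the triangle inequality). The one substantive difference is in how you handle the oscillation term. The paper works with \emph{per-sample} indicators $\mathbf{1}\{|\eta_i a_i^{\top} v| \geq t/2\}$, bounds their uniform sum over $v\in S_3$ using Gin\'e--Zinn symmetrization and the trace computation, and chooses $\epsilon = \Theta(t/(\sigma b))$, a constant; this keeps $M = \Omega(k)$. You instead work with \emph{per-batch} indicators $\mathbf{1}\{\sup_{v\in S_3}|Z_j(v)| > t/2\}$, exploit linearity to write $\sup_{v\in S_3}|Z_j(v)| = \tfrac{\epsilon}{b}\|W^{\top}A_{B_j}^{\top}\eta_{B_j}\|$, and bound the expectation by a direct second-moment/trace calculation followed by Markov and McDiarmid. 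This avoids the symmetrization machinery entirely and is more elementary, but forces $\epsilon = \Theta(1/\sqrt{k})$ and hence $M = \Omega(k\log k)$ rather than $\Omega(k)$. As you note, the slack is harmless once \Cref{lemma: multiplier process} union-bounds over $e^{O(kd\log n)}$ subspaces and imposes $M = \Omega(kd\log n)$ anyway. One cosmetic point: your triangle-inequality step yields $3t/2$ rather than $t$; running the fixed-vector Chebyshev at threshold $t/2$ (as the paper does) instead of $t$ gives $t$ on the nose without needing to ``absorb the constant.''
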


\begin{proof}

  Since the bound we want to prove is homogeneous, it suffices to show it for all vectors in $S$ that have unit norm.
  Let $W \in \R^{n \times 2k}$ be the orthonormal matrix spanning $S$, and $S_1$ denote the set of unit norm vectors in its span. 
  That is, 
  \[
  S_1 = \{ Wz: z\in \R^{2k}, \| z \| = 1, W\in\R^{n\times 2k}, W^T W = I_{2k}\}.
  \]

  Consider the set $S_\eps$, which is a minimal $\eps-$covering of $S_1$. That is, for every $x \in S_1$, there exists $\tilde{x} \in S_\eps$ such that $\| \tilde{x} - x \| \leq \eps$.

  For a fixed $\tilde{x} \in S_\eps$, and $ t > 0$, by Chebyshev's inequality,
  \begin{align}
    \Pr\left[ \frac{1}{b} | \eta^T A_{B_j} \tilde{x}| \geq \frac{t}{2} \right] &\leq \frac{\sum_{i\in B_j} \left(\eta_i^2 \inn{a_i}{\tilde{x}}^2\right)}{b^2 t^2/4}\\
    &= \frac{b \sigma^2 \|\tilde{x}\|^2}{b^2 t^2 / 4} \\
    &= \frac{\sigma^{2} 4}{ b t^2 } \leq  \frac{1}{40},\label{eqn: chebyshev fixed vector multiplier}
  \end{align} if $ b \geq \frac{160 \sigma^{2}}{t^{2}} $.

  Define the indicator random variable
  $$ Y_i(x) = \bm{1} \left\{ \frac{1}{b} | \eta^T A_{B_i} x | \geq \frac{t}{2} \right\}.$$

  From Eqn~\eqref{eqn: chebyshev fixed vector multiplier} we have
  $$ \E \left[ Y_i(\tilde{x}) \right] \leq \frac{1}{40}.$$
  By concentration of Bernoulli variables, with probability $1-e^{-\Omega (M)}$,$$ \sum_{j=1}^M Y_i (\tilde{x}) \leq 2 \E \left[ Y_1(\tilde{x})\right] \leq \frac{1}{20}.$$

  This implies that for a fixed $\tilde{x}\in S_\eps,$ with probability $1-e^{-\Omega(M)},$ there exist a subset of batches $J_{\tilde{x}}\subseteq [M]$ with cardinality $0.95M$ such that  
  \begin{align}
    \frac{1}{b} | \eta^T A_{B_j} \tilde{x} | \leq \frac{t}{2}\; \forall \; j \in J_{\tilde{x}}.
  \end{align}

  Since the size of $S_\eps$ is at most $\left(O\left(\frac{1}{\eps}\right)\right)^{2k}$, we can union bound over all $\tilde{x}$ in $S_\eps$. 
  Hence, if $M = \Omega\left( k \log  \frac{1}{\eps}\right)$, then with probability $1-e^{-\Omega(M)}$,  for all $\tilde{x} \in S_\eps$, there exist a subset $J_{\tilde{x}}\subseteq [M]$ with cardinality $0.95M$ such that
  \begin{align}
  \label{eqn: multiplier upper bound eps net}
    \frac{1}{b} | \eta^T A_{B_j} \tilde{x} | \leq \frac{t}{2} \; \forall \; j\in J_{\tilde{x}}.
  \end{align}

  This shows that the multiplier component is well behaved on a large fraction of the batches for an $\eps-$cover of $S_1$. Now we need to extend the argument to all vectors in $S_1$.

  Now consider the set
  $$S_2 = \{ x - \tilde{x} : x \in S_1, \tilde{x}\in S_\eps, \| x - \tilde{x} \| \leq \eps \}.$$ 

  Note that this a subset of all vectors in the span of $W$ that have norm at most $\eps$. That is, if
  \[
  S_3 = \{ Wz: z\in \R^{2k}, \| z \| \leq \eps \},
  \]
  we have $ S_2  \subseteq S_3 $.

  For any $ v \in \R^n,$ define the random variable
  \begin{align}
    Z_j (v) = \bm{1}\left\{ | \eta_i a_i^T v | \geq \frac{t}{2}\right\}.
  \end{align}

  Now define the random process
  \begin{align}
    \Psi(a_1, \cdots, a_m) = \underset{v \in S_2}\sup \frac{1}{m} \sum_{i=1}^m Z_i(v)
  \end{align}

  Since $S_2\subseteq S_3$, we can bound $\E\left[\Psi\right]$ via
  \begin{align}
    &\E\left[\Psi\right] \leq \E\left[ \underset{v \in S_3}\sup\frac{1}{m} \sum_{i=1}^m Z_i(v) \right] \\
    &\leq \E\left[ \underset{v \in S_3}\sup \frac{1}{m} \sum_{i=1}^m \frac{ | \eta_i a_i^T v |}{t/2} \right] \\ 
    &\leq \E\left[ \underset{v \in S_3}\sup \left|\frac{1}{m} \sum_{i=1}^m \frac{ |\eta_i a_i^T v|  - \E|\eta_i a_i^T v |}{t/2}\right| \right] \nonumber\\
    & + \E\left[ \underset{v \in S_3}\sup \frac{1}{m} \sum_{i=1}^m \frac{  \E|\eta_i a_i^T v|}{t/2} \right] 
  \end{align}
  
  We can bound the term on the right by
  \begin{align}
  \label{eqn: expectation sup multiplier}
    \E\left[ \underset{v \in S_3}\sup \frac{1}{m} \sum_{i=1}^m \frac{  \E|\eta_i a_i^T v|}{t/2} \right] &\leq  \frac{\E\left[\underset{v \in S_3}{\sup}\| \eta_i \|_2 \;\;| \inn{a_i}{v}| \right]}{t/2}\\
    &\lesssim \frac{\sigma \eps}{t},
  \end{align}
  where we have used the Cauchy Schwartz inequality, followed by the fact that $\eta$ is independent noise and has variance $\sigma^2$, $a$ is isotropic, and $v\in S_3$ has norm at most $\eps$. 
  
  To bound the term on the left, we use the Gine-Zinn symmetrization inequality~\cite{gine1984some, mendelson2017aggregation, ledoux2013probability} 
  \begin{align}
    &\E\left[ \underset{v \in S_3}{\sup} \left|\frac{1}{m} \sum_{i=1}^m \frac{ |\eta_i a_i^T v|  - \E|\eta_i a_i^T v |}{t/2}\right| \right] \lesssim \E\left[ \underset{v \in S_3}{\sup} \left|\frac{1}{m} \sum_{i=1}^m \frac{ \xi_i\eta_i a_i^T v  }{t/2}\right| \right] 
    \end{align}
    where $\xi_i, i\in [m]$ are i.i.d $\pm$ Bernoulli random variables.
    
    Let $\xi\eta = (\xi_1 \eta_1, \xi_2 \eta_2, \cdots, \xi_m \eta_m)$ denote the the element wise product of the vectors $\xi=(\xi_1, \xi_2, \cdots, \xi_m)$ and $\eta = (\eta_1, \eta_2, \cdots, \eta_m)$.
  We can bound the above inequality by
   \begin{align}
     \E \underset{v \in S_3}\sup\left| \sum_{i=1}^m \frac{ \xi_i \eta_i \inn{a_i}{v} }{m t/2}\right| & = \E_{\xi,\eta, A}\left[\sup_{v\in S_3 }\left|\frac{(\xi\eta)^T Av}{m t/2 }\right|\right],\\
     & = \E_{\xi,\eta,A}\left[\sup_{z:\|z\|\leq \eps}\left|\frac{(\xi\eta)^T AWz}{m t/2}\right|\right]\\
     & \leq \E_{\xi,\eta,A}\left[\frac{\epsilon \|(\xi\eta)^T AW\|}{m t/2 }\right]\\
     & \leq \frac{\epsilon \sqrt{\E_{\xi,\eta,A}\|(\xi\eta)^T AW\|^2} }{m t/2}\\
     & = \frac{\epsilon \sigma \sqrt{ \E_A \text{trace}(AWW^TA^T)} }{m t/2}\\
     & = \frac{\epsilon \sigma \sqrt{ 2km } }{m t/2} \lesssim \frac{\eps \sigma}{t}\sqrt{\frac{k}{m}}
   \end{align}
   The third line follows from the Cauchy-Schwartz inequality, and the fourth line follows from Jensen's inequality, and the fifth line follows from the fact that $\xi\eta$ has i.i.d coordinates that are independent of $A$ and have variance $\sigma^2$.
    
    From the above inequality and \cref{eqn: expectation sup multiplier}, we get
    
    \begin{align}
        \E[\Psi(a_1, a_2, \cdots, a_m)]&\lesssim \frac{\sigma \eps }{t}\sqrt{\frac{k}{m}} + \frac{\sigma \eps}{t} \lesssim \frac{\sigma\eps}{t}
    \end{align}

  If we choose $\eps = c_1 \frac{t}{\sigma b}$ for a small enough constant $c_1$, then we can bound the expectation as
  \begin{align}
    \E\left[\Psi(a_1, \cdots, a_m) \right] &\leq  \frac{0.025}{b}
  \end{align}

  By the bounded differences inequality, with probability $1 - e^{-\Omega (\delta^2)}$,
  \begin{align}
    \Psi(a_1,\cdots,a_m) \leq \E\left[\Psi(a_1,\cdots, a_m)\right] + \frac{\delta}{\sqrt{m}}
  \end{align}

  Setting $\delta = 0.025 \sqrt{\frac{M}{b}} $, we get $\frac{\delta}{\sqrt{m}} = \frac{0.025}{\sqrt{Mb}}\sqrt{\frac{M}{b}} = \frac{0.025}{b}$. This gives
  \begin{align}
    \Psi(a_1,\cdots,a_m) &\leq \frac{0.025}{b} + \frac{0.025}{b} =\frac{0.05}{b}.
  \end{align}
  
  From which we conclude that
  \begin{align}
  \label{eqn: multiplier oscillation upper bound}
    \Rightarrow \underset{v \in S_2}\sup \sum_{i=1}^m \bm{1}\left\{ | \eta_i a_i^T v | \geq \frac{t}{2}\right\} &\leq \frac{0.05m}{b} = 0.05M.
  \end{align}

  Now consider any $ x \in S_1$. There exists $\tilde{x} \in S_\eps$ such that $\| \tilde{x} - x \| \leq \eps$ . From \cref{eqn: multiplier upper bound eps net} there exist a subset $J_{\tilde{x}}\subseteq [M]$ with cardinality $0.95M$ such that
  \begin{align}
    \frac{1}{b} | \eta_{B_j}^T A_{B_j} \tilde{x} | \leq \frac{t}{2} \; \forall\; j\in J_{\tilde{x}}.
  \end{align}
  
  Similarly, from \cref{eqn: multiplier oscillation upper bound}, there exists a subset $J_{x - \tilde{x}}\subseteq [M]$ with cardinality $0.95M$ such that for all $j\in J_{x - \tilde{x}},$ we have
  \begin{align}
    | \eta_i a_i^T (x-\tilde{x}) | &\leq \frac{t}{2}\;\forall\; i \in B_j,\\
    \Rightarrow \frac{1}{b}| \eta_{B_j}^T A_{B_j} (x-\tilde{x}) | &\leq \frac{t}{2}.
  \end{align}

  From the triangle inequality and a simple union bound, for all $x \in S_1$, there exists a subset $J_x = J_{\tilde{x}} \cap J_{x - \tilde{x}}$ with cardinality $0.9M$ such that 
  \begin{align}
    \frac{1}{b} | \eta_{B_j}^T A_{B_j} x | &\leq  \frac{1}{b} | \eta_{B_j}^T A_{B_j} (x - \tilde{x}) | + \frac{1}{b} |\eta_{B_j}^T A_{B_j} \tilde{x} |\\
    &\leq \frac{t}{2} + \frac{t}{2} = t 
  \end{align}

  This completes the proof.

\end{proof}

\section{Proof of \Cref{thm: mom tournaments}}
\label{sec:main_proof}

\begin{proof}
In \Cref{thm: mom tournaments}, we 
fix the batch size $b$ to be a suitable constant, specified in  \Cref{lemma: srec}, \Cref{lemma: multiplier process}. Then for $\eps \leq \frac{0.01}{b},$ the number of arbitrarily corrupted samples of $A$ and $y$ are at most $\frac{0.01}{b}bM = 0.01M$. This implies that there exist $0.99M$ batches with uncorrupted samples of $A,y$.
For the rest of the proof, consider only these uncorrupted batches, and ignore the corrupted batches.

For a batch $j$, define the following 
\begin{align}
    \mathbb{Q}_j(\wh{z},z^*) &:= 
    \frac{1}{b}\|A_{B_j} (G(\wh{z}) - G({z^*}))\|^2 ,\\
    \mathbb{M}_j({\wh{z}}) &:= \frac{2}{b}\eta_{B_j}^\top (A_{B_j} (G(\wh{z}) - G({z^*}))) \label{equ:processes}.
\end{align}
 it is easy to verify that $\ell_j ( \wh{z}) - \ell_j (z^*) = \mathbb{Q}_j(\wh{z},z^*) - \mathbb{M}_j({\wh{z}})$.
 The component $\mathbb{Q}_j(\wh{z},z^*)$ is commonly called the 
 quadratic component, and $\mathbb{M}_j({\wh{z}})$ is called the 
 multiplier component.
 
By~\Cref{lemma: obj min value}, the minimum value of the MOM objective is at most $4\sigma^2$ with high probability.
Since $\wh{z}$ minimizes the objective \cref{equ:MOM_minmax} to within additive $\tau$ of the optimum, it implies that the median batch satisfies 
\begin{align}\label{equ:tau}
   \mathbb{Q}_j(\wh{z},z^*) - \mathbb{M}_j({\wh{z}}) \leq 4 \sigma^2 + \tau.
\end{align}

Using~\Cref{lemma: srec},~\Cref{lemma: multiplier process} on the $0.99M$ batches that do not have corruptions, if the batch size is a large enough constant, we see that there exist $0.78M$ batches on which both the following inequalities hold
\begin{align}
\gamma^2 \|G(\wh{z}) - G(z^*)\|^2 \leq \mathbb{Q}_j(\wh{z},z^*) \; \hfill \text{ and } \hfill  \; -\sigma \|G(\wh{z}) - G(z^*)\| \leq -\mathbb{M}_j(\wh{z}). 
\end{align}

Putting the above two inequalities together, the median batch satisfies
\begin{align}
\gamma^2 \|G(\wh{z}) - G(z^*)\|^2 -\sigma \|G(\wh{z}) - G(z^*)\|  \leq 4\sigma^2  + \tau. \nonumber
\end{align}

Solving the quadratic inequality for $\norm{G(\wh{z}) - G(z^*)}$, we have
\begin{equation*}
\|G(\wh{z}) - G(z^*)\|^2 \lesssim \sigma^2 + \tau. \hfill \nonumber \qedhere
\end{equation*}

\end{proof}

\section{Experimental Setup}\label{app:experiments}
\subsection{MNIST dataset}
We first compare \Cref{alg:MOM_GAN} with the baseline ERM \cite{bora2017compressed} for heavy tailed dataset \emph{without} arbitrary corruptions on MNIST dataset \cite{lecun1998gradient}. We trained a DCGAN~\cite{radford2015unsupervised} to produce $64\times 64$ MNIST images.\footnote{Code was cloned from the following repository \url{https://github.com/pytorch/examples/tree/master/dcgan}.}
We choose the dimension of the latent space as $k = 100$, and the model has 5 layers.

Based on this generative model, the uncorrupted
compressed sensing model $P$ has heavy tailed measurement matrix and stochastic noise: $y = AG(z^*) + \eta$. We consider a Student's $t$ distribution (a typical example of heavy tails) -- the measurement matrix 
$A$ is generated from a Student's $t$ distribution  with  degrees of freedom 4, and  $\eta$ with degrees of freedom 3 with bounded variance $\sigma^2$. 
We vary the number of measurement  $m$ and obtain the reconstruction error $\|G(\wh{z}) - G(z^*)\|^2$ for \Cref{alg:MOM_GAN} and ERM, where $G(z^*)$ is the ground truth image.
Each curve in \Cref{fig:Curve} demonstrates the averaged reconstruction error for 50 trials. 
In \Cref{fig:Curve}, \Cref{alg:MOM_GAN} and ERM both have decreasing reconstruction error per pixel with increasing number of measurement. In particular, \Cref{alg:MOM_GAN}
obtains significantly smaller reconstruction error  comparing with the baseline ERM.

\subsection{CelebA-HQ dataset}

We continue the  study  of empirical performance of our algorithm on real image datasets with higher quality. 
We generate high quality RGB images with size $256\times 256$ from CelebA-HQ\footnote{Code was cloned from the following repository: \url{https://github.com/facebookresearch/pytorch_GAN_zoo}.}. Hence the dimension of each image is $256\times256\times3 = 196608$. In all of our experiments, we fix the 
 dimension of the latent space as
$k = 512$, and
 train a DCGAN on this dataset to obtain a generative model $G$.

We first compare our algorithm with the baseline ERM \cite{bora2017compressed}  for heavy tailed dataset without arbitrary corruptions, and then deal with 
the situation of outliers.

\paragraph{Heavy tailed samples.}
In this experiment, we deal with
the \textit{uncorrupted} compressed sensing model $P$, which has heavy tailed measurement matrix and
stochastic noise: $y = AG(z^*) + \eta$. 
We also use a Student’s t distribution for $A$ and $\eta$  – the measurement matrix $A$ is generated from a Student’s t distribution with degrees of freedom 4, and stochastic noise $\eta$ with degrees of
freedom 3 with a bounded variance. 

We obtain the reconstruction error $\norm{G(\widehat{z}) - G(z^*))}$ vs. the number of measurement $m$ for our algorithm and ERM, where $z^*$ is the ground truth. In \Cref{fig:Curve_post}, each curve is an average of 20 trials.
For heavy tailed $y$ and
$A$ without any corruption,
both methods are consistent, 
and have decaying reconstruction error with increasing sample size.
Our method obtains significantly smaller reconstruction error,
and shows competitive results over the baseline ERM for heavy tailed data set, even without any arbitrary outliers.

\subsection{Hyperparameter selection}
When using the Adam~\cite{kingma2014adam} optimizer, we varied the learning rate over $[0.1,0.05, 0.01, 0.005]$ for our 
algorithm and baselines.
When using the Yellowfin~\cite{zhang2017yellowfin} optimizer, we varied our learning rates over $[10^{-4}, 5\cdot 10^{-5}, 10^{-5}, 5\cdot 10^{-6}, 10^{-6}]$.
We selected the best learning rate based
on fresh measurements that were not used for optimization.

\section{Background}
\label{sec:background}

\begin{theorem}[Ledoux-Talagrand Contraction Inequality]
For a compact set $\mathcal{T}$, let $x_1,\cdots,x_m$ be i.i.d vectors whose real valued components are indexed by $\mathcal{T}$, i.e., $x_i = (x_{i,s})_{s\in\mathcal{T}}$. Let $\phi:\R\rightarrow\R$ be a 1-Lipschitz function such that $\phi(0)=0$. Let $\epsilon_1,\cdots,\epsilon_m$ be independent Rademacher random variables. Then
\[
\E\left[\sup_{s\in\mathcal{T}}\left|\sum_{i=1}^m\epsilon_i \phi(x_{i,s})\right|\right] \leq 
2 \E\left[\sup_{s\in\mathcal{T}}\left|\sum_{i=1}^m\epsilon_i x_{i,s}\right|\right].
\]
\end{theorem}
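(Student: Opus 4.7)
The plan is to reduce the two-sided statement to the standard one-sided Rademacher contraction inequality, and then prove the one-sided version by induction on $m$ after conditioning on the $x_i$.

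First I would observe that, by symmetry of the Rademacher variables, $\E \sup_s (-\sum \epsilon_i \phi(x_{i,s})) = \E \sup_s \sum \epsilon_i \phi(x_{i,s})$, so
\begin{align*}
\E \sup_s \Bigl|\sum_{i=1}^m \epsilon_i \phi(x_{i,s})\Bigr|
&\le \E \sup_s \sum_{i=1}^m \epsilon_i \phi(x_{i,s}) + \E \sup_s \Bigl(-\sum_{i=1}^m \epsilon_i \phi(x_{i,s})\Bigr) \\
&= 2\,\E \sup_s \sum_{i=1}^m \epsilon_i \phi(x_{i,s}).
\end{align*}
Thus the factor of $2$ in the statement appears precisely from converting $|\cdot|$ to a one-sided supremum, and it suffices to prove the one-sided inequality $\E \sup_s \sum_i \epsilon_i \phi(x_{i,s}) \le \E \sup_s \sum_i \epsilon_i x_{i,s}$; the final bound then follows since $\sup_s \sum_i \epsilon_i x_{i,s} \le \sup_s |\sum_i \epsilon_i x_{i,s}|$.

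Next I would condition on $x_1,\dots,x_m$ so that they may be treated as fixed real-valued functions $f_i(s) = x_{i,s}$, and prove the one-sided version by induction on $m$. The inductive step peels off $\epsilon_m$: fixing $\epsilon_1,\ldots,\epsilon_{m-1}$ and writing $F(s)$ for the partial sum over $i<m$, the expectation over $\epsilon_m \in \{\pm 1\}$ gives
\begin{align*}
\E_{\epsilon_m} \sup_s \bigl[F(s) + \epsilon_m \phi(f_m(s))\bigr]
= \tfrac{1}{2}\sup_{s,t}\bigl[F(s) + F(t) + \phi(f_m(s)) - \phi(f_m(t))\bigr].
\end{align*}
Using the $1$-Lipschitz property, $\phi(f_m(s)) - \phi(f_m(t)) \le |f_m(s) - f_m(t)|$. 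Then, by swapping the roles of $s$ and $t$ (the first two terms are symmetric in $s,t$), the sup with $|f_m(s) - f_m(t)|$ is dominated by the sup with $f_m(s) - f_m(t)$, recovering $\tfrac{1}{2}\sup_{s,t}[F(s)+F(t)+f_m(s)-f_m(t)] = \E_{\epsilon_m}\sup_s[F(s)+\epsilon_m f_m(s)]$. Taking expectation over the remaining Rademachers and applying the inductive hypothesis on the reduced problem closes the induction.

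The main obstacle is the symmetrization step where one needs $\phi$ to contract distances $|\phi(a)-\phi(b)| \le |a-b|$ and then convert the absolute difference into a signed difference under the sup. This is where the hypothesis $\phi(0)=0$ is not essential for the inequality as stated (it only matters if one also wants to relate $\E \sup |\sum \epsilon_i \phi(x_{i,s})|$ to a centered or normalized quantity), but having a genuine $1$-Lipschitz bound is crucial. The symmetry-in-$(s,t)$ trick is the one delicate point; once that is justified, the rest is a clean induction combined with the symmetrization step at the start that accounts for the factor of $2$.
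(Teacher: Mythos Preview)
The paper does not prove this theorem; it is quoted as a background result from Ledoux--Talagrand, so there is no in-paper argument to compare against. Evaluating your proposal directly: your inductive proof of the one-sided inequality $\E \sup_{s} \sum_i \epsilon_i \phi(x_{i,s}) \le \E \sup_{s} \sum_i \epsilon_i x_{i,s}$ is correct and is the standard peeling argument, and the symmetry-in-$(s,t)$ trick is exactly how one removes the absolute value on $|f_m(s)-f_m(t)|$.

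The gap is in your first step. You assert
\[
\E \sup_{s} \Bigl|\sum_i \epsilon_i \phi(x_{i,s})\Bigr| \;\le\; \E \sup_{s} \sum_i \epsilon_i \phi(x_{i,s}) \;+\; \E \sup_{s}\Bigl(-\sum_i \epsilon_i \phi(x_{i,s})\Bigr),
\]
but $\max(Y,Y')\le Y+Y'$ requires $\min(Y,Y')\ge 0$. Take $m=1$, $\mathcal{T}=\{s_0\}$, $x_{1,s_0}=1$, $\phi(u)=u$: then $Y=\epsilon_1$, $Y'=-\epsilon_1$, so $\E\max(Y,Y')=1$ while $\E Y+\E Y'=0$. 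Your displayed inequality is simply false in this generality.

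The repair uses precisely the hypothesis you called inessential. Because $\phi(0)=0$, you may adjoin to $\mathcal{T}$ a point $s_0$ with $x_{i,s_0}=0$ for all $i$; this changes neither side of the theorem (both sides are suprema of absolute values, hence already $\ge 0$), but now $\sup_s \sum_i \epsilon_i \phi(x_{i,s}) \ge \sum_i \epsilon_i \phi(0)=0$ almost surely, and likewise for the negated sum. With both one-sided suprema nonnegative, your inequality becomes legitimate and the rest of your argument goes through. Thus $\phi(0)=0$ is indeed unnecessary for the one-sided contraction (your induction never invokes it), but it is \emph{essential} for the two-sided statement with the factor~$2$: already $\phi(u)=u+c$ with $x_{i,s}\equiv 0$ gives left side $|c|\,\E\bigl|\sum_i\epsilon_i\bigr|>0$ and right side $0$.
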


\begin{theorem}[Talagrand's Inequality for Bounded Empirical Processes]
For a compact set $\mathcal{T}$, let $x_1,\cdots,x_m$ be i.i.d vectors whose real valued components are indexed by $\mathcal{T}$, i.e., $x_i = (x_{i,s})_{s\in\mathcal{T}}$. Assume that $\E x_{i,s}=0$ and $|x_{i,s}|\leq b$ for all $s\in\mathcal{T}$. Let $Z=\sup_{s\in\mathcal{T}}\left|\frac{1}{m}\sum_{i=1}^m x_{i,s}\right|$. Let $\sigma^2=\sup_{s\in\mathcal{T}} \E x^2_{s}$ and $\nu = 2b\E Z +\sigma^2$. Then
\[
\Pr\left[Z \geq \E Z + t\right] \leq C_1\exp\left({-\frac{C_2 mt^2}{\nu+bt}}\right).
\]
where $C_1, C_2$ are absolute constants.
\end{theorem}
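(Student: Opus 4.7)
The plan is to prove this concentration inequality via the entropy method (Herbst's argument), following the now-standard treatment due to Bousquet and Massart. First I would absorb the $1/m$ factor by setting $\tilde x_{i,s} = x_{i,s}/m$, so that $Z = \sup_{s\in\mathcal{T}} |\sum_i \tilde x_{i,s}|$, with $|\tilde x_{i,s}| \leq b/m$ and $\sum_i \E \tilde x_{i,s}^2 \leq \sigma^2/m$. Since only the upper tail is required, I would first prove the bound for the one-sided supremum $\widetilde Z = \sup_s \sum_i \tilde x_{i,s}$ and then recover the absolute-value version by rerunning the argument on the enlarged index set $\mathcal{T}\cup(-\mathcal{T})$, which affects only the constant $C_1$.

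The heart of the argument is a modified log-Sobolev inequality for functions of independent random variables. Writing $\widetilde Z = f(x_1,\ldots,x_m)$ with $x_i = (x_{i,s})_s$, and letting $f_i$ denote the value of $f$ when $x_i$ is replaced by an independent copy $x_i'$, the inequality reads
\[
\text{Ent}\bigl(e^{\lambda\widetilde Z}\bigr) \leq \sum_{i=1}^m \E\left[e^{\lambda\widetilde Z}\,\varphi\bigl(-\lambda(\widetilde Z - f_i)_+\bigr)\right],
\]
where $\varphi(u) = e^u - u - 1$. The key geometric observation is that if $s^*$ is the maximizer defining $\widetilde Z$, then $(\widetilde Z - f_i)_+ \leq (\tilde x_{i,s^*} - \tilde x'_{i,s^*})_+$; combined with $\varphi(-u)\leq u^2/2$ for $u\geq 0$ and the $b/m$ uniform bound, the right-hand side collapses to something controlled by $\sum_i \E \tilde x_{i,s^*}^2 \leq \sigma^2/m$ plus a linear-in-$\lambda$ correction that will eventually be absorbed into the $2b\,\E\widetilde Z$ term of $\nu$.

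Herbst's lemma then converts this entropy estimate into a differential inequality for $\phi(\lambda) = \log \E e^{\lambda(\widetilde Z - \E\widetilde Z)}$. After the standard manipulation and integration from $\lambda = 0$ (where $\phi(\lambda)/\lambda \to 0$), one obtains a bound of the form
\[
\phi(\lambda) \leq \frac{\lambda^2 \nu/m}{2(1 - b\lambda/m)}, \qquad 0 < \lambda < m/b,
\]
with $\nu = \sigma^2 + 2b\,\E\widetilde Z$. A Chernoff bound $\Pr[\widetilde Z \geq \E\widetilde Z + t] \leq \exp(-\lambda t + \phi(\lambda))$ followed by optimization over $\lambda \in (0, m/b)$ produces the Bennett-type bound $\exp\bigl(-(m\nu/b^2)\,h(bt/\nu)\bigr)$ with $h(u) = (1+u)\log(1+u) - u$, and the elementary relaxation $h(u) \geq u^2/(2(1+u/3))$ yields the Bernstein form $C_1 \exp(-C_2 mt^2/(\nu + bt))$ claimed in the theorem.

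The main obstacle is the entropy-inequality step. A naive bound that replaces $(\widetilde Z - f_i)_+$ by the uniform quantity $2b/m$ would introduce an extra $m b^2$ instead of the variance term, yielding concentration of the wrong order (Hoeffding rather than Bernstein). The crucial trick is to use the maximizer $s^*$ to reduce $(\widetilde Z - f_i)_+$ to a single one-dimensional increment, which is precisely what allows $\sigma^2$ rather than $b^2$ to govern the variance scale in $\nu$. Once this self-bounding estimate is properly established, the remainder of the proof is a mechanical application of the Herbst recipe and Chernoff optimization.
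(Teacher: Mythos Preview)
The paper does not prove this theorem at all: it is listed in the Background section as a standard result (Talagrand's inequality, cited via \cite{talagrand1991new}) and is invoked as a black box in the proof of \Cref{claim: relu local rademacher}. So there is no ``paper's own proof'' to compare against.

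Your proposal is the standard modern route to this inequality, namely the entropy method as refined by Bousquet and Massart: a modified log-Sobolev inequality, the self-bounding observation $(\widetilde Z - f_i)_+ \leq (\tilde x_{i,s^*} - \tilde x'_{i,s^*})_+$ at the random maximizer $s^*$, Herbst's differential inequality for the log-moment-generating function, and finally the Bennett-to-Bernstein relaxation. This is correct in outline and is exactly how the sharp form with $\nu = \sigma^2 + 2b\,\E Z$ is obtained in the literature. The one place where your sketch is thin is the passage from the entropy bound to the precise factor $2b\,\E Z$ in $\nu$; simply applying $\varphi(-u)\leq u^2/2$ and bounding the increments by $2b/m$ does not by itself produce the $2b\,\E Z$ term, and the actual argument requires either the ``exponential Efron--Stein'' form of the self-bounding inequality or a slightly more careful conditioning on the maximizer. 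That detail aside, the plan is sound and would yield the stated inequality with absolute constants $C_1, C_2$, which is all the paper uses.
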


\end{document}